\documentclass{article} 
\usepackage{iclr2026_conference,times}
\pdfoutput=1

\usepackage{amsmath,amsfonts,bm}

\def\eqref#1{equation~\ref{#1}}

\def\1{\bm{1}}

\DeclareMathAlphabet{\mathsfit}{\encodingdefault}{\sfdefault}{m}{sl}
\SetMathAlphabet{\mathsfit}{bold}{\encodingdefault}{\sfdefault}{bx}{n}

\usepackage{hyperref}
\hypersetup{
	colorlinks=true,
        breaklinks=true,
        citecolor={blue!50!black}
}
\usepackage{url}
\usepackage{booktabs}       
\usepackage{colortbl}
\usepackage{array}

\usepackage{graphicx}
 \graphicspath{{figures/}} 
\usepackage{subfigure}
\usepackage{algorithm}
\usepackage{algorithmic}

\usepackage{wrapfig}

\usepackage{nicefrac}   
\usepackage{bbm}
\usepackage{amsmath}
\usepackage{amssymb}
\usepackage{mathtools}
\usepackage{amsthm}

\newtheorem{theorem}{Theorem}[section]

\newtheorem{lemma}[theorem]{Lemma}

\newtheorem{assumption}[theorem]{Assumption}

\usepackage{xcolor} 
 \definecolor{mygreen}{rgb}{0,0.6,0}
 \definecolor{mygray}{rgb}{0.5,0.5,0.5}
 \definecolor{mymauve}{rgb}{0.58,0,0.82}
 \definecolor{backcolour}{rgb}{0.95,0.95,0.92}

\usepackage{listings}

\title{Temporal Difference Learning with Constrained Initial Representations}

\author{
Jiafei Lyu$^1$, Jingwen Yang$^2$, Zhongjian Qiao$^1$, Runze Liu$^1$, Zeyuan Liu$^1$, Deheng Ye$^2$, \\ \textbf{Zongqing Lu}$^3$, \textbf{Xiu Li}$^1$ \\
$^1$Tsinghua Shenzhen International Graduate School, Tsinghua University \\
$^2$Tencent, $^3$School of Computer Science, Peking University \\
\texttt{dmksjfl@gmail.com, li.xiu@sz.tsinghua.edu.cn}
}

\iclrfinalcopy 
\begin{document}

\maketitle

\begin{abstract}
 Recently, there have been numerous attempts to enhance the sample efficiency of off-policy reinforcement learning (RL) agents when interacting with the environment, including architecture improvements and new algorithms. Despite these advances, they overlook the potential of directly constraining the initial representations of the input data, which can intuitively alleviate the distribution shift issue and stabilize training. In this paper, we introduce the Tanh function into the initial layer to fulfill such a constraint. We theoretically unpack the convergence property of the temporal difference learning with the Tanh function under linear function approximation. Motivated by theoretical insights, we present our Constrained Initial Representations framework, tagged CIR, which is made up of three components: (i) the Tanh activation along with normalization methods to stabilize representations; (ii) the skip connection module to provide a linear pathway from the shallow layer to the deep layer; (iii) the convex Q-learning that allows a more flexible value estimate and mitigates potential conservatism. Empirical results show that CIR exhibits strong performance on numerous continuous control tasks, even being competitive or surpassing existing strong baseline methods.
\end{abstract}

\section{Introduction}
\label{sec:intro}

Conventional practice that aims at enhancing the sample efficiency of the reinforcement learning (RL) agent mainly focuses on algorithmic improvement, such as addressing the value overestimation issue \citep{fujimoto2018addressing,kuznetsov2020controlling}, improving the exploration \citep{haarnoja2018soft,ladosz2022exploration,yang2024exploration} or exploitation \citep{chen2021randomized,doro2023sampleefficient} capability of the agent, alleviating the primacy bias \citep{Nikishin2022ThePB}, leveraging model-based approaches \citep{buckman2018sample,fujimoto2023for,fujimoto2025towards} for planning \citep{hansen2022temporal,hansen2024tdmpc} or data augmentation \citep{voelcker2025madtd}, etc. Interestingly, these studies often rely on a small and simple network architecture (vanilla MLP in most cases). It appears that the RL community has long exhibited a preference for algorithmic refinement rather than architectural innovation. Recently, there have been some valuable attempts in scaling up RL algorithms and found that simply scaling up network capacity can degrade performance \citep{Andrychowicz2020WhatMI}. BRO \citep{nauman2024bigger} first successfully scales up network capacity and replay ratios by introducing strong regularization techniques, optimistic exploration, and distributional Q-learning \citep{bellemare2017distributional}. Furthermore, SimBa \citep{lee2025simba} modifies the network architecture by injecting a simplicity bias and acquires strong performance across diverse domains.

Different from these advances, we provide a novel perspective to improve the sample efficiency, i.e., \emph{directly constraining the input representations.} To that end, we incorporate the \texttt{Tanh} function into the initial layer of the network. This enjoys several advantages: (i) the \texttt{Tanh} function is increasing and order-preserving and does not change the sign of the representations; (ii) the \texttt{Tanh} function constrains the element to lie between $-1$ and $1$, ensuring value stabilization; (iii) it is easy to compute its gradient. The \texttt{Tanh} function can also help alleviate the severe distribution shift caused by OOD input samples, as illustrated in Figure \ref{fig:illustration}. To better support our claim, we consider a two-layer MLP network $y = W_2(\sigma(W_1x))$, where we omit the bias term for simplicity, $x\in\mathbb{R}^d$ is the input vector, $W_1\in\mathbb{R}^{h\times d},W_2\in\mathbb{R}^{1\times h}$ are weight matrices, $\sigma(\cdot)$ denotes the activation function, $h$ is the hidden dimension, $d$ is the input dimension. If $\sigma(\cdot) = {\rm{ReLU}}(\cdot)$, then given the input $x_1$ and $x_2$, the output deviation gives $|y_1 - y_2| \le \|W_2\|_F\cdot\|W_1\|_F\cdot \|x_1 - x_2\|_F$, where $\|\cdot\|_F$ is the Frobenius norm. Unfortunately, the above upper bound is \emph{unconstrained} and can be large if $x_2$ deviates far from $x_1$ (this can often happen due to exploration and evolving policy). If $\sigma(\cdot) = \tanh(\cdot)$, we similarly have $|y_1 - y_2| \le \|W_2\|_F\cdot\|\tanh(W_1x_1) - \tanh(W_1x_2)\|_F$. Since $|\tanh(x)|\le 1$, the upper bound here can always be bounded regardless of how $x_2$ differs from $x_1$ (the output will not change drastically). That being said, the negative influence of the potential OOD samples can be mitigated.

\begin{figure}[t]
    \centering
    \includegraphics[width=0.95\linewidth]{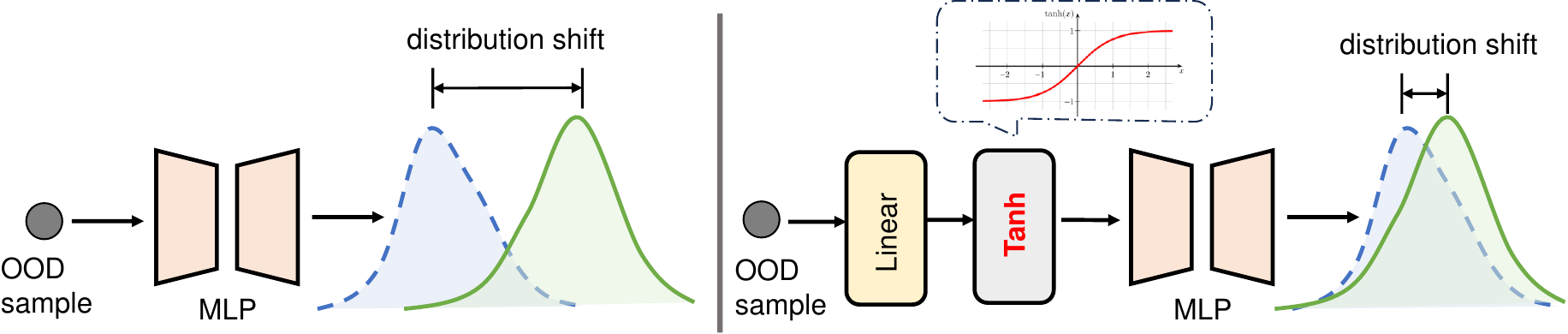}
    \caption{\textbf{Illustration of our motivation.} When encountering the out-of-distribution (OOD) sample that deviates \emph{far} from the distribution of the current policy, the vanilla MLP network may incur severe distribution shift issue (\textbf{left}) while after adding the \texttt{Tanh} activation to the MLP, the negative influence of the OOD sample can be mitigated (\textbf{right}).}
    \label{fig:illustration}
\end{figure}

To further show the benefits of using the \texttt{Tanh} function, we conduct theoretical analysis on temporal difference (TD) learning with the \texttt{Tanh} function under linear function approximation. Our results show that the linear independence between basis functions can still hold after using the \texttt{Tanh} function. Another exciting finding is that incorporating the \texttt{Tanh} function can reduce the variance of the gradient of the least square TD (LSTD) objective function, which validates that the \texttt{Tanh} function can stabilize training. Furthermore, we show that TD(0) can still converge to a fixed point after adding the \texttt{Tanh} function and the global convergence can be guaranteed by adding regularizations. These theoretical results pave the way for applying the proposed idea to the DRL setting.

As a result, we propose the Constrained Initial Representations algorithm, namely CIR. It mainly contains three key components: (i) the representation constraint module that leverages the \texttt{Tanh} function, the average representation normalization and layer normalization \citep{Ba2016LayerN} to stabilize representations and mitigate the possible gradient vanishing issue; (ii) the skip connection module that maintains a direct information pathway from the shallow layer to the deep layer to better pass information; (iii) the convex Q-learning that permits better value estimate and mitigates the risk of conservatism from \texttt{Tanh}. These components are novel and feature distinct ways of achieving high sample efficiency compared to previous methods. We evaluate CIR on DMC suite \citep{tassa2018deepmind}, HumanoidBench \citep{sferrazza2024humanoidbench} and ODRL \citep{lyu2024odrl} tasks. The experimental results show that CIR can match or even outperform recent strong baseline methods on numerous tasks. To facilitate reproducibility, we include our codes in the supplemental materials.

\section{Related Work}

\textbf{Regularization in DRL.} Introducing implicit or explicit regularization terms has been well-explored in other fields \citep{tang2018regularized,xu2015multi,xie2016disturblabel,xin2021art,merity2017regularizing}. In DRL, there are also many studies that add new regularization terms into the critic or the actor network to stabilize training, including regularizing the uncertainty of value estimate \citep{lyu2022efficient,eysenbach2023connection}, penalizing the TD error of the critic \citep{Parisi2018TDregularizedAM,shao2022grac}, involving action gradient regularization \citep{Kostrikov2021OfflineRL,d2020learn,li2022prag}, divergence regularization \citep{Su2021DivergenceRegularizedMA}, mutual information regularization \citep{Leibfried2019MutualInformationRI}, etc. Besides, regularization techniques from supervised learning are also widely adopted in DRL to alleviate the overfitting issue, e.g., normalization tricks \citep{Ba2016LayerN,Gogianu2021SpectralNF,Bjorck2021TowardsDD,bhatt2024crossq,lee2025simba,Lee2025HypersphericalNF,Palenicek2025ScalingOR,Elsayed2024StreamingDR}, dropout \citep{Srivastava2014DropoutAS,Wu2021UncertaintyWA,Hiraoka2021DropoutQF}, weight decay \citep{Farebrother2018GeneralizationAR,nauman2024bigger}, etc. Moreover, some researchers resort to reinitialization approaches to escape from the local optima and boost the sample efficiency of the agent, e.g., resetting the parameters to their initial distributions \citep{Nikishin2022ThePB,qiao2024mind}, selectively reinitializing dormant weights \citep{Sokar2023TheDN}, combining current weights with random weights \citep{Schwarzer2023BiggerBF,xu2024drm}. In contrast, our CIR framework directly regularizes the \emph{initial representations} of the transition sample.

\textbf{Sample-efficient RL algorithms.} A long-standing goal in DRL is to enable the agent to quickly acquire strong continuous control policies with a small budget of data from the environment \citep{yu2018towards,Du2019IsAG}. To fulfill that, some researchers enhance the sample efficiency by improving the exploration ability of the agent \citep{asmuth2012bayesian,still2012information,burda2018exploration,haarnoja2018soft,ladosz2022exploration,yang2024exploration,jiang2025episodic}, reducing value estimation bias \citep{van2016deep,fujimoto2018addressing,kuznetsov2020controlling,moskovitz2021tactical,lyu2022efficient,lyu2023value}, using new experience replay methods \citep{fujimoto2020equivalence,schaul2015prioritized,andrychowicz2017hindsight}, increasing the frequency of data reuse \citep{chen2021randomized,doro2023sampleefficient,lyu2024off,romeo2025speq}, injecting representation learning objectives \citep{de2018integrating,ota2020can,laskin2020curl,fujimoto2023for,yan2024enhancing,fujimoto2025towards}, leveraging model-based approaches \citep{janner2019trust,buckman2018sample,hansen2022temporal,hafner2023mastering,hansen2024tdmpc,voelcker2025madtd}, etc. The above research can be categorized into algorithmic improvement. Another line of studies focuses on architecture improvement, which has been widely studied in computer vision (CV) \citep{He2015DeepRL,Targ2016ResnetIR,Goodfellow2021GenerativeAN,Ho2020DenoisingDP} and natural language processing (NLP) \citep{Vaswani2017AttentionIA,Devlin2019BERTPO,Beck2024xLSTMEL}. BRO \citep{nauman2024bigger} empirically shows that architecture modification can incur strong performance improvement, breaking the curse that simply scaling network capacity can degrade performance \citep{Andrychowicz2020WhatMI}. Moreover, the SimBa architecture \citep{lee2025simba} outperforms BRO by incorporating simplicity bias into the network design. Compared to these studies, our CIR method also modifies the network architecture with a \texttt{Tanh} function to stabilize representations.

\section{Preliminaries}

Reinforcement learning (RL) problems can be formulated by the tuple $(\mathcal{S},\mathcal{A},r,\gamma,P)$, where $\mathcal{S}$ is the state space, $\mathcal{A}$ is the action space, $r(s,a):\mathcal{S}\times\mathcal{A}\rightarrow\mathbb{R}$ is the scalar reward signal, $\gamma\in[0,1]$ is the discount factor, $P(s^\prime|s,a)$ is the transition probability. In online RL, the agent continually interacts with the environment using a policy $\pi_\psi(\cdot|s)$ parameterized by $\psi$. The goal of the agent is to maximize the expected cumulative discounted rewards: $\max J(\psi) = \mathbb{E} [ \sum_{t=0}^\infty \gamma^t r(s_t,a_t)| s_0, a_0; \pi ]$. We have the state-action value function $Q^\pi(s,a):=\mathbb{E}_\pi[\sum_{t=0}^\infty\gamma^t r(s_t,a_t)|s_0=s,a_0=a]$, the value function $V^\pi(s):=\mathbb{E}_\pi[\sum_{t=0}^\infty\gamma^t r(s_t,a_t)|s_0=s]$.

For theoretical analysis, we assume that the MDP has finite state space and action space with size $S$ and $A$, and the Markov chain in this MDP has a stationary distribution $\nu$ and is ergodic. We consider linear function approximation \citep{tsitsiklis1996analysis,bhandari2018finite,jin2020provably,zou2019finite}, where the value function $V^\pi$ is a linear combination of features, $V(s;\theta) = \phi(s)^{\mathsf{T}}\theta$, where $\phi:\mathcal{S}\rightarrow\mathbb{R}^d$ is a known feature map, $\theta$ is the weight vector. We similarly have the state-action value function $Q(s,a;\theta)=\psi(s,a)^{\mathsf{T}}\theta$, $\psi:\mathcal{S}\times\mathcal{A}\rightarrow\mathbb{R}^d$ is the feature map. We focus on $V$ for convenience and consider the TD(0) update rule, which is given by
\begin{equation}
    \label{eq:td0}
    \theta_{t+1} \leftarrow \theta_t + \alpha_t(r_t + \gamma \phi(s_{t+1})^{\mathsf{T}}\theta_t - \phi(s_t)^{\mathsf{T}}\theta_t)\phi(s_t),
\end{equation}
where $\alpha_t$ is the step size at timestep $t$. We then introduce per-element \texttt{Tanh} function to the feature and have $V(s;\theta) = {\rm{tanh}}(W\phi(s))^{\mathsf{T}}\theta$, where $W\in\mathbb{R}^{d\times d}$ is a positive definite diagonal constant matrix. Similarly, we have the TD(0) update rule after introducing $\rm{tanh}$,
\begin{equation}
    \label{eq:td0tanh}
    \theta_{t+1} \leftarrow \theta_t + \alpha_t(r_t + \gamma {\rm{tanh}}(W\phi(s_{t+1}))^{\mathsf{T}}\theta_t - {\rm{tanh}}(W\phi(s_t))^{\mathsf{T}}\theta_t){\rm{tanh}}(W\phi(s_t)).
\end{equation}
Without loss of generality, we assume that the rewards are bounded, $\forall s,a, |r(s,a)|\le r_{\rm max}$. Given a positive definite matrix $D$, we denote $\|x\|_D = \sqrt{x^{\mathsf{T}}Dx}$ as the vector norm induced by $D$. For simplicity, we denote $\|\cdot\| = \|\cdot\|_I$, where $I$ is the identity matrix.

\section{Theoretical Analysis}
\label{sec:theoreticalanalysis}
In this section, we theoretically analyze the benefits of incorporating the \texttt{Tanh} function for representation stabilization and convergence properties in temporal difference learning. We conduct theoretical analysis under linear function approximation since it can be difficult to analyze its theoretical properties when involving neural networks. Due to space limit, all proofs are deferred to Appendix \ref{appsec:proofs}. We first impose the following assumption, which is widely adopted in previous literature \citep{tsitsiklis1996analysis,jin2020provably,zhang2023convergence}.

\begin{assumption}
    The feature matrix $\phi = [\phi_1, \phi_2,\ldots,\phi_S]^{\mathsf{T}}$ has full column rank, i.e., the basis functions $\{\phi_k|k=1,\ldots,S\}$ are linearly independent.
\end{assumption}

The above assumption is critical for convergence analysis. We show in Theorem \ref{theo:linear} that this property still holds for the feature matrix after applying the \texttt{Tanh} function by properly choosing $W$.

\begin{theorem}[Linear Independence]
\label{theo:linear}
    Suppose that $W=c\cdot I$, where $c\in\mathbb{R}$ is a sufficient small positive number, $I$ is the identity matrix, then if basis functions $\{\phi_1,\ldots,\phi_S\}$ are linearly independent, then the transformed basis functions $\{{\rm{tanh}}(W\phi_1),\ldots,{\rm{tanh}}(W\phi_S)\}$ are also linearly independent.
\end{theorem}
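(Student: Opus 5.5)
We interpret the statement the way it is used in the analysis: the feature matrix $\phi\in\mathbb{R}^{S\times d}$ has full column rank $d$, and we must show that the transformed matrix $\Phi_c := \tanh(c\,\phi)$ (entrywise, since $W=cI$) also has full column rank for all sufficiently small $c>0$. The approach is a perturbation argument. Near the origin, $\tanh$ is the identity to first order, so after rescaling by $1/c$ the transformed features differ from the original ones by a small perturbation. Full column rank is an open condition, so it survives small perturbations.

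\textbf{Step 1 (rescaling).} Since $c>0$, $\Phi_c$ and $\frac{1}{c}\Phi_c$ have the same rank. Write
\[
\frac{1}{c}\tanh(c\,\phi) = \phi + E_c ,\qquad (E_c)_{ij} = \frac{\tanh(c\,\phi_{ij}) - c\,\phi_{ij}}{c}.
\]

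\textbf{Step 2 (the perturbation vanishes).} Using $|\tanh(x)-x|\le |x|^3/3$, which follows from Taylor's theorem with $\tanh'''$ bounded by $2$ in absolute value, we get $|(E_c)_{ij}|\le c^2|\phi_{ij}|^3/3$. Hence $\|E_c\|_F\le c^2 M$ with $M=\frac13\big(\sum_{i,j}|\phi_{ij}|^6\big)^{1/2}$, and $\|E_c\|_F\to 0$ as $c\to 0$. The state space is finite, so $M<\infty$.

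\textbf{Step 3 (stability of full column rank).} Let $\sigma_{\min}>0$ be the smallest singular value of $\phi$; it is positive by the full column rank assumption. By Weyl's inequality for singular values, $\sigma_{\min}(\phi+E_c)\ge \sigma_{\min}-\|E_c\|_2\ge \sigma_{\min}-c^2M$. Choosing $c<\sqrt{\sigma_{\min}/M}$ makes this strictly positive. Then $\phi+E_c$, and hence $\Phi_c$, has full column rank.

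An equivalent route replaces Step 3 with a determinant argument. Pick a nonsingular $d\times d$ submatrix of $\phi$. The map $c\mapsto\det\big(\frac1c\tanh(c\,\cdot)\big)$ restricted to that submatrix is continuous and tends to the nonzero determinant as $c\to0$, so it is nonzero for small $c$. I would mention this as a remark.

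The only genuinely delicate point is interpretive rather than technical. Linear independence of the $S$ rows $\phi_k\in\mathbb{R}^d$ is impossible when $S>d$, so the statement must be read as column independence, i.e.\ the assumption actually stated, and I will say so explicitly. The quantitative content is also worth stating: any $c<\sqrt{\sigma_{\min}(\phi)/M}$ suffices, which makes ``sufficiently small'' concrete.
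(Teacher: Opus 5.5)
Your proof is correct. It starts from the same idea as the paper's: linearize $\tanh$ at the origin, divide by $c$, and treat $\frac{1}{c}\tanh(c\phi)-\phi$ as an $\mathcal{O}(c^2)$ perturbation. The decisive step is different, though.

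The paper argues by contradiction. It takes a nonzero $\mathbf{a}$ with $\sum_i a_i\tanh(c\phi_i)=0$, rewrites this as $\sum_i a_i\phi_i+\mathcal{O}(c^2)=0$, and lets $c\to 0$. As written, that limit is not justified. The dependency vector $\mathbf{a}$ may change with $c$, and the remainder is really $\mathcal{O}(c^2\|\mathbf{a}\|)$. To close the argument one must either normalize $\|\mathbf{a}\|=1$ and pass to a convergent subsequence along $c_n\to 0$ (compactness of the unit sphere), or use a bound $\|\sum_i a_i\phi_i\|\ge\sigma_{\min}(\phi)\|\mathbf{a}\|$ that is uniform in $\mathbf{a}$.

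Your Weyl-inequality step is exactly that uniform bound. So your route is a rigorous, quantitative version of the paper's sketch: it gives the explicit threshold $c<\sqrt{\sigma_{\min}(\phi)/M}$, and your constant $|\tanh'''|\le 2$ is correct. The paper's version buys only brevity. Your determinant remark is an equally valid and shorter qualitative alternative.

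One correction to your interpretive paragraph. The literal statement is not ill-posed; its hypothesis simply forces $S\le d$, and the paper's proof does work with the $S$ row vectors $\phi_i\in\mathbb{R}^d$, not with columns. You do not need to pick a reading, though. If $\sigma_{\min}$ denotes the smallest of the $\min(S,d)$ singular values, the same Weyl argument shows that $\tanh(c\phi)$ keeps the full rank of $\phi$ for small $c$. That settles the row and the column interpretations at once.
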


Furthermore, we show in the following theorem that the constrained features can incur a smaller gradient variance compared to its unconstrained counterpart.


\begin{theorem}[Variance Reduction]
\label{theo:variance}
   Given any $s$, suppose the feature $\phi(s)$ follows a distribution with $\mathbb{E}[\phi(s)]=0$. Consider per-transition least square TD (LSTD) objective $\mathcal{L}=\frac{1}{2}[(r + \gamma V(s^\prime;\theta) - V(s;\theta))^2]$, then if $V(s;\theta) = \phi(s)^{\mathsf{T}}\theta$, the variance of the semi-gradient term satisfies:
   \begin{align}
       \|{\rm{Var}}(\nabla_\theta \mathcal{L})\| \le \left( r_{\rm max} + (1+\gamma)\|\theta\|\Lambda\right)^2\cdot\|\phi(s)\|^2,
   \end{align}
   where $\Lambda = \max\{\|\phi(s)\|, \|\phi(s^\prime)\|\}$. While if $V(s;\theta) = {\rm{tanh}}(W\phi(s))^{\mathsf{T}}\theta$, we have
   \begin{align}
       \|{\rm{Var}}(\nabla_\theta \mathcal{L})\| \le (r_{{\rm{max}}} + (1+\gamma)\|\theta\|\lambda_W{\Lambda})^2\cdot\|{\rm{tanh}}(W\phi(s))\|^2,
   \end{align}
   where $\lambda_W$ is the maximum eigenvalue of $W$.
\end{theorem}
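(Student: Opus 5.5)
The plan is to bound the variance by the second moment and then bound the norm of the semi-gradient pointwise. Write the semi-gradient as $g = -\delta\,\varphi(s)$. Here $\delta = r + \gamma V(s';\theta) - V(s;\theta)$ is the TD error, and $\varphi(s)$ is the feature appearing in $V$: either $\phi(s)$, or $\tanh(W\phi(s))$ in the constrained case. The next state is treated as fixed through the semi-gradient convention, so only $V(s;\theta)$ is differentiated. I use the elementary fact ${\rm Var}(g) = \mathbb{E}[gg^{\mathsf T}] - \mathbb{E}[g]\mathbb{E}[g]^{\mathsf T} \preceq \mathbb{E}[gg^{\mathsf T}]$. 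Both sides are positive semidefinite, so in operator norm $\|{\rm Var}(g)\| \le \|\mathbb{E}[gg^{\mathsf T}]\| \le \mathbb{E}\|g\|^2$. The assumption $\mathbb{E}[\phi(s)]=0$ is what motivates dropping the mean term. Since the statement is phrased in terms of $\|\phi(s)\|$ and $\Lambda$ without an expectation, I will bound $\|g\|^2 = \delta^2\|\varphi(s)\|^2$ pointwise and read the final bound for the realized features. Equivalently, I take the supremum over the support, which is how $\Lambda$ is defined.

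For the unconstrained case, I bound $|\delta| \le |r| + \gamma|\phi(s')^{\mathsf T}\theta| + |\phi(s)^{\mathsf T}\theta|$. Cauchy--Schwarz gives $|\phi(x)^{\mathsf T}\theta| \le \|\phi(x)\|\|\theta\| \le \Lambda\|\theta\|$. Together with $|r|\le r_{\max}$, this yields $|\delta| \le r_{\max} + (1+\gamma)\|\theta\|\Lambda$. Squaring and multiplying by $\|\phi(s)\|^2$ gives the first claim.

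For the $\tanh$ case, I need $\|\tanh(W\phi(x))\| \le \lambda_W\|\phi(x)\|$. This follows componentwise from $|\tanh(u)|\le|u|$, so $\|\tanh(W\phi(x))\| \le \|W\phi(x)\|$. Since $W$ is positive definite diagonal, its operator norm is its largest eigenvalue, so $\|W\phi(x)\| \le \lambda_W\|\phi(x)\|$. Cauchy--Schwarz then gives $|\tanh(W\phi(x))^{\mathsf T}\theta| \le \lambda_W\Lambda\|\theta\|$, and hence $|\delta| \le r_{\max} + (1+\gamma)\|\theta\|\lambda_W\Lambda$. Multiplying the square by $\|\tanh(W\phi(s))\|^2$ gives the second bound. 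I would add a remark that $\|\tanh(W\phi(s))\| \le \min\{\sqrt d, \lambda_W\|\phi(s)\|\}$, which is where the reduction relative to the first bound comes from when $\lambda_W\le 1$.

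I expect the only delicate step to be the first one: the passage from ${\rm Var}$ to a pointwise quantity, and the interpretation of the matrix norm. I will fix $\|\cdot\|$ on matrices as the spectral norm and use the PSD ordering argument above, and I will state explicitly that the bound holds uniformly over realizations. If instead one wants a bound in expectation, $\|\phi(s)\|^2$ should be replaced by $\mathbb{E}\|\phi(s)\|^2$; the same argument goes through unchanged.
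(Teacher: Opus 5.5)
Your proposal is correct and takes essentially the paper's route: bound the variance by the second moment, bound the TD error pointwise via the triangle inequality, Cauchy--Schwarz, $|\tanh u|\le|u|$ and $\|W\|=\lambda_W$, then multiply by the squared feature norm. The one difference is in the linear case, where the paper uses $\mathbb{E}[\phi(s)]=0$ to make the mean vanish by pulling the TD error out of the expectation as if it were constant; you instead apply ${\rm Var}(g)\preceq\mathbb{E}[gg^{\mathsf T}]$ in both cases, which avoids that questionable step.
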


\textbf{Remark.} By properly choosing the constant matrix $W$, the upper bound of the variance term with the \texttt{Tanh} function can be smaller than the upper bound of the vanilla variance term. For example, by setting $W=I$ (identity matrix), ${\rm{tanh}}(W\phi(s)) = {\rm{tanh}}(\phi(s)), \lambda_W=1$, then we have
\begin{equation*}
    (r_{{\rm{max}}} + (1+\gamma)\|\theta\|\lambda_W{\Lambda})^2\cdot\|{\rm{tanh}}(\phi(s))\|^2 \le \left( r_{\rm max} + (1+\gamma)\|\theta\|\Lambda\right)^2\cdot\|\phi(s)\|^2,
\end{equation*}
where we use the fact that $|x|\ge|{\rm{tanh}}(x)|,\forall x$. That is, if one chooses $W=c\cdot I,c\in\mathbb{R},c\in(0,1)$, the upper bound after using \texttt{Tanh} is always tighter. It indicates that \texttt{Tanh} \emph{can stabilize the training process and incur smaller gradient variance}. Note that we do not make any specific assumptions on $\|\phi\|$ since the \texttt{Tanh} function enforces an inherent constraint on the features. Since $|{\rm{tanh}}(x)| \le 1,\forall x$, we have $\|{\rm{tanh}}(W\phi(s))\|\le\sqrt{d}$, then the variance upper bound can be simplified,
\begin{align}
   \|{\rm{Var}}(\nabla_\theta \mathcal{L})\| \le (r_{{\rm{max}}} + (1+\gamma)\|\theta\|\sqrt{d})^2\cdot d.
\end{align}

We then formally show that TD(0) with linear function approximation and \texttt{Tanh} feature transformation is guaranteed to converge to a fixed point.

\begin{theorem}[Convergence under TD(0)]
\label{theo:convergence}
    Consider the TD(0) update rule for the value function \( V(s;\theta) = \tanh(W \phi(s))^{\mathsf{T}}\theta\). Suppose we project the parameters \( \theta \) onto bounded sets \( \Theta = \{ \theta \mid \|\theta\|_2 \leq C_\theta \} \) after each update. If the step sizes satisfy \( \sum \alpha_t = \infty \), \( \sum \alpha_t^2 < \infty \), then the projected TD(0) algorithm converges almost surely to a fixed point \( \theta^* \).
\end{theorem}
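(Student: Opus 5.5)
The plan is to treat the projected TD(0) iteration with transformed features $\tilde\phi(s) := \tanh(W\phi(s))$ as a projected stochastic approximation scheme and to invoke the ODE method (Kushner--Yin / Borkar) for projected iterates. Write the update as $\theta_{t+1} = \Pi_\Theta\bigl(\theta_t + \alpha_t g(\theta_t, \xi_t)\bigr)$, where $\xi_t = (s_t, r_t, s_{t+1})$ and
\begin{equation*}
g(\theta,\xi) = \bigl(r + \gamma\tilde\phi(s')^{\mathsf{T}}\theta - \tilde\phi(s)^{\mathsf{T}}\theta\bigr)\tilde\phi(s).
\end{equation*}
The mean field under the stationary distribution $\nu$ is $\bar g(\theta) = b - A\theta$, with $A = \tilde\Phi^{\mathsf{T}} D(I-\gamma P^\pi)\tilde\Phi$, $b = \tilde\Phi^{\mathsf{T}} D R$, $D = \mathrm{diag}(\nu)$, and $\tilde\Phi$ the matrix whose rows are $\tilde\phi(s)^{\mathsf{T}}$. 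This is exactly the linear TD system of \citet{tsitsiklis1996analysis}, with features $\tilde\phi$ in place of $\phi$. Because $W$ is constant, the nonlinearity only changes the features; the algorithm stays linear in $\theta$.

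\textbf{Step 1 (regularity of the noise).} Since $\|\tilde\phi(s)\|\le\sqrt d$ for every $s$, $|r|\le r_{\max}$, and $\|\theta\|\le C_\theta$ on $\Theta$, we get $\|g(\theta,\xi)\|\le \sqrt d\,(r_{\max}+(1+\gamma)\sqrt d\, C_\theta)$ uniformly. This mirrors the remark after Theorem~\ref{theo:variance}. Moreover, $g$ is Lipschitz in $\theta$ with constant $(1+\gamma)d$. Decompose $g(\theta_t,\xi_t) = \bar g(\theta_t) + M_{t+1} + \epsilon_t$, where $M_{t+1}$ is a martingale difference and $\epsilon_t$ is the Markov-noise bias. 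The finite state space and ergodicity give a bounded solution to the Poisson equation, so $\sum_t \alpha_t \epsilon_t$ converges. Together with $\sum\alpha_t^2<\infty$ and the bounded $M_{t+1}$, the martingale $\sum_t \alpha_t M_{t+1}$ converges almost surely.

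\textbf{Step 2 (stability of the ODE).} Show that $A$ is positive definite, meaning $x^{\mathsf{T}}Ax>0$ for $x\neq 0$. The standard argument applies: $\|P^\pi v\|_D\le\|v\|_D$ by stationarity of $\nu$, so
\begin{equation*}
x^{\mathsf{T}}\tilde\Phi^{\mathsf{T}}D(I-\gamma P)\tilde\Phi x \ge (1-\gamma)\|\tilde\Phi x\|_D^2 .
\end{equation*}
This is strictly positive provided $\tilde\Phi$ has full column rank and $\nu>0$, which holds by ergodicity. Full column rank is supplied by Theorem~\ref{theo:linear}, under its choice $W=cI$ with $c$ small. (For a general diagonal $W$ one assumes it.) Consequently $\theta^\dagger = A^{-1}b$ is the unique zero of $\bar g$, and $V(\theta) = \|\theta-\theta^\dagger\|^2$ is a strict Lyapunov function for $\dot\theta = b - A\theta$.

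\textbf{Step 3 (projected ODE and conclusion).} The projected ODE is $\dot\theta = \bar g(\theta) + z(\theta)$, where $z$ lies in the normal cone of the ball $\Theta$. Since $\Theta$ is convex and $V$ is convex, we have $\langle\nabla V, z\rangle\le 0$. Hence $V$ still decreases, and the unique globally asymptotically stable point is $\theta^* = \Pi_\Theta$-fixed point of the projected dynamics. That is, $\theta^*$ is the point satisfying $\langle b-A\theta^*, \theta-\theta^*\rangle\le 0$ for all $\theta\in\Theta$, and it equals $\theta^\dagger$ whenever $C_\theta\ge\|\theta^\dagger\|$. The Kushner--Yin theorem for projected stochastic approximation with Markov noise then gives $\theta_t\to\theta^*$ almost surely.

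\textbf{Main obstacle.} I expect the main obstacle to be Step 2 in full generality, because positive definiteness of $A$ rests on $\tilde\Phi$ keeping full rank after the $\tanh$. That is why I lean on Theorem~\ref{theo:linear}. If only a general $W$ is allowed, I would fall back to the weaker conclusion that the iterates converge to the set of projected stationary points. A second, more technical point is handling the Markovian (non-i.i.d.) noise $\epsilon_t$ rigorously. I would treat it via the Poisson-equation decomposition rather than assuming i.i.d. sampling.
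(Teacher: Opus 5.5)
Your argument is sound in outline and uses the same reduction as the paper: with $W$ fixed, $\tanh$ only changes the features, so the recursion is linear TD(0) with bounded features $\tilde\phi$. The paper stops there. It cites Theorem~1 of Tsitsiklis and Van Roy, noting only $\|\tilde\phi(s)\|\le\sqrt d$. You instead re-run the argument as projected stochastic approximation: the ODE method, a Poisson equation for the Markov noise, and a normal-cone term for the projection. The citation buys brevity and the standard description of the limit as the solution of $\tilde\Phi\theta=\Pi_\nu\mathcal{T}\tilde\Phi\theta$, i.e.\ $A\theta=b$. Your route is longer but closes three points the citation leaves open. First, Tsitsiklis--Van Roy analyze the unprojected recursion, so their theorem does not directly cover the projected iterates in the statement. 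Second, they require $\tilde\Phi$ to have full column rank. The paper's proof does not verify this (Theorem~\ref{theo:linear} only addresses $W=cI$ with small $c$), and it can genuinely fail for a general positive definite diagonal $W$. Take $W=I$, $S=d=2$, and $\Phi$ with rows $(1,b)$ and $(a,1)$, where $\tanh a\,\tanh b=\tanh^2 1$ but $ab\neq 1$; for example $\tanh a=0.9$ gives $ab\approx 1.13$. Third, the limit is $A^{-1}b$ only when $C_\theta\ge\|A^{-1}b\|$; otherwise it is the variational-inequality point you describe.

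Three things to tighten.

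\emph{Step~3.} The inequality $\langle\nabla V,z\rangle\le 0$ for $V=\|\theta-\theta^\dagger\|^2$ is the normal-cone inequality tested at $\theta^\dagger$. It therefore needs $\theta^\dagger\in\Theta$, not just convexity. Since TD's $A$ is not symmetric, this $V$ can actually increase along the projected flow when $C_\theta<\|\theta^\dagger\|$. Center the Lyapunov function at the variational-inequality solution instead: for $U(\theta)=\|\theta-\theta^*\|^2$,
\[
\tfrac12\dot U=\langle\theta-\theta^*,b-A\theta^*\rangle-(\theta-\theta^*)^{\mathsf{T}}A(\theta-\theta^*)+\langle\theta-\theta^*,z\rangle\le-(\theta-\theta^*)^{\mathsf{T}}A(\theta-\theta^*).
\]
The first term is nonpositive by the variational inequality. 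The last is nonpositive because $z\in-N_\Theta(\theta)$ and $\theta^*\in\Theta$. Existence and uniqueness of $\theta^*$ follow from strong monotonicity of $\theta\mapsto A\theta-b$ on the compact convex set $\Theta$.

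\emph{Rank.} Read literally, Theorem~\ref{theo:linear} concerns the $S$ vectors $\tanh(W\phi_i)\in\mathbb{R}^d$, which are the rows of $\tilde\Phi$. You need the $d$ columns of $\tilde\Phi$ to be independent. The small-$c$ idea gives this directly: $\tanh(c\Phi)/c\to\Phi$ entrywise, and a nonzero $d\times d$ minor of $\Phi$ stays nonzero for small $c$.

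\emph{Step sizes.} Summation by parts in your Poisson-equation step leaves a term of the form $\sum_t(\alpha_{t+1}-\alpha_t)(\cdot)$. You therefore need nonincreasing (or bounded-variation) step sizes. Tsitsiklis--Van Roy assume this, and the paper inherits it silently.
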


However, the above theorem can only ensure that the weight converges to a fixed point. It remains unclear whether it can converge to a global optimum. We then introduce the regularized optimization objective and show that $\theta$ converges to a unique global minima $\theta^*$.

\begin{theorem}[Global Convergence under Regularization]
\label{theo:global}
    Given the regularized loss $\mathcal{L}(\theta)= \mathbb{E}\left[ \left( r + \gamma\tanh(W\phi(s^\prime))^{\mathsf{T}}\theta - \tanh(W \phi(s))^{\mathsf{T}}\theta \right)^2 \right] + \lambda_\theta \|\theta\|_2^2$, assume that (a) the step size $\alpha$ is a constant; (b) the gradient of the loss \( \mathcal{L}(\theta) \) is \( \beta \)-smooth, i.e., $\|\nabla\mathcal{L}(\theta_1) - \nabla\mathcal{L}(\theta_2)\|\le \beta\|\theta_1 - \theta_2\|$; (c) the regularization parameter satisfies $0<\lambda_\theta <\frac{\beta}{2}$. Then, gradient descent with step size \( \alpha \le 1/\beta \) converges linearly to a unique global minimum \( \theta^* \).
\end{theorem}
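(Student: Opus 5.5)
We argue that $\mathcal{L}$ is a strongly convex quadratic in $\theta$ and then apply the standard linear-convergence result for gradient descent on smooth, strongly convex functions. Write $\tilde\phi(s)=\tanh(W\phi(s))$ and $\delta(s,s')=\gamma\tilde\phi(s')-\tilde\phi(s)$. The loss can be written as
\begin{equation*}
\mathcal{L}(\theta)=\mathbb{E}[r^2]+2\,\mathbb{E}[r\,\delta]^{\mathsf{T}}\theta+\theta^{\mathsf{T}}\bigl(\mathbb{E}[\delta\delta^{\mathsf{T}}]+\lambda_\theta I\bigr)\theta .
\end{equation*}
Since $\tanh$ is bounded, $\|\tilde\phi\|\le\sqrt{d}$, so every expectation here is finite. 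The Hessian is the constant matrix $H=2(\mathbb{E}[\delta\delta^{\mathsf{T}}]+\lambda_\theta I)$. Because $\mathbb{E}[\delta\delta^{\mathsf{T}}]\succeq 0$, we get $H\succeq 2\lambda_\theta I$. Hence $\mathcal{L}$ is $\mu$-strongly convex with $\mu=2\lambda_\theta>0$, regardless of any rank property of the features (Theorem~\ref{theo:linear} is not even needed). Strong convexity gives existence and uniqueness of the global minimizer $\theta^*=-(\mathbb{E}[\delta\delta^{\mathsf{T}}]+\lambda_\theta I)^{-1}\mathbb{E}[r\delta]$.

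Assumption (b) says $H\preceq\beta I$, so $\mu\le\beta$. Assumption (c) guarantees in particular that $\mu=2\lambda_\theta<\beta$, so the contraction factor computed below is strictly positive and nondegenerate. Because $\mathcal{L}$ is quadratic, gradient descent $\theta_{t+1}=\theta_t-\alpha\nabla\mathcal{L}(\theta_t)$ satisfies
\begin{equation*}
\theta_{t+1}-\theta^*=(I-\alpha H)(\theta_t-\theta^*).
\end{equation*}
With $0<\alpha\le 1/\beta$, the eigenvalues of $I-\alpha H$ lie in $[1-\alpha\beta,\,1-\alpha\mu]\subset[0,\,1-2\alpha\lambda_\theta]$. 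This gives
\begin{equation*}
\|\theta_t-\theta^*\|\le(1-2\alpha\lambda_\theta)^t\|\theta_0-\theta^*\|.
\end{equation*}
For the function values, the general smooth, strongly convex argument applies. The descent lemma gives $\mathcal{L}(\theta_{t+1})\le\mathcal{L}(\theta_t)-\frac{\alpha}{2}\|\nabla\mathcal{L}(\theta_t)\|^2$. The Polyak--{\L}ojasiewicz inequality gives $\|\nabla\mathcal{L}\|^2\ge 2\mu(\mathcal{L}-\mathcal{L}^*)$. Together these yield $\mathcal{L}(\theta_t)-\mathcal{L}^*\le(1-\alpha\mu)^t(\mathcal{L}(\theta_0)-\mathcal{L}^*)$.

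The only step needing care is the strong-convexity claim. One must check that the objective really is treated as the full gradient of this loss, not as the TD semi-gradient. The semi-gradient, used in Theorem~\ref{theo:convergence}, gives an asymmetric matrix $\mathbb{E}[\tilde\phi(s)(\tilde\phi(s)-\gamma\tilde\phi(s'))^{\mathsf{T}}]$, whose positive definiteness would require the $\nu$-weighted contraction argument. Since the statement says ``gradient descent'' on $\mathcal{L}$, I will use the true gradient. Then positive semidefiniteness of $\mathbb{E}[\delta\delta^{\mathsf{T}}]$ is immediate, and the regularizer alone supplies the curvature. This makes the remaining argument routine.
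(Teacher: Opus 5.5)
Your proof is correct and follows essentially the same route as the paper: both write the Bellman-error term as a positive semidefinite quadratic in $\theta$, obtain $2\lambda_\theta$-strong convexity and a unique minimizer from the regularizer, and combine the descent lemma with the Polyak--{\L}ojasiewicz (PL) inequality to get the contraction factor $1-2\alpha\lambda_\theta$. Your additional iterate bound via $I-\alpha H$ goes slightly beyond the paper. Your linear term $+2\,\mathbb{E}[r\delta]^{\mathsf{T}}\theta$, and hence $\theta^*=-(\mathbb{E}[\delta\delta^{\mathsf{T}}]+\lambda_\theta I)^{-1}\mathbb{E}[r\delta]$, has the correct sign; the paper's expansion with $-2\theta^{\mathsf{T}}b$ flips it, though this does not affect the convergence argument.
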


\textbf{In summary}, the above theoretical results generally describe the variance reduction capability and the convergence results after applying the \texttt{Tanh} function on features, which sheds light on extending this simple idea into the DRL scenario.

\section{Algorithm}
\label{sec:algo}

Motivated by the theoretical results, we formally present our CIR framework, which mainly comprises three parts: (i) a \texttt{Tanh} function and two normalization techniques to stabilize representations; (ii) the skip connection module that combines information from shallow layers and deep layers; (iii) a convex Q-learning approach to further enhance the performance of the agent.

\subsection{Constraining Initial Representation with the \texttt{Tanh} Activation}

Our focus is on state-based continuous control tasks, where plain MLP networks are commonly used. However, it can be fragile to samples that deviate far from the current distribution (as analyzed in Section \ref{sec:intro}). To stabilize training, we add a \texttt{Tanh} function to the initial layer of the network. We do not place the \texttt{Tanh} function in the middle layer or deep layers because (i) the latter layers can be naturally regularized if the initial representations are constrained, since the magnitude of the input representations are controlled, and (ii) it may be challenging for the gradient to backpropagate and the gradient may vanish before the initial layers can be well-updated.

Nevertheless, naively utilizing the \texttt{Tanh} function can incur gradient vanishing issue \citep{wang2019reltanh,goodfellow2016deep}, e.g., given the input vector ${\bf{o}}_t \in\mathbb{R}^{d}$ at timestep $t$, if the initial representations ${\bf{x}}_t = {\rm{Linear}}({\bf{o}}_t)\in\mathbb{R}^{h}$ have large values (i.e., $\tanh({{\bf{x}}_t})\approx [1,1,\ldots,1]$), its gradient after applying the \texttt{Tanh} function can be small, $\nabla\tanh(\bf{x}_t) \approx 0$ (since $\nabla\tanh(x) = 1-(\tanh(x))^2$). To mitigate this concern, we need to manually compress the representations to ensure that they do not lie in the gradient vanishing region of \texttt{Tanh}. A natural idea is to weight the initial representations with a small constant $c$ (this can be seen as setting the weight matrix $W = c\cdot I$ in our theoretical analysis). However, this approach lacks flexibility because different environments possess different state space and action space dimensions, dynamics transition probability, and reward scales. One may need to grid search the best $c$ per task, which can be labor-intensive and expensive. Instead, we introduce the average representation normalization trick (AvgRNorm) to adaptively control the representations from different tasks without tuning the parameter $c$,
\begin{equation}
    \label{eq:avgrnorm}
    {\rm{AvgRNorm}}({\bf{x}}) = c \times \dfrac{\bf{x}}{\rm{Mean}({\bf{\tilde{x}}})}, \quad {\bf{\tilde{x}}} = |{\bf{x}}|,
\end{equation}
where $\rm{Mean}(\cdot)$ is the mean operator, $c$ is a small positive number that controls the scale of the normalized representations (as required in Theorem \ref{theo:linear}). We use the absolute values of representations to avoid ${\rm{Mean}} ({\bf{\tilde{x}}})\approx 0$. AvgRNorm can effectively bound representations and decrease the gradient contribution of those large values in representations. We adopt $c=0.1$ by default.

Furthermore, based on Theorem \ref{theo:global}, additional regularization terms are needed to acquire global convergence. Since regularizing the parameters is inherently equivalent to regularizing the representations, we further regularize representations for better stability by involving layer normalization, which is proven effective in prior works \citep{lee2025simba,nauman2024bigger}. Finally, the constrained initial representation is given by,
\begin{equation}
    \label{eq:cir}
    {\bf{z}}_t = \tanh({\rm{AvgRNorm}}({\rm{LayerNorm}}({\rm{Linear}}({\bf{o}}_t)))).
\end{equation}

\subsection{Boosting Gradient Flow with Skip Connections}
\label{sec:skipconnection}
\begin{figure}[h!]
    \centering
    \includegraphics[width=0.8\linewidth]{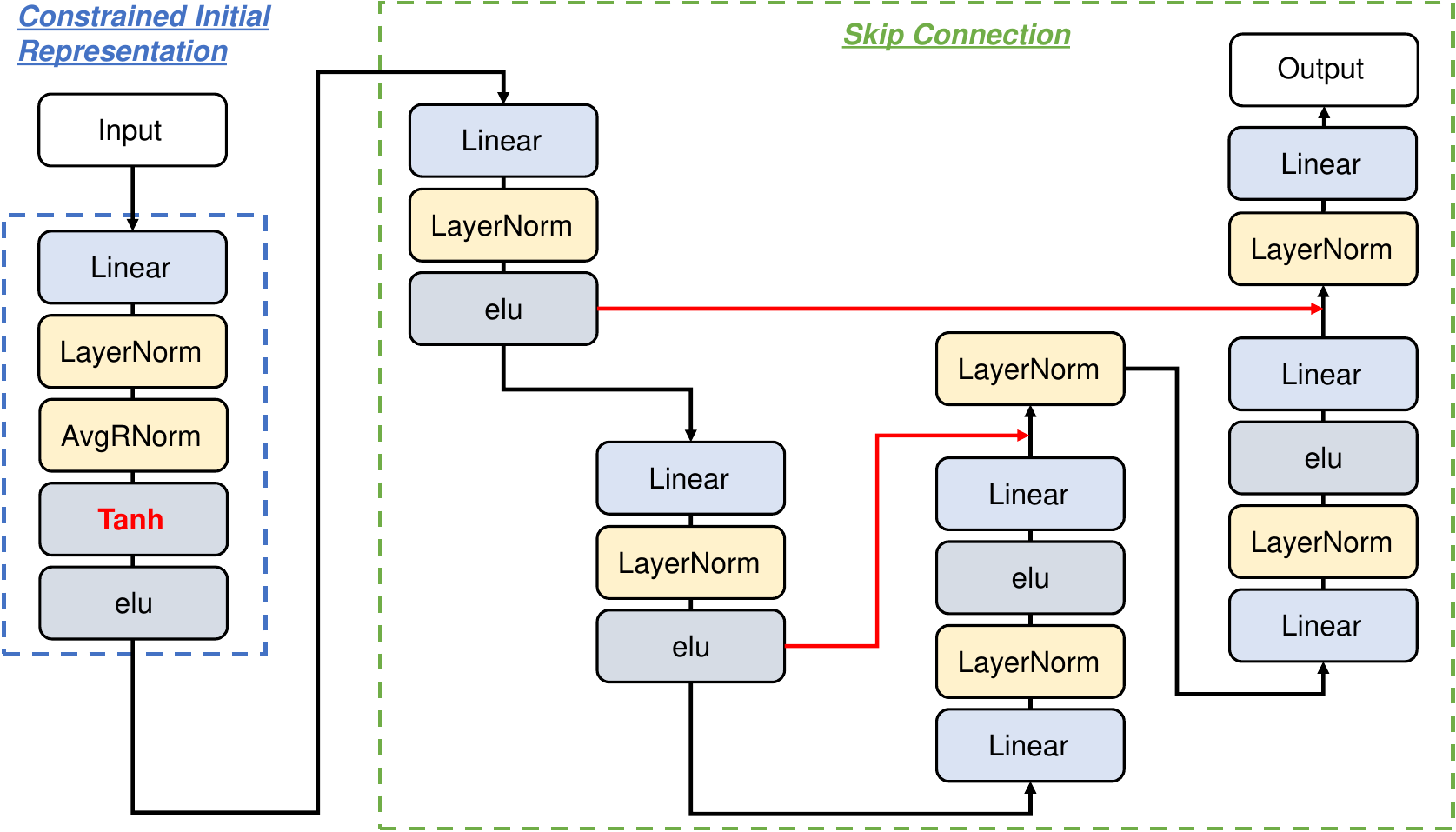}
    \caption{\textbf{Architecture overview of CIR.} CIR adopts (a) the AvgRNorm module and the \texttt{Tanh} activation to stabilize initial representations; (b) skip connection modules to facilitate gradient flow.}
    \label{fig:netarchitecture}
\end{figure}

After the initial representations are constrained, we feed forward them to downstream layers. Built upon previous findings that scaling aids sample efficiency \citep{nauman2024bigger,lee2025simba,Lee2025HypersphericalNF}, we also scale up the critic networks by expanding the depth and width of the critic networks. To better facilitate gradient propagation, we utilize the skip connection technique \citep{ronneberger2015u,milletari2016v}, which is less explored in DRL. That indicates that we adopt a U-shape network, which contains a down-sampling path to capture features, and a symmetric up-sampling path to aggregate features from shallow layers and deep layers. Different from prior works, we do not expand the channel of the features during the process of either down-sampling or up-sampling, and we fuse shallow and deep abstract representations through linear addition rather than concatenation.

We still adopt layer normalization for stabilizing representations in the latter layers (which can be seen as the regularization term in Theorem \ref{theo:global}). Formally, for the down-sampling layers $l\in\{1,\ldots,L\}$, with the number of layers $L$, its representations at timestep $t$ is given by,
\begin{equation}
    \label{eq:downsampling}
    {\bf{x}}^{l}_t = {\rm{LayerNorm}}({\rm{Linear}}({\bf{x}}_t^{l-1})),
\end{equation}
where ${\bf{x}}_t^0 = {\bf{z}}_{t}$ is the initial constrained representation calculated via Equation \ref{eq:cir}. For a symmetric up-sampling layer $l\in\{1,\ldots,L\}$, its representation is calculated by linear addition, i.e.,
\begin{equation}
    \label{eq:upsampling}
    {\bf{x}}^{2L-l+1}_t = {\bf{x}}_t^{l} +  {\rm{Linear}}({\rm{LayerNorm}}({\rm{Linear}}({\bf{x}}_t^{2L-l}))).
\end{equation}
This resembles residual learning \citep{He2015DeepRL}. Note that we omit the activation function terms in Equation \ref{eq:downsampling} and Equation \ref{eq:upsampling} for clarity (we use the \texttt{elu} activation function). There are two reasons for adopting the skip connection technique. First, the U-shape network with skip connections enables effective multi-scale feature extraction, facilitating both global feature understanding and local information preservation. Second, it combines shallow low-level information with deep abstract representations, which enables model scaling and eases gradient vanishing. It also enhances the representational capacity of the model. Meanwhile, we do not involve modules like convolution layers or pooling. The detailed network architecture is shown in Figure \ref{fig:netarchitecture}. We use 2 down-sampling and up-sampling layers by default ($\approx 3$M parameters). We adopt the vanilla MLP network for the actor since we find that modifying network architecture or scaling the actor network is less effective, which matches the results in BRO \citep{nauman2024bigger}.

\subsection{Convex Q-learning}
\label{sec:convexqlearning}
Following our theoretical results, we adopt the TD(0)-style update rule for training critic networks. Denote $Q_{\theta_1}(s,a), Q_{\theta_2}(s,a)$ as the critic networks parameterized by $\theta_1,\theta_2$, respectively, $\pi_\phi(\cdot|s)$ as the actor network parameterized by $\phi$, the objective function of TD(0) with clipped double Q-learning (CDQ) gives $\mathcal{L}(\theta_i) = \mathbb{E}_{(s,a,r,s^\prime)\sim\rho}[(y - Q_{\theta_i}(s,a))^2]$, where $y = r + \gamma(\min_{j\in\{1,2\}}Q_{\theta_j^\prime}(s^\prime,a^\prime) - \alpha\log\pi_\phi(a^\prime|s^\prime))$ is the target value, $\theta^\prime_j,j\in\{1,2\}$ are target network parameters, $a^\prime\sim\pi_\phi(\cdot|s)$, $\rho$ is the sample distribution in the replay buffer, $\alpha$ is the temperature coefficient. Existing studies show that CDQ can incur underestimation bias (i.e., conservatism) \citep{ciosek2019better,pan2020softmax,lyu2022efficient}. Meanwhile, constraining initial representations with \texttt{Tanh} can be conservative since it constrains the initial representations and the output of the model. Eventually, the conservatism may be excessive if we use CDQ in CIR.

To mitigate the potential negative influence of conservatism from the architecture, we propose a convex Q-learning approach to encourage the learned value function to be optimistic. In convex Q-learning, we modify the target value by combining the minimum and maximum of value functions,
\begin{equation}
    \label{eq:convexq}
    y = r(s,a) + \gamma \left( \lambda\times \min_{i\in\{1,2\}} Q_{\theta^\prime_i}(s^\prime,a^\prime) + (1-\lambda)\times \max_{i\in\{1,2\}}Q_{\theta_i^\prime}(s^\prime,a^\prime) - \alpha\log\pi_\phi(a^\prime|s^\prime) \right).
\end{equation}
The over-conservatism issue can be eased by properly choosing $\lambda$. We set $\lambda=0.3$ by default. We show below that the convex Q-learning is guaranteed to converge to the optimal value function.
\begin{theorem}
    \label{theo:convergenceconvexq}
    Under mild assumptions, for any $\lambda\in(0,1]$, the Convex Q-learning converges to the optimal value function $Q^*$ as defined by the Bellman optimality equation with probability 1.
\end{theorem}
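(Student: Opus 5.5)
We treat the statement in the tabular setting, since that is where ``converges to $Q^*$'' has a clean meaning. There are two Q-tables $Q^1,Q^2$, and each is updated asynchronously toward the convex target of Equation~\ref{eq:convexq}. We drop the entropy term (or take $\alpha\to 0$) and take $a'$ greedy, so the target is $r+\gamma\big(\lambda\min_i Q^i(s',a^*)+(1-\lambda)\max_i Q^i(s',a^*)\big)$. The ``mild assumptions'' we adopt are the standard ones from the double Q-learning analysis: finite MDP, bounded rewards, every pair $(s,a)$ visited infinitely often, and Robbins--Monro step sizes $\sum_t\alpha_t(s,a)=\infty$, $\sum_t\alpha_t^2(s,a)<\infty$. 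The proof reduces to the stochastic approximation lemma of Singh et al.\ (2000), the one used for SARSA and double Q-learning. That lemma handles processes of the form $\Delta_{t+1}=(1-\alpha_t)\Delta_t+\alpha_t F_t$ with $\|\mathbb{E}[F_t\mid\mathcal{F}_t]\|_\infty\le\kappa\|\Delta_t\|_\infty+c_t$, $c_t\to 0$, and bounded conditional variance.

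\textbf{Step 1: the averaged iterate.} Let $\bar Q=\tfrac12(Q^1+Q^2)$ and $\Delta_t=\bar Q_t-Q^*$. Write $\min_i Q^i=\bar Q-\tfrac12|Q^1-Q^2|$ and $\max_i Q^i=\bar Q+\tfrac12|Q^1-Q^2|$. Then the convex target is
\[
\bar Q(s',a^*)+\big(\tfrac12-\lambda\big)\,|Q^1-Q^2|(s',a^*).
\]
So $F_t=r+\gamma\max_{b}\bar Q_t(s',b)-Q^*(s,a)+\gamma\,\xi_t$. Here $\xi_t$ collects the perturbation $(\tfrac12-\lambda)|Q^1_t-Q^2_t|$ together with the discrepancy from evaluating at the greedy action. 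The first part is a $\gamma$-contraction of $\Delta_t$ in sup norm, via the Bellman optimality operator. It therefore suffices to show $\|Q^1_t-Q^2_t\|_\infty\to 0$ a.s., which makes $c_t=\gamma|\tfrac12-\lambda|\,\|Q^1_t-Q^2_t\|_\infty\to 0$. Boundedness of the iterates, needed for the variance condition, follows by induction: each target is a convex combination of current table values plus a bounded reward. This gives $\|Q^i_t\|_\infty\le\max(\|Q_0\|_\infty,r_{\max}/(1-\gamma))$.

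\textbf{Step 2 (main obstacle): the two tables merge.} Let $D_t=Q^1_t-Q^2_t$. If both tables use the same target, as in Equation~\ref{eq:convexq}, then on the updated entry $D_{t+1}=(1-\alpha_t)D_t$ whenever both are updated with a common step size. Since $\sum\alpha_t=\infty$, this gives $D_t\to 0$ directly. The harder case is the one where the tables are updated alternately or with independent samples, in the style of double Q-learning. There $D_t$ obeys $D_{t+1}=(1-\alpha_t)D_t+\alpha_t G_t$, where $\mathbb{E}[G_t\mid\mathcal{F}_t]$ is controlled by $\gamma|\lambda-\cdot|\,\|D_t\|_\infty$-type terms and a martingale noise term. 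I would first try to show $\|\mathbb{E}[G_t\mid\mathcal{F}_t]\|_\infty\le\kappa\|D_t\|_\infty$ with $\kappa<1$. The point is that the min/max combination in Equation~\ref{eq:convexq} is symmetric in the two tables, so the target's bias partly cancels in the difference. Applying the same Singh et al.\ lemma with $c_t=0$ then yields $D_t\to 0$. The restriction $\lambda>0$ is where I expect the delicate constants to enter. If the general asynchronous case resists, I would fall back to the common-target setting, which matches the algorithm as written.

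\textbf{Step 3: conclusion.} With $c_t\to 0$ a.s. and bounded variance, the lemma gives $\Delta_t\to 0$, i.e.\ $\bar Q_t\to Q^*$. Combined with $D_t\to 0$ from Step~2, this gives $Q^1_t,Q^2_t\to Q^*$ with probability 1, and hence so do their min, max, and every convex combination of them.
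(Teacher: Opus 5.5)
Your proposal is correct and is essentially the paper's argument: the paper also uses two tabular critics that share the convex target and step size, invokes the same Singh et al.\ lemma, splits $F_t$ into a standard Q-learning term plus a perturbation $c_t$, and kills $c_t$ through $Q^B_{t+1}-Q^A_{t+1}=(1-\alpha_t)(Q^B_t-Q^A_t)$ on the updated entry. The differences are minor: you track the averaged iterate $\bar Q$, which gives the tidy perturbation $(\tfrac12-\lambda)|Q^1-Q^2|$, whereas the paper tracks $Q^A$ with $a^*$ greedy for $Q^A$ and then repeats the argument for $Q^B$; if your greedy action is taken with respect to one table, your $c_t$ should also absorb the greedy-action discrepancy, which is at most $\|Q^1_t-Q^2_t\|_\infty$ and so harmless; and your speculative alternating-update case is unnecessary, since the paper (like your fallback) treats only the common-target setting.
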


We use average $Q$ values for updating the actor following prior works \citep{chen2021randomized,nauman2024bigger}, i.e., $\mathcal{L}(\phi) = \mathbb{E}_{s\sim\rho,\tilde{a}\sim\pi_\phi(\cdot|s)}[\alpha\log\pi_\phi(\tilde{a}|s) - \frac{1}{2}\sum_{j=1,2}Q_{\theta_j}(s,\tilde{a})]$. We further adopt the sample multiple reuse (SMR) \citep{lyu2024off} trick that reuses the fixed sampled batch $M$ times to boost sample efficiency, which functions better than other data replay methods.

Putting together all the aforementioned techniques gives birth to our CIR algorithm. We adopt SAC \citep{haarnoja2018soft} as the base algorithm, modifying its critic network architecture and replacing CDQ with the convex Q-learning. The pseudo-code for CIR is available in Appendix \ref{appsec:pseudocode}.

\section{Experiments}
\label{sec:experiments}


In this section, we empirically investigate the applicability of CIR across numerous tasks. Across each domain, we evaluate the performance of CIR under a fixed set of hyperparameters and algorithmic configurations\footnote{Methods like SimBa \citep{lee2025simba} adopt different algorithmic configurations on different tasks}. Due to space limits, more empirical results are deferred to Appendix \ref{appsec:extendedablationstudy}.


We utilize three continuous control benchmarks for performance evaluation, DMC suite \citep{tassa2018deepmind}, HumanoidBench \citep{sferrazza2024humanoidbench} and ODRL \citep{lyu2024odrl}, as depicted in Figure \ref{fig:benchmark}. We adopt 20 DMC suite easy and medium-level tasks, 7 DMC hard tasks, 14 HumanoidBench locomotion tasks, and 8 ODRL tasks, leading to a total of 49 tasks. Following SimBa \citep{lee2025simba}, we use different training steps per environment based on their complexity, 500K steps for DMC easy and medium tasks, 1M for DMC hard and ODRL tasks, 2M for HumanoidBench tasks. Details on environmental descriptions can be found in Appendix \ref{appsec:envdetails}. Note that in ODRL, there exist dynamics discrepancies between the source domain and the target domain, and the agent needs to acquire good performance in the target domain by using data from both domains. This benchmark is adopted to empirically evaluate CIR's ability to handle OOD data, as illustrated in Figure \ref{fig:illustration}.


\begin{figure}
    \centering
    \includegraphics[width=0.95\linewidth]{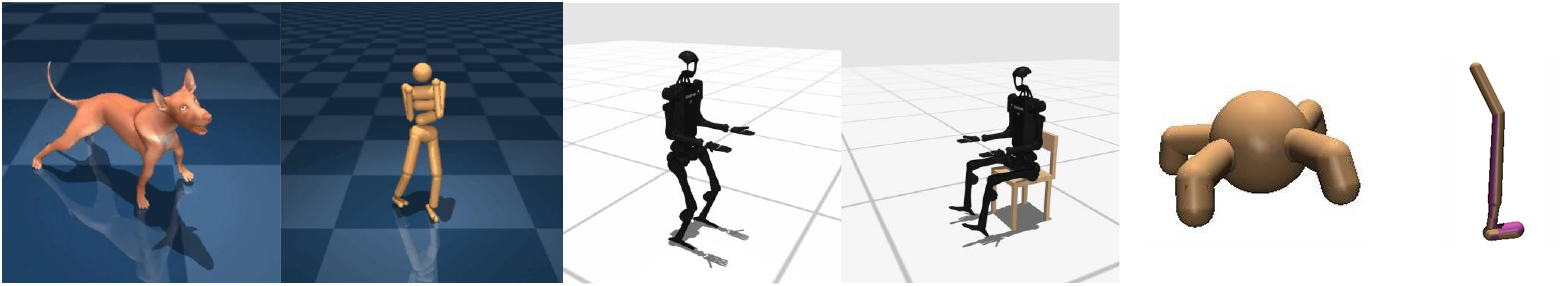}
    \vspace{-0.3cm}
    \caption{\textbf{Visualizations of the benchmarks.} We consider tasks from the DMC suite, HumanoidBench and ODRL for evaluations. These tasks feature varying complexity and can be challenging.}
    \label{fig:benchmark}
\end{figure}

\subsection{Main Results}

We consider the following methods as baselines: \textbf{SAC} \citep{haarnoja2018soft}, a classical and widely used off-policy maximum entropy RL algorithm; \textbf{TD7} \citep{fujimoto2023for}, an enhanced version of TD3 \citep{fujimoto2018addressing} by incorporating state-action representation learning; \textbf{BRO} \citep{nauman2024bigger} that combines critic network scaling with layer normalization and residual layer, optimistic exploration, distributional value modeling, and periodic resets; \textbf{DreamerV3} \citep{hafner2023mastering}, a strong model-based RL algorithm that optimizes a policy via synthetic trajectories; \textbf{TD-MPC2} \citep{hansen2024tdmpc} that empowers policy learning by training latent world models and leveraging model predictive control; \textbf{SimBa} \citep{lee2025simba} that directly modifies the network architecture of the agent by combining running statistics normalization (RSNorm), a residual feedforward block, and post-layer normalization. For a fair comparison, we use BRO-Fast and SimBa with the update-to-data (UTD) ratio 2. We use the SMR ratio $M=2$ in CIR across all tasks.

For ODRL tasks, we consider the following baselines: \textbf{SAC} \citep{haarnoja2018soft}, \textbf{DARC} \citep{eysenbach2021offdynamics} that train domain classifiers for cross-dynamics policy adaptation, and \textbf{PAR} \citep{lyu2024crossdomain} that fulfill efficient policy adaptation via capturing representation mismatch.

We summarize empirical results in Figure \ref{fig:overallresult}, where the x-axis denotes the computation time using an RTX 3090 GPU, and the y-axis represents the aggregated return. Points that approach the upper-left region indicate higher compute efficiency while points in the lower-right region have lower compute efficiency. The shaded region denotes standard deviations. We find that CIR exhibits strong performance on DMC easy and medium tasks, exceeding all baselines. On HumanoidBench tasks, CIR outperforms strong baselines like SimBa, and matches the performance of TD-MPC2. For DMC hard tasks, CIR underperforms SimBa and BRO, but is still better than other baselines. It is evident that CIR can achieve competitive sample efficiency against recent strong model-free methods while not increasing much computation time. CIR is 2 times computationally more efficient compared to model-based methods like TD-MPC2 and DreamerV3. Moreover, CIR exhibits strong performance on ODRL tasks, despite that CIR does not include any special components designed for this setting, indicating its advantages in mitigating the OOD issue and stabilizing training. The empirical results highlight the effectiveness of CIR and the potential of constraining initial representations.

\begin{figure}
    \centering
    \includegraphics[width=0.98\linewidth]{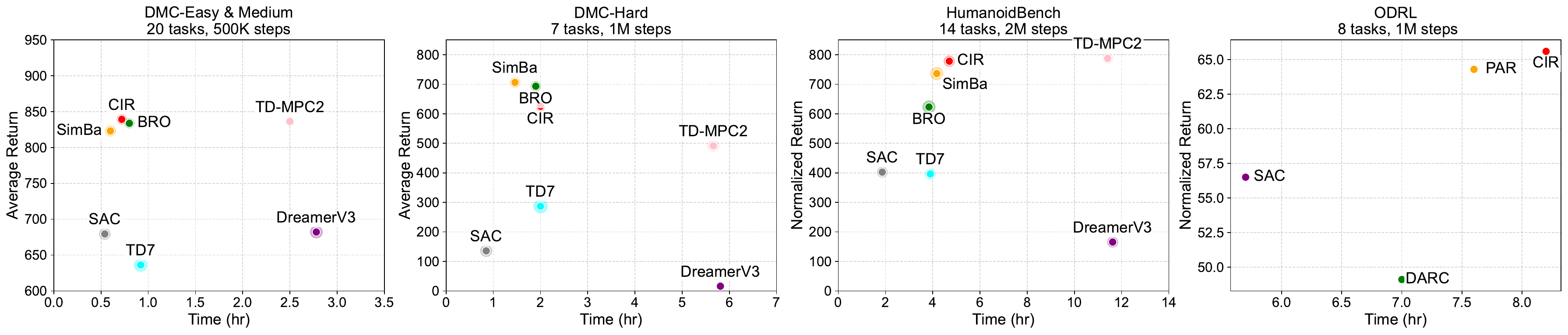}
    \vspace{-0.5cm}
    \caption{\textbf{Comparison of CIR against baselines.} The average episode return is compared on DMC tasks while normalized return results are compared in HumanoidBench and ODRL tasks.}
    \label{fig:overallresult}
\end{figure}

\subsection{Extended Results on Complex Tasks}
\label{sec:complextasks}

\begin{wraptable}{r}{0.55\textwidth}
  \vspace{-15pt}
  \caption{\textbf{Comparison of CIR and SimBa on wider HumanoidBench tasks.} We bold the best score.}
  \label{tab:complextasks}
  \centering
  \footnotesize
  \setlength\tabcolsep{4pt}
  \begin{tabular}{l|cccccccccc}
    \toprule
\textbf{Task} & \textbf{SimBa} & \textbf{CIR} \\ \midrule
h1-door & 294.3$\pm$21.3 & \textbf{327.2}$\pm$11.3 ({\color{red} \textbf{+11.2\%}}) \\
h1-highbar\_simple & 491.0$\pm$27.2 & \textbf{512.8}$\pm$29.6 ({\color{red} \textbf{+4.4\%}}) \\
h1hand-reach & 5581.7$\pm$789.9 & \textbf{5776.6}$\pm$357.3 ({\color{red} \textbf{+3.5\%}}) \\
h1hand-run & 31.0$\pm$5.0 & \textbf{193.7}$\pm$108.6 ({\color{red} \textbf{+524.8\%}}) \\
h1hand-sit\_simple & 755.5$\pm$104.0 & \textbf{856.9}$\pm$65.2 ({\color{red} \textbf{+13.4\%}}) \\
h1hand-sit\_hard & 559.4$\pm$205.6 & \textbf{678.5}$\pm$160.2 ({\color{red} \textbf{+21.3\%}}) \\
\bottomrule
\end{tabular}
\vspace{-10pt}
\end{wraptable}

Though CIR underperforms SimBa on some tasks, it can beat SimBa on complex tasks like HumanoidBench. To further show the effectiveness of CIR, we conduct experiments on 2 HumanoidBench locomotion tasks without dexterous hands and 4 locomotion tasks with dexterous hands. Episode return results in Table 1 show that CIR consistently beats SimBa, sometimes by a significant margin. This further verifies the advantages of using CIR.

\subsection{Ablation Study}
\label{sec:ablation}


In this part, we conduct a detailed ablation study of CIR. We first adopt 7 DMC medium tasks and 7 DMC hard tasks for experiments, and consider (i) \textbf{NoTanh}, where we discard the \texttt{Tanh} function in CIR; (ii) \textbf{NoLN}, a variant of CIR without layer normalization; (iii) \textbf{CDQ}, which utilizes CDQ instead of the convex Q-learning; (iv) \textbf{UTD}, which replaces SMR with UTD in CIR (UTD = 2). We summarize the results in Figure \ref{fig:ablation} (left), where we report the percent CIR performance (\% CIR performance) of these variants. If the percent CIR performance exceeds 100, it indicates that the variant outperforms CIR, and underperforms CIR if it is smaller than 100. We further use 20 DMC easy \& medium tasks for experiments and consider (v) \textbf{Sigmoid}, (vi) \textbf{Softmax}, (vii) \textbf{LayerNorm}, that replace \texttt{Tanh} with sigmoid, softmax, and layer normalization, respectively; (viii) \textbf{NoSC}, which removes the skip connection module in CIR. The results are presented in Figure \ref{fig:ablation} (right).

\begin{figure}[h!]
    \centering
    \includegraphics[width=0.44\linewidth]{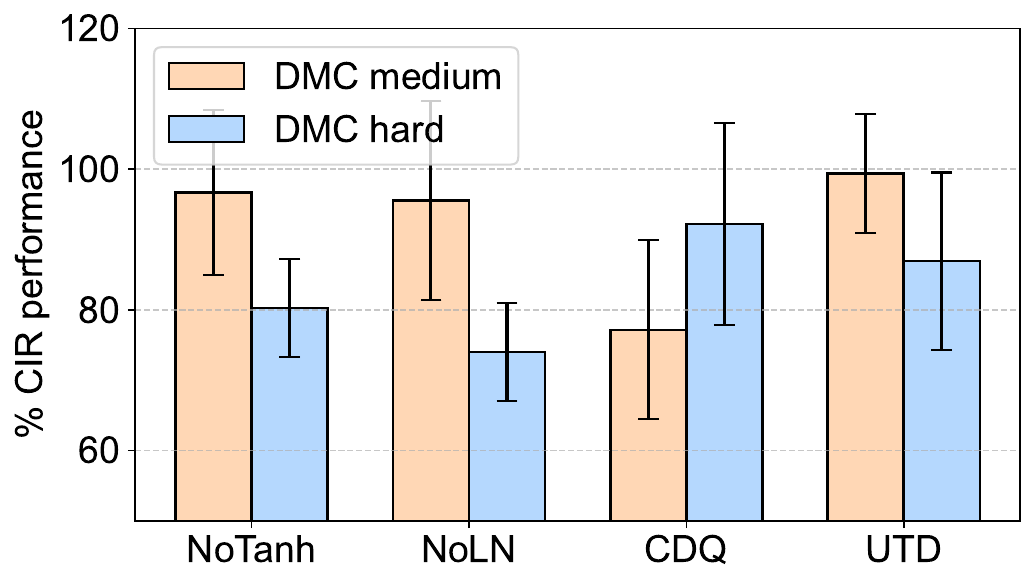}
    \includegraphics[width=0.48\linewidth]{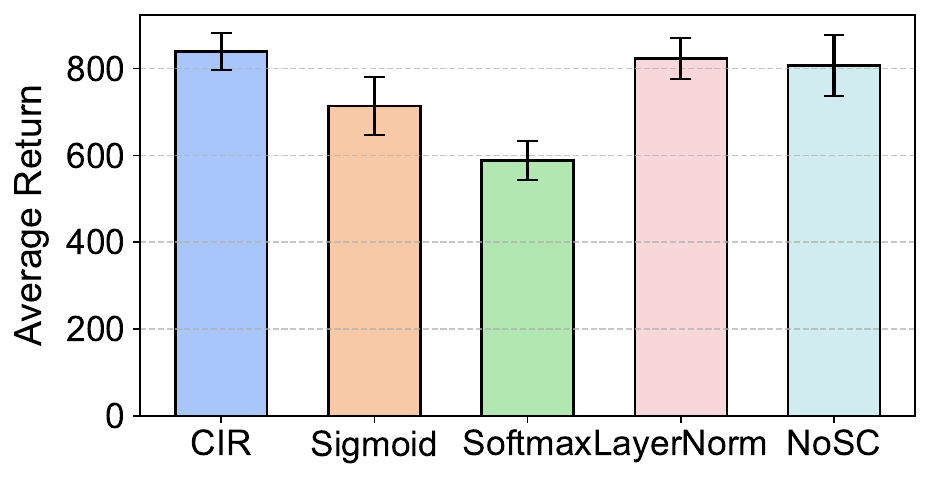}
    \vspace{-0.5cm}
    \caption{\textbf{Left:} Ablation study on network components and algorithmic components in CIR. \textbf{Right:} Ablation study on \texttt{Tanh} and skip connection module in CIR. SC denotes skip connection.}
    \label{fig:ablation}
\end{figure}
\vspace{-0.1cm}
 


\textbf{On the critic network architecture.} It turns out that eliminating either the \texttt{Tanh} function or layer normalization decreases the performance of CIR, especially on complex DMC hard tasks. The \texttt{Tanh} function is responsible for initial representation constraint, while layer normalization ensures that the representations in downstream layers are stabilized. Layer normalization also serves as a regularization term (as required in Theorem \ref{theo:global}). Both of them are vital for CIR. Also, either removing the skip connection module or replacing the \texttt{Tanh} function with sigmoid, softmax, or layer normalization can incur performance drop. These verify our design choice on network components.


\textbf{On algorithmic components.} The results show that adopting CDQ for value estimate causes significant performance decrease. This validates our claim in Section \ref{sec:convexqlearning} and stresses the necessity of convex Q-learning. We observe that \textbf{UTD} achieves similar performance as CIR on medium tasks but its performance on hard tasks is worse than CIR, demonstrating the superiority of SMR over UTD.

\subsection{Scaling Results}
\label{sec:scalingresults}

We now investigate the scaling capability of CIR, and adopt 6 DMC medium tasks, 3 DMC hard tasks for experiments. Since medium tasks are run for 500K steps while hard tasks are run for 1M steps, we define \emph{equivalent environment step} $k$ that aligns the actual environment step of medium and hard tasks. Reporting average returns under equivalent environment step $k$ means that we aggregate returns of medium tasks at environment step $k$ and hard tasks at environment step $2k$. We denote CIR@$k$ as the (aggregated) return of CIR at $k$-th step in medium tasks and $2k$-th step in hard tasks. For example,  CIR@200K means the aggregated return of DMC medium tasks at 200K steps and hard tasks at 400K steps. The empirical results are outlined in Figure \ref{fig:scalingresults}.

\begin{figure}
    \centering
    \includegraphics[width=0.46\linewidth]{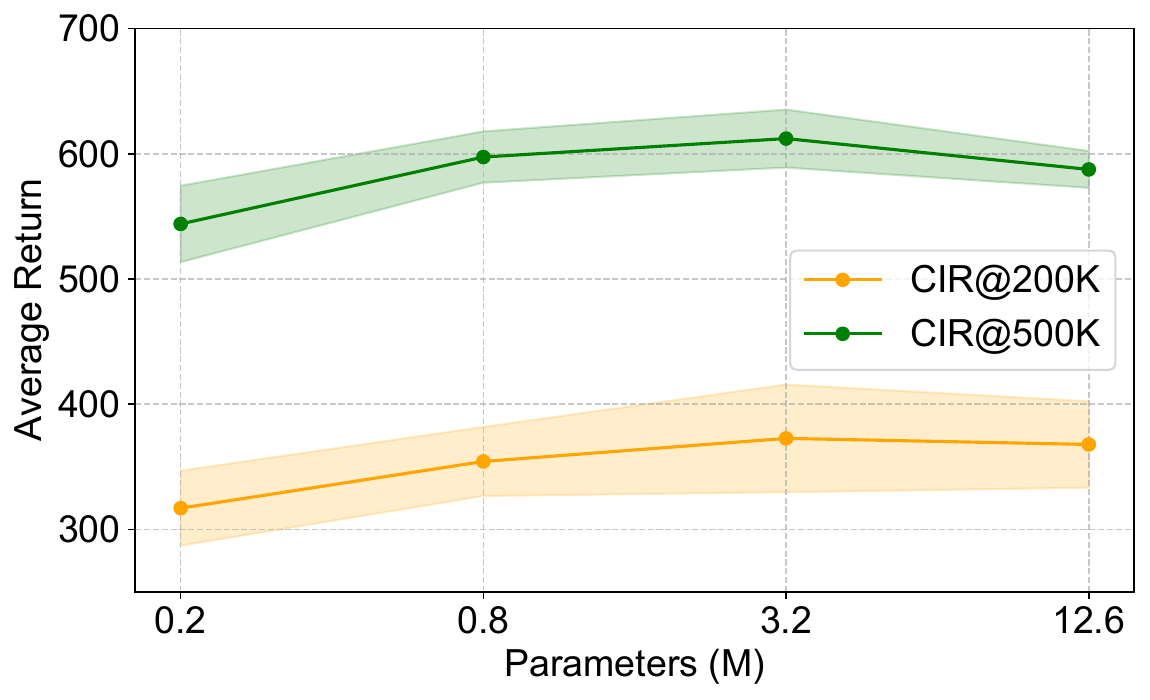}
    \includegraphics[width=0.46\linewidth]{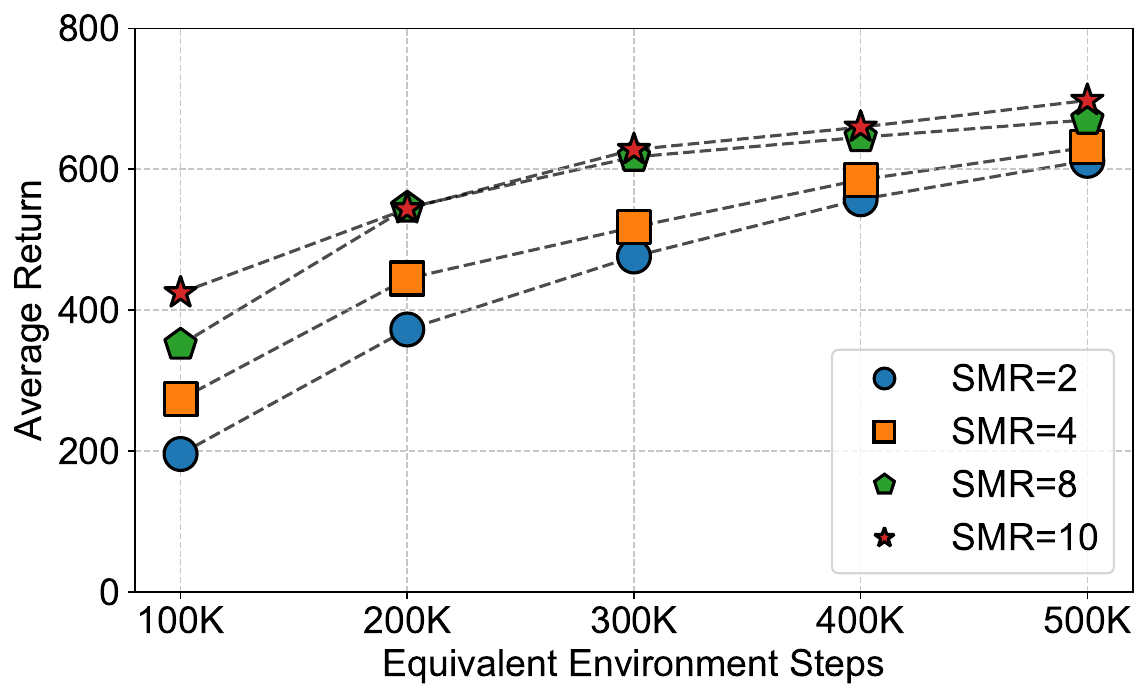}
    \vspace{-0.5cm}
    \caption{\textbf{Scaling results of CIR.} We report the scaling results on network capacity (\textbf{left}) and the SMR ratio (\textbf{right}). We use the aggregated average return for performance comparison.}
    \label{fig:scalingresults}
\end{figure}

\textbf{Scaling the network size.} We scale the width of the CIR critic networks and the actor network by varying the size of the hidden dimension in $\{128,256,512,1024\}$. The results in Figure \ref{fig:scalingresults} (left) illustrate that CIR generally benefits from scaling the network capacity across different equivalent environment steps. The performance of CIR grows slowly upon reaching a certain amount of parameters, which matches the reported scaling results in \citep{nauman2024bigger,lee2025simba}.

\textbf{Scaling the SMR ratio.} We then investigate whether CIR can benefit from scaling computation time by measuring the performance of CIR across different SMR ratios (from $2$ to $10$). Figure \ref{fig:scalingresults} (right) demonstrates that CIR consistently enjoys performance improvement by increasing the SMR ratio under various equivalent environment steps, especially at the initial training stage.


\section{Conclusion and Future Opportunities}

Our key contribution in this work is the CIR framework, which aims at enhancing the sample efficiency from the perspective of initial representation constraint. CIR introduces three novel components, (i) restricting initial representations by leveraging the \texttt{Tanh} function, layer normalization, and AvgRNorm; (ii) the skip connection technique that offers a linear pathway from the low-level features to deep abstract representation; (iii) the convex Q-learning approach that returns optimistic and flexible value estimate. We provide theoretical guarantees for our method under linear function approximation. Moreover, empirical evaluations on the DMC suite, HumanoidBench and ODRL tasks show that CIR exhibits competitive or even better performance against recent strong baselines.

\textbf{Opportunities.} We believe that this work offers a valuable attempt at improving sample efficiency by constraining initial representations, offering opportunities for proposing better representation stabilization methods (e.g., combining various normalization tricks) or designing better architectures.

\section*{Reproducibility Statement}

We have made efforts to ensure that our work is reproducible. We provide the source code of our
algorithm in the supplementary materials. Additionally, we provide the detailed hyperparameter setup for CIR in Appendix \ref{appsec:hyperparameter}, and the details on benchmark environments in Appendix \ref{appsec:envdetails}.

\bibliography{iclr2026_conference}
\bibliographystyle{iclr2026_conference}

\newpage
\appendix

\section{Missing Proofs}
\label{appsec:proofs}

In this section, we provide missing proofs for theorems in the main text. The convergent behavior of TD learning is studied in prior works like \citep{gallici2025simplifying}, but we focus on convergent behaviors of TD learning under \texttt{Tanh} activation. For better readability, we restate all theorems.

\subsection{Proof of Theorem \ref{theo:linear}}
\begin{theorem}[Linear Independence]
\label{apptheo:linear}
    Suppose that $W=c\cdot I$, where $c\in\mathbb{R}$ is a sufficient small positive number, $I$ is the identity matrix, then if basis functions $\{\phi_1,\ldots,\phi_S\}$ are linearly independent, then the transformed basis functions $\{{\rm{tanh}}(W\phi_1),\ldots,{\rm{tanh}}(W\phi_S)\}$ are also linearly independent.
\end{theorem}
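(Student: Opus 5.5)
The plan is a perturbation argument: for small $c$, the map $x\mapsto\tanh(cx)$ is, after rescaling by $1/c$, a small perturbation of the identity, and linear independence is an open condition. Let $\Phi$ be the matrix whose columns (or rows; the argument is identical, since $\tanh$ acts entrywise on the whole matrix) are the basis vectors $\phi_1,\ldots,\phi_S$. Linear independence of these vectors is equivalent to $\sigma_{\min}(\Phi)>0$, where $\sigma_{\min}$ is the smallest singular value on the relevant side (i.e.\ $\Phi$ has full rank in the direction indexed by the $\phi_k$). Because $W=c\cdot I$, the transformed vectors are the columns of $\tanh(c\Phi)$, with $\tanh$ applied entrywise. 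Since $c>0$, multiplying by $1/c$ does not change rank, so it suffices to show that $\frac{1}{c}\tanh(c\Phi)$ has the same full rank.

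The first step is an entrywise Taylor bound. For every real $t$ we have $|\tanh(t)-t|\le |t|^3/3$. This follows from $\tanh(0)=0$, $\tanh'(0)=1$, $\tanh''(0)=0$ and $|\tanh'''|\le 2$, or more directly from $0\le t-\tanh t\le t^3/3$ for $t\ge 0$ together with oddness. Set $M=\max_{k,i}|(\phi_k)_i|$, which is finite because the state space is finite. Writing $\frac{1}{c}\tanh(c\Phi)=\Phi+E(c)$, each entry of $E(c)$ is bounded by $c^2M^3/3$. Hence
\[
\|E(c)\|_2\le\|E(c)\|_F\le \sqrt{Sd}\,\frac{c^2M^3}{3}.
\]

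The second step applies Weyl's inequality for singular values:
\[
\sigma_{\min}\bigl(\Phi+E(c)\bigr)\ge\sigma_{\min}(\Phi)-\|E(c)\|_2 .
\]
It then suffices to choose
\[
0<c<\left(\frac{3\,\sigma_{\min}(\Phi)}{\sqrt{Sd}\,M^3}\right)^{1/2}.
\]
This makes the right-hand side strictly positive, so $\frac1c\tanh(c\Phi)$, and therefore $\tanh(c\Phi)$, has full rank. That is precisely the linear independence of $\{\tanh(W\phi_1),\ldots,\tanh(W\phi_S)\}$. If $M=0$ the statement is vacuous, since then the $\phi_k$ are all zero and cannot be independent.

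The main obstacle is mild. It consists of fixing the correct reading of ``basis functions'' (rows versus columns of the feature matrix) and making the threshold on $c$ explicit rather than merely asymptotic. The entrywise nature of $\tanh$ makes the argument insensitive to which reading is taken, and the explicit Taylor remainder bound removes any reliance on an unquantified ``sufficiently small'' $c$. An alternative to Weyl's inequality is a determinant argument: pick a nonsingular square minor of $\Phi$ and use continuity of the determinant of the corresponding minor of $\frac1c\tanh(c\Phi)$ as $c\to0^+$, whose limit is the nonzero minor of $\Phi$. This gives existence of a threshold without an explicit constant.
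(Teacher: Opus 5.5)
Your proof is correct. It starts where the paper's does, from $\tanh(t)=t+\mathcal{O}(t^3)$ and hence $\frac{1}{c}\tanh(c\Phi)\to\Phi$, but it concludes by a genuinely different mechanism.

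The paper argues by contradiction. It assumes a nonzero $\mathbf{a}$ with $\sum_i a_i\tanh(c\phi_i)=0$, divides by $c$ to get $\sum_i a_i\phi_i+\mathcal{O}(c^2)=0$, and then lets $c\to0$. As written this is only a sketch, for two reasons. First, the dependence vector $\mathbf{a}$ is assumed at a single value of $c$ and may change with $c$. Second, the hidden constant in $\mathcal{O}(c^2)$ scales with $\|\mathbf{a}\|$. Making it rigorous requires normalizing $\|\mathbf{a}\|=1$, taking $c_n\to0$, and extracting a convergent subsequence of the $\mathbf{a}^{(n)}$ by compactness.

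You replace that limiting step with the explicit remainder bound $|\tanh t-t|\le|t|^3/3$ and Weyl's inequality for singular values. The lower bound $\sigma_{\min}(\Phi)$ supplies exactly the uniformity over all unit coefficient vectors that the paper leaves implicit. It also yields a concrete threshold $c<\bigl(3\sigma_{\min}(\Phi)/(\sqrt{Sd}\,M^3)\bigr)^{1/2}$ in place of an unquantified ``sufficiently small.''

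Your treatment of the row/column ambiguity is also cleaner. The paper's assumption speaks of full column rank, while the theorem treats the $S$ vectors $\phi_k\in\mathbb{R}^d$. Your entrywise, singular-value argument covers both readings at once.

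What the paper's version buys is brevity and no matrix-analytic tools. Your determinant-minor alternative is essentially the rigorous form of the paper's qualitative ``independence is an open condition'' idea.
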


\begin{proof}
Since we set $W=c\cdot I$, we have $\tanh(W\phi_j) = \tanh(c\cdot \phi_j),\forall j\in\{1,2,\ldots,S\}$. Based on the assumption that the basis functions $\{\phi_1,\ldots,\phi_S\}$ are linearly independent, the only solution to the equation
\begin{equation}
    \sum_{i=1}^S a_i \phi_i = 0
\end{equation}
is \(a_1 = a_2 = \cdots = a_S = 0\). When \(c\) is sufficiently small, $c\cdot\phi_i$ is also sufficiently small, $\forall i\in\{1,2,\ldots,S\}$. By using Taylor expansion of \(\tanh(x)\) around \(x = 0\), we have
\begin{equation}
    \tanh(x) = x - \frac{x^3}{3} + \frac{2x^5}{15} - \cdots = x + \mathcal{O}(x^3).
\end{equation}

Thus, for \(c \cdot \phi_i \approx {\bf{0}}\) where ${\bf{0}}$ is $d$-dimensional zero matrix, we have  
\begin{equation}
\tanh(c\cdot\phi_i) = c\cdot\phi_i + \mathcal{O}(c^3 \|\phi_i\|^3).
\end{equation}

Suppose there exists a non-zero coefficient vector \({\bf{a}} = (a_1, a_2, \ldots, a_S)\) such that  
\begin{equation}
\sum_{i=1}^S a_i \tanh(c\cdot\phi_i) = 0.
\end{equation}
Substituting the Taylor expansion, we get  
\begin{equation}
\sum_{i=1}^S a_i \left(c\cdot\phi_i + \mathcal{O}(c^3 \|\phi_i\|^3)\right) = 0,
\end{equation}
which simplifies to  
\begin{equation}
\sum_{i=1}^S a_i \phi_i + \mathcal{O}(c^2) = 0.
\end{equation}
As \(c \to 0\), the remainder term \(\mathcal{O}(c^2) \to 0\). By the continuity of linear combinations, we obtain  
\begin{equation}
\sum_{i=1}^S a_i \phi_i = 0,
\end{equation}
which contradicts the linear independence of \(\{\phi_i\}\). Thus, no such non-zero vector ${\bf{a}}$ can exist for sufficiently small \(c\). This indicates that the transformed basis functions $\{{\rm{tanh}}(W\phi_1),\ldots,{\rm{tanh}}(W\phi_S)\}$ are also linearly independent.
\end{proof}

\subsection{Proof of Theorem \ref{theo:variance}}
\begin{theorem}[Variance Reduction]
\label{apptheo:variance}
   Given any state $s$, we assume the feature $\phi(s)$ follows a distribution with $\mathbb{E}[\phi(s)]=0$. Consider per-transition least square TD (LSTD) objective $\mathcal{L}=\frac{1}{2}[(r + \gamma V(s^\prime;\theta) - V(s;\theta))^2]$, then if $V(s;\theta) = \phi(s)^{\mathsf{T}}\theta$, the variance of the semi-gradient term satisfies:
   \begin{align}
       \|{\rm{Var}}(\nabla_\theta \mathcal{L})\| \le \left( r_{\rm max} + (1+\gamma)\|\theta\|\Lambda\right)^2\cdot\|\phi(s)\|^2,
   \end{align}
   where $\Lambda = \max\{\|\phi(s)\|, \|\phi(s^\prime)\|\}$. While if $V(s;\theta) = {\rm{tanh}}(W\phi(s))^{\mathsf{T}}\theta$, we have
   \begin{align}
       \|{\rm{Var}}(\nabla_\theta \mathcal{L})\| \le (r_{{\rm{max}}} + (1+\gamma)\|\theta\|\lambda_W{\Lambda})^2\cdot\|{\rm{tanh}}(W\phi(s))\|^2,
   \end{align}
   where $\lambda_W$ is the maximum eigenvalue of $W$.
\end{theorem}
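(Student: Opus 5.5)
The plan is to write the semi-gradient explicitly, bound the TD error pointwise by a deterministic quantity, and then control the variance by the second moment. For the linear case the semi-gradient (treating the bootstrapped target as fixed) is
\begin{equation*}
g := \nabla_\theta\mathcal{L} = -\delta\,\phi(s), \qquad \delta = r + \gamma\phi(s^\prime)^{\mathsf{T}}\theta - \phi(s)^{\mathsf{T}}\theta .
\end{equation*}
I will interpret $\|{\rm Var}(g)\|$ as the norm of the covariance matrix, or equivalently as the trace-type bound $\mathbb{E}\|g - \mathbb{E}g\|^2$. In either reading, the bound ${\rm Var}(g)\preceq \mathbb{E}[gg^{\mathsf{T}}]$ together with $\|\mathbb{E}[gg^{\mathsf{T}}]\|\le \mathbb{E}\|g\|^2$ reduces the problem to bounding $\|g\|^2 = \delta^2\|\phi(s)\|^2$. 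The assumption $\mathbb{E}[\phi(s)]=0$ is used only to make this reduction natural, since it kills a cross term if one centres around $\phi$. I will not rely on it beyond noting that dropping the mean can only increase the second moment relative to the variance.

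The key step is the pointwise bound on $\delta$. By the triangle inequality and Cauchy--Schwarz,
\begin{equation*}
|\delta| \le r_{\rm max} + \gamma\|\phi(s^\prime)\|\|\theta\| + \|\phi(s)\|\|\theta\| \le r_{\rm max} + (1+\gamma)\|\theta\|\Lambda ,
\end{equation*}
with $\Lambda=\max\{\|\phi(s)\|,\|\phi(s^\prime)\|\}$. Squaring and multiplying by $\|\phi(s)\|^2$ gives the first claim, where $\Lambda$ and $\|\phi(s)\|$ are understood as the (worst-case) quantities appearing in the statement.

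For the $\tanh$ case, the gradient becomes $g=-\delta_W\tanh(W\phi(s))$ with $\delta_W = r+\gamma\tanh(W\phi(s^\prime))^{\mathsf{T}}\theta-\tanh(W\phi(s))^{\mathsf{T}}\theta$. The only new ingredient is the bound $\|\tanh(W\phi)\|\le\|W\phi\|\le\lambda_W\|\phi\|$. This follows coordinatewise from $|\tanh(x)|\le|x|$, together with the fact that $W$ is positive definite diagonal, so its operator norm equals its maximal eigenvalue $\lambda_W$. Substituting this into the same triangle/Cauchy--Schwarz estimate yields $|\delta_W|\le r_{\rm max}+(1+\gamma)\|\theta\|\lambda_W\Lambda$. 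Multiplying by $\|\tanh(W\phi(s))\|^2$ from the gradient factor then gives the second bound.

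I expect the main obstacle to be purely interpretational. One issue is making precise which norm of ${\rm Var}$ is meant. The other is that $\|\phi(s)\|$ appears on the right-hand side as if deterministic, even though $s$ is random. I will handle this by stating the bounds with $\|\phi(s)\|$ and $\Lambda$ replaced by their suprema over the (finite) state space, which is legitimate since $S$ is finite. The step ${\rm Var}(g)\preceq\mathbb{E}[gg^{\mathsf{T}}]\preceq \sup\|g\|^2 I$ is then routine.
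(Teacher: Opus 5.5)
Your proposal is correct and follows essentially the same route as the paper. Both arguments bound the variance by the second moment, bound the TD error pointwise via the triangle and Cauchy--Schwarz inequalities, and use $\|\tanh(W\phi)\|\le\|W\phi\|\le\lambda_W\|\phi\|$ in the $\tanh$ case. The one difference is in the linear case: the paper invokes $\mathbb{E}[\phi(s)]=0$ to identify the variance with the second moment, pulling the random TD error outside the expectation as if it were deterministic. Your use of ${\rm Var}(g)\preceq\mathbb{E}[gg^{\mathsf{T}}]$ in both cases, together with replacing $\|\phi(s)\|$ and $\Lambda$ by their suprema, avoids that unjustified step and makes the bounds well defined.
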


\begin{proof}
    Given the objective function
    \begin{equation}
        \mathcal{L}=\frac{1}{2}[(r + \gamma V(s^\prime;\theta) - V(s;\theta))^2],
    \end{equation}
    it is easy to find its semi-gradient term,
    \begin{equation}
        \nabla\mathcal{L}=(r + \gamma V(s^\prime;\theta) - V(s;\theta))\nabla V(s;\theta).
    \end{equation}
    If $V(s;\theta) = \phi(s)^{\mathsf{T}}\theta$, we have
    \begin{equation}
        \nabla_\theta\mathcal{L}=(r + \gamma \phi(s^\prime)^{\mathsf{T}}\theta - \phi(s)^{\mathsf{T}}\theta)\phi(s).
    \end{equation}
    Recall that ${\rm{Var}}(X) = \mathbb{E}[X^2] - (\mathbb{E}X)^2$, we notice that $\phi(s)^{\mathsf{T}}\theta, \phi(s^\prime)^{\mathsf{T}}\theta$ are scalar values, by using the assumption, it is easy to find that
    \begin{equation}
        \mathbb{E}[\nabla_\theta\mathcal{L}]=\mathbb{E}[(r+\gamma \phi(s^\prime)^{\mathsf{T}}\theta- \phi(s)^{\mathsf{T}}\theta)\phi(s)] = (r+\gamma \phi(s^\prime)^{\mathsf{T}}\theta- \phi(s)^{\mathsf{T}}\theta)\mathbb{E}[\phi(s)] = 0.
    \end{equation}
    Therefore, we have
    \begin{equation}
        {\rm{Var}}(\nabla_\theta\mathcal{L}) = \mathbb{E}[\nabla_\theta\mathcal{L}]^2=[r + \gamma \phi(s^\prime)^{\mathsf{T}}\theta-\phi(s)^{\mathsf{T}}\theta]^2\mathbb{E}[\phi(s)\phi(s)^{\mathsf{T}}].
    \end{equation}
    Then,
    \begin{align}
        \|{\rm{Var}}(\nabla_\theta\mathcal{L})\| &=\|[r + \gamma \phi(s^\prime)^{\mathsf{T}}\theta-\phi(s)^{\mathsf{T}}\theta]^2\mathbb{E}[\phi(s)\phi(s)^{\mathsf{T}}]\| \\
        &\le (r_{\rm max} + \gamma\|\phi(s^\prime)\|\|\theta\| + \|\phi(s)\|\|\theta\|)^2 \|\phi(s)\|^2 \\
        &\le (r_{\rm max} + (1+\gamma)\|\theta\|\Lambda)^2 \|\phi(s)\|^2,
    \end{align}
    where $\Lambda = \max\{\|\phi(s)\|, \|\phi(s^\prime)\|\}$. While if $V(s;\theta) = {\rm{tanh}}(W\phi(s))^{\mathsf{T}}\theta$, where $W$ is a positive definite diagonal matrix, the semi-gradient gives
    \begin{equation}
        \nabla_\theta\mathcal{L}=\underbrace{(r + \gamma {\rm{tanh}}(W\phi(s^\prime))^{\mathsf{T}}\theta - {\rm{tanh}}(W\phi(s))^{\mathsf{T}}\theta)}_{:=\delta}{\rm{tanh}}(W\phi(s)).
    \end{equation}
    We first bound $\mathbb{E}[\nabla_\theta\mathcal{L}]^2$ term,
    \begin{align}
        \mathbb{E}[\nabla_\theta\mathcal{L}]^2 &=\|[r + \gamma {\rm{tanh}}(W\phi(s^\prime))^{\mathsf{T}}\theta-{\rm{tanh}}(W\phi(s))^{\mathsf{T}}\theta]^2\mathbb{E}[{\rm{tanh}}(W\phi(s)){\rm{tanh}}(W\phi(s))^{\mathsf{T}}]\| \\
        &\le (r_{\rm max} + \gamma\|{\rm{tanh}}(W\phi(s^\prime))\|\|\theta\| + \|{\rm{tanh}}(W\phi(s))\|\|\theta\|)^2 \cdot \|{\rm{tanh}}(W\phi(s))\|^2 \\
        &= (r_{\rm max} + (1+\gamma)\|\theta\|\lambda_W\Lambda)^2 \cdot \|{\rm{tanh}}(W\phi(s))\|^2,
    \end{align}
    where the last inequality holds due to $\|\tanh(W\phi(s))\|\le \|W\phi(s)\|\le\lambda_W\|\phi(s)\|\le \lambda_W\Lambda$.
    
    Unfortunately, $\mathbb{E}[\phi(s)] = 0$ does not necessarily guarantee that $\mathbb{E}[{\rm{tanh}}(W\phi(s))] = 0$. Nevertheless, we have $\|{\rm{tanh}}(W\phi(s))\|\le\sqrt{d}$ since $|{\rm{tanh}}(x)|\le 1$. We then have the variance term,
    \begin{align}
        \|{\rm{Var}}(\nabla_\theta\mathcal{L})\| &= \mathbb{E}[\nabla_\theta\mathcal{L}]^2 - (\mathbb{E}[\nabla_\theta\mathcal{L}])^2 \\
        &\le \mathbb{E}[\nabla_\theta\mathcal{L}]^2 = (r_{\rm max} + (1+\gamma)\|\theta\|\lambda_W\Lambda)^2 \cdot \|{\rm{tanh}}(W\phi(s))\|^2.
    \end{align}
    These conclude the proof.
\end{proof}

\subsection{Proof of Theorem \ref{theo:convergence}}
\begin{theorem}[Convergence under TD(0)]
\label{apptheo:convergence}
    Consider the TD(0) update rule for the value function \( V(s;\theta) = \tanh(W \phi(s))^{\mathsf{T}}\theta\). Suppose we project the parameters \( \theta \) onto bounded sets \( \Theta = \{ \theta \mid \|\theta\|_2 \leq C_\theta \} \) after each update. If the step sizes satisfy \( \sum \alpha_t = \infty \), \( \sum \alpha_t^2 < \infty \), then the projected TD(0) algorithm converges almost surely to a fixed point \( \theta^* \).
\end{theorem}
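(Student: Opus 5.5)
The plan is to treat projected TD(0) with the transformed features $\psi(s) := \tanh(W\phi(s))$ as a projected stochastic approximation scheme and invoke the ODE method for projected algorithms (Kushner--Yin, or Borkar's projected variant). The key observation is that, since $W$ is a \emph{constant} matrix, the map $s \mapsto \psi(s)$ is just a fixed feature map. So the update in Equation \ref{eq:td0tanh} is exactly linear TD(0) with features $\psi$, and the nonlinearity of $\tanh$ plays no role in the dynamics in $\theta$. We write the update as $\theta_{t+1} = \Pi_\Theta\big(\theta_t + \alpha_t (b(X_t) - A(X_t)\theta_t)\big)$. Here $X_t = (s_t, s_{t+1})$, $b(X_t) = r_t \psi(s_t)$, and $A(X_t) = \psi(s_t)\big(\psi(s_t) - \gamma \psi(s_{t+1})\big)^{\mathsf{T}}$. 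Under the stationary distribution $\nu$ the mean field is $\bar h(\theta) = \bar b - \bar A\theta$, with $\bar A = \Psi^{\mathsf{T}} D_\nu (I - \gamma P^\pi)\Psi$, $\bar b = \Psi^{\mathsf{T}} D_\nu r$, where $\Psi$ is the $S\times d$ matrix of rows $\psi(s)^{\mathsf{T}}$.

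\textbf{Step 1 (regularity).} Because $|\tanh|\le 1$, we have $\|\psi(s)\|\le\sqrt d$ for every $s$. Together with $|r|\le r_{\max}$ and $\|\theta\|\le C_\theta$, this makes the increments uniformly bounded: $\|b(X_t)-A(X_t)\theta_t\| \le \sqrt d\,(r_{\max} + (1+\gamma)\sqrt d\, C_\theta)$. The mean field $\bar h$ is affine, hence Lipschitz. The chain $(s_t)$ is finite and ergodic, so $X_t$ is ergodic as well, and the Markov noise $h(X_t,\theta)-\bar h(\theta)$ admits a Poisson-equation solution that is bounded and Lipschitz in $\theta$. These are the standard conditions for Markov-noise stochastic approximation, and the constraint set $\Theta$ is a compact convex ball. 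With $\sum\alpha_t=\infty$ and $\sum\alpha_t^2<\infty$, the projected ODE theorem says that $\theta_t$ converges almost surely to an internally chain-transitive invariant set of the projected ODE $\dot\theta = \bar b - \bar A\theta + z(\theta)$, where $z(\theta)\in -N_\Theta(\theta)$ is the projection term.

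\textbf{Step 2 (identify the limit).} We show that $\bar A$ is positive definite, in the sense $x^{\mathsf{T}}\bar A x>0$ for $x\neq0$. Theorem \ref{theo:linear} supplies that $\Psi$ has full column rank when $W=cI$ with $c$ small. The argument of \citet{tsitsiklis1996analysis} then applies verbatim: $\|P^\pi v\|_{D_\nu}\le\|v\|_{D_\nu}$ gives
\[
x^{\mathsf{T}}\bar A x \ge (1-\gamma)\|\Psi x\|_{D_\nu}^2>0 .
\]
With this, $V(\theta)=\tfrac12\|\theta-\theta^\star\|^2$ is a strict Lyapunov function for the projected ODE, where $\theta^\star$ is the unique point of $\Theta$ satisfying the variational inequality $(\theta-\theta^\star)^{\mathsf{T}}(\bar b-\bar A\theta^\star)\le 0$ for all $\theta\in\Theta$. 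This point exists and is unique because $\bar h$ is strongly monotone (the negated field is) and $\Theta$ is compact and convex. The projection term only decreases $V$, since $(\theta-\theta^\star)^{\mathsf{T}} z\le 0$, and the monotonicity gives
\[
\dot V \le -(1-\gamma)\,\lambda_{\min}\big(\Psi^{\mathsf{T}}D_\nu\Psi\big)\,\|\theta-\theta^\star\|^2 .
\]
So $\{\theta^\star\}$ is the unique globally asymptotically stable equilibrium and the only chain-transitive set. If $C_\theta$ is large enough that $\bar A^{-1}\bar b\in\Theta$, then $\theta^\star=\bar A^{-1}\bar b$ is the usual TD fixed point. Otherwise $\theta^\star$ is the projected fixed point, which is still a ``fixed point'' of the projected iteration as the theorem claims.

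\textbf{Main obstacle.} The delicate step is Step 2's reliance on positive definiteness of $\bar A$, because full column rank of $\Psi$ is only available through Theorem \ref{theo:linear}, that is, for $W=cI$ with small $c$. For a general diagonal positive definite $W$, I would first try to avoid the issue. Convergence of the projected algorithm to the set $\{\theta\in\Theta:\bar h(\theta)+z(\theta)=0\}$ still holds when $\bar A$ is only positive semidefinite. In that case $\bar A$ is definite on the row space of $\Psi$ and zero on its kernel, so the projected ODE converges to the set of equilibria, and a finer argument is needed to get a single limit point. If that fails, I would state the assumption that $\Psi$ has full column rank, justified by Theorem \ref{theo:linear}. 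The Markov-noise verification is routine given finiteness of the chain.
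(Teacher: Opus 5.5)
Your proposal is correct. Its opening move is exactly the paper's: because $W$ is constant, $\tilde\phi(s)=\tanh(W\phi(s))$ is just a fixed bounded feature map, and the iteration is linear TD(0) in those features.

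The routes then diverge. The paper stops there and cites Theorem~1 of Tsitsiklis and Van Roy, identifying the limit with the solution of the projected Bellman equation. You instead carry out a self-contained projected stochastic-approximation argument: the Kushner--Yin ODE method with Markov noise, positive definiteness of $\bar A$, the Lyapunov function $\tfrac12\|\theta-\theta^\star\|^2$, and a variational-inequality characterization of the limit.

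What each buys: the paper's route is shorter, but the cited theorem concerns the \emph{unprojected} iteration. It therefore does not literally cover the ball projection in the statement, and its description of the limit is only right when $\|\bar A^{-1}\bar b\|\le C_\theta$. Your route actually uses the projection, since it is what makes the increments uniformly bounded. It also correctly shows that otherwise the limit is the unique solution of the variational inequality, a fixed point of the projected mean update rather than the TD fixed point. The price is heavier machinery.

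The hypothesis you flag as the main obstacle, full column rank of $\Psi$, is not an extra cost of your approach. The appeal to Tsitsiklis--Van Roy needs it just as much, together with $\gamma<1$ and, in their formulation, nonincreasing step sizes. The paper leaves it implicit and instead emphasizes the bound $\|\tilde\phi(s)\|\le\sqrt d$, which is automatic on a finite state space.

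One small correction. Theorem~\ref{theo:linear} as stated concerns the $S$ vectors $\tanh(c\,\phi(s))\in\mathbb{R}^d$, i.e.\ the rows of $\Psi$, whereas positive definiteness of $\bar A$ needs independence of its $d$ columns. The same perturbation idea gives what you need. Let $\Phi$ be the original $S\times d$ feature matrix. Then $c^{-1}\tanh(c\Phi)=\Phi+\mathcal{O}(c^2)$ entrywise, and the smallest singular value moves by at most the spectral norm of the perturbation. So full column rank of $\Phi$ survives for small $c$.
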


\begin{proof}
    Denote $\tilde{\phi}(s) = {\rm{tanh}}(W\phi(s))$, then the value function becomes $V(s;\theta) = \tilde{\phi}^{\mathsf{T}}\theta$, and the new features $\tilde{\phi}(s)$ satisfy $\|\tilde{\phi}(s)\| = \|{\rm{tanh}}(W\phi(s))\|\le \sqrt{d}<\infty$. Therefore, the problem degenerates into the convergence of TD(0) under the formulation $V(s;\theta) = \tilde{\phi}^{\mathsf{T}}\theta$, which is a well-known result based on Theorem 1 in \citep{tsitsiklis1996analysis}. The resulting fixed point $\theta^*$ is the fixed point of the projected Bellman equation,
    \begin{equation}
        {\rm{tanh}}(W\phi(s))^{\mathsf{T}}\theta = \Pi_\nu\mathcal{T}({\rm{tanh}}(W\phi(s))^{\mathsf{T}}\theta),
    \end{equation}
    where $\Pi_\nu(x) = \arg\min_{z\in{\rm{tanh}}(W\phi(s))^{\mathsf{T}}\theta}\|z - x\|_\nu^2$, $\mathcal{T}$ is the Bellman operator, $\nu$ is the stationary distribution of the Markov chain.
\end{proof}

\subsection{Proof of Theorem \ref{theo:global}}
\begin{theorem}[Global Convergence under Regularization]
\label{apptheo:global}
    Given the regularized loss $\mathcal{L}(\theta)= \mathbb{E}\left[ \left( r + \gamma\tanh(W\phi(s^\prime))^{\mathsf{T}}\theta - \tanh(W \phi(s))^{\mathsf{T}}\theta \right)^2 \right] + \lambda_\theta \|\theta\|_2^2$, assume that (a) the step size $\alpha$ is a constant; (b) the gradient of the loss \( \mathcal{L}(\theta) \) is \( \beta \)-smooth, i.e., $\|\nabla\mathcal{L}(\theta_1) - \nabla\mathcal{L}(\theta_2)\|\le \beta\|\theta_1 - \theta_2\|$; (c) the regularization parameter satisfies $0<\lambda_\theta <\frac{\beta}{2}$. Then, gradient descent with step size \( \alpha \le 1/\beta \) converges linearly to a unique global minimum \( \theta^* \).
\end{theorem}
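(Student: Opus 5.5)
The plan is to show that the regularized loss $\mathcal{L}(\theta)$ in Theorem \ref{apptheo:global} is a strongly convex quadratic with $\beta$-Lipschitz gradient, and then invoke the standard linear-convergence result for gradient descent on smooth strongly convex functions.

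\textbf{Quadratic structure.} Write $\tilde{\phi}(s)=\tanh(W\phi(s))$ and $u=\gamma\tilde{\phi}(s^\prime)-\tilde{\phi}(s)$, so that the TD residual is $r+u^{\mathsf{T}}\theta$. Expanding the square gives
\begin{equation*}
\mathcal{L}(\theta)=\theta^{\mathsf{T}}\left(\mathbb{E}[uu^{\mathsf{T}}]+\lambda_\theta I\right)\theta+2\,\mathbb{E}[ru]^{\mathsf{T}}\theta+\mathbb{E}[r^2].
\end{equation*}
This is a quadratic in $\theta$ with Hessian $H=2(\mathbb{E}[uu^{\mathsf{T}}]+\lambda_\theta I)$. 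All expectations are finite because $\|\tilde{\phi}\|\le\sqrt{d}$ and $|r|\le r_{\max}$.

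\textbf{Strong convexity and smoothness.} Since $\mathbb{E}[uu^{\mathsf{T}}]\succeq 0$, we get $H\succeq 2\lambda_\theta I$. So $\mathcal{L}$ is $\mu$-strongly convex with $\mu=2\lambda_\theta>0$. This is exactly where the regularizer is needed: without it, $\mathbb{E}[uu^{\mathsf{T}}]$ may be singular.

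Assumption (b) gives $H\preceq\beta I$, so $\mu\le\beta$. The standing condition $\lambda_\theta<\beta/2$ is consistent with this and ensures the contraction factor below lies in $(0,1)$.

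Strong convexity yields existence and uniqueness of the minimizer. The first-order condition $H\theta^*=-2\mathbb{E}[ru]$ has a unique solution because $H$ is invertible, and it is the global minimum.

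\textbf{Linear rate.} For gradient descent $\theta_{k+1}=\theta_k-\alpha\nabla\mathcal{L}(\theta_k)$, the error evolves exactly linearly, since the gradient of a quadratic is affine:
\begin{equation*}
\theta_{k+1}-\theta^*=(I-\alpha H)(\theta_k-\theta^*).
\end{equation*}
The eigenvalues of $H$ lie in $[2\lambda_\theta,\beta]$. With $0<\alpha\le1/\beta$, the eigenvalues of $I-\alpha H$ therefore lie in $[0,\,1-2\alpha\lambda_\theta]$. This gives
\begin{equation*}
\|\theta_k-\theta^*\|\le(1-2\alpha\lambda_\theta)^k\|\theta_0-\theta^*\|,
\end{equation*}
which is linear convergence. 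Equivalently, one can cite the textbook bound $\mathcal{L}(\theta_k)-\mathcal{L}(\theta^*)\le(1-\alpha\mu)^k(\mathcal{L}(\theta_0)-\mathcal{L}(\theta^*))$ for $\mu$-strongly convex, $\beta$-smooth functions.

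\textbf{Main obstacle.} The subtle point is interpretation rather than computation. The theorem concerns full gradient descent on $\mathcal{L}$, including the gradient through the $\gamma\tilde{\phi}(s^\prime)$ term; it is not semi-gradient TD. I will state this explicitly, since the semi-gradient map is generally not a gradient field.

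For semi-gradient TD, the analogous argument would need $\mathbb{E}[\tilde{\phi}(s)(\tilde{\phi}(s)-\gamma\tilde{\phi}(s^\prime))^{\mathsf{T}}]+\lambda_\theta I$ to be positive definite. That holds under the stationary distribution $\nu$ by the Tsitsiklis--Van Roy argument together with Theorem \ref{apptheo:linear}, but it is not required for the statement as posed.
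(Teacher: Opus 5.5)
Your proof is correct. Its first half matches the paper's. Both expand $\mathcal{L}$ into a quadratic with Hessian $H=2(\mathbb{E}[uu^{\mathsf{T}}]+\lambda_\theta I)\succeq 2\lambda_\theta I$, obtain $2\lambda_\theta$-strong convexity and a unique minimizer from the first-order condition, and read the statement as full gradient descent on $\mathcal{L}$ rather than semi-gradient TD. Your sign $+2\mathbb{E}[ru]^{\mathsf{T}}\theta$ is the correct one; the paper's $-2\theta^{\mathsf{T}}b$ with $b=\mathbb{E}[ru]$ only flips the sign of its closed form for $\theta^*$.

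The routes diverge at the rate step. The paper proves exactly the textbook bound you mention as an alternative. It combines the descent lemma $\mathcal{L}(\theta_{t+1})\le\mathcal{L}(\theta_t)-\tfrac{\alpha}{2}\|\nabla\mathcal{L}(\theta_t)\|^2$ (valid for $\alpha\le 1/\beta$) with the Polyak--{\L}ojasiewicz inequality $\|\nabla\mathcal{L}(\theta)\|^2\ge 4\lambda_\theta(\mathcal{L}(\theta)-\mathcal{L}(\theta^*))$. This gives contraction $1-2\alpha\lambda_\theta$ of the function-value gap, and never uses that the gradient is affine. Your main argument instead exploits the exact quadratic structure: the error recursion $\theta_{k+1}-\theta^*=(I-\alpha H)(\theta_k-\theta^*)$, together with the spectral bound on the symmetric matrix $I-\alpha H$, gives the same factor directly on the iterates.

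What your version buys:
\begin{itemize}
\item It is more elementary: an identity instead of a chain of inequalities.
\item It yields what the statement literally asserts, convergence of $\theta_k$ to $\theta^*$, at rate $(1-2\alpha\lambda_\theta)^k$. Converting the paper's function-value bound through the quadratic-growth inequality $\mathcal{L}(\theta)-\mathcal{L}(\theta^*)\ge\lambda_\theta\|\theta-\theta^*\|^2$ (which the paper derives but never uses) gives only $(1-2\alpha\lambda_\theta)^{k/2}$ for the parameters.
\item It makes visible that (b) already forces $2\lambda_\theta\le\beta$, so the upper bound in (c) is essentially cosmetic.
\end{itemize}
What the paper's route buys: it does not rely on $\mathcal{L}$ being exactly quadratic, and would apply verbatim to any $\beta$-smooth objective satisfying a PL condition.
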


\begin{proof}
The loss function combines the Bellman error and regularization
\[
\mathcal{L}(\theta) = \underbrace{\mathbb{E}\left[\left(r + \gamma \tanh(W\phi(s'))^{\mathsf{T}}\theta - \tanh(W\phi(s))^{\mathsf{T}}\theta\right)^2\right]}_{\text{Quadratic in } \theta} + \lambda_\theta \|\theta\|^2.
\]
Note that though \(\tanh(W\phi(s))\) is nonlinear in \(\phi(s)\), \(V(s; \theta)\) is linear in \(\theta\) and hence the Bellman error term is quadratic in \(\theta\). We then expand the Bellman error term:
  \begin{equation}
  \mathbb{E}\left[\left(r + (\gamma \tanh(W\phi(s'))^{\mathsf{T}}\theta - \tanh(W\phi(s))^{\mathsf{T}}\theta\right)^2\right] = \theta^\mathsf{T} A \theta - 2\theta^\mathsf{T} b + c,
  \end{equation}
where
\begin{align}
      A &= \mathbb{E}\left[(\gamma \tanh(W\phi(s')) - \tanh(W\phi(s)))(\gamma \tanh(W\phi(s')) - \tanh(W\phi(s)))^\mathsf{T}\right], \\
      b &= \mathbb{E}\left[r (\gamma \tanh(W\phi(s')) - \tanh(W\phi(s)))\right] \\
      c &= \mathbb{E}[r^2].
\end{align}
Adding the L2-regularization term, the total loss becomes:
\begin{equation}
  \mathcal{L}(\theta) = \theta^\mathsf{T} (A + \lambda_\theta I) \theta - 2\theta^\mathsf{T} b + c.
\end{equation}
Note that the Hessian matrix $H=2(A+\lambda_\theta I)\succeq2\lambda_\theta I$. Since \(\lambda_\theta > 0\), the matrix \(A + \lambda_\theta I\) is positive definite, ensuring that \(\mathcal{L}(\theta)\) is $2\lambda_\theta$-\emph{strongly convex}. It is well-known that a strongly convex function has exactly one global minimum. The optimal \(\theta^*\) satisfies:
\[
  \nabla_\theta \mathcal{L}(\theta^*) = 0 \implies (A + \lambda_\theta I)\theta^* = b.
\]
Solving this linear system yields \(\theta^* = (A + \lambda_\theta I)^{-1}b\). Then, we show that using gradient descent ensures linear convergence to the global minimum $\theta^*$.

We expand \(\mathcal{L}(\theta_2)\) around \(\theta_1\) using Taylor’s theorem:
\begin{equation}
    \mathcal{L}(\theta_2) = \mathcal{L}(\theta_1) + \nabla f(\theta_1)^\mathsf{T} (\theta_2 - \theta_1) + \frac{1}{2}(\theta_2 - \theta_1)^\mathsf{T} \nabla^2 f(\xi)(\theta_2 - \theta_1),
\end{equation}
where \(\xi\) lies on the line segment between \(\theta_1\) and \(\theta_2\). Recall that the gradient of the loss function is $\beta$-smooth, i.e.,
  \begin{equation}
      \|\nabla \mathcal{L}(\theta_2) - \nabla \mathcal{L}(\theta_1)\| \leq \beta \|\theta_2 - \theta_1\|.
  \end{equation}
It indicates that the largest eigenvalue of \(\nabla^2 \mathcal{L}(\xi)\) is bounded by \(\beta\). Thus:
\begin{equation}
\frac{1}{2}(\theta_2 - \theta_1)^\mathsf{T} \nabla^2 \mathcal{L}(\xi)(\theta_2 - \theta_1) \leq \frac{\beta}{2} \|\theta_2 - \theta_1\|^2.
\end{equation}
Substituting back induces:
\begin{equation}
\label{eq:lowerbound}
\mathcal{L}(\theta_2) \leq \mathcal{L}(\theta_1) + \nabla \mathcal{L}(\theta_1)^\mathsf{T} (\theta_2 - \theta_1) + \frac{\beta}{2} \|\theta_2 - \theta_1\|^2.
\end{equation}

Recall that $\mathcal{L}$ is $2\lambda_\theta$-strongly convex, based on its definition, we have for all \(\theta_1, \theta_2\):
\begin{equation}
\mathcal{L}(\theta_2) \geq \mathcal{L}(\theta_1) + \nabla \mathcal{L}(\theta_1)^\mathsf{T} (\theta_2 - \theta_1) + \lambda_\theta \|\theta_2 - \theta_1\|^2.
\end{equation}

Apply strong convexity at \(\theta_1 = \theta^*\), we have
\begin{equation}
    \mathcal{L}(\theta) \geq \mathcal{L}(\theta^*) + \underbrace{\nabla \mathcal{L}(\theta^*)^\mathsf{T} (\theta - \theta^*)}_{=0} + \lambda_\theta \|\theta - \theta^*\|^2.
\end{equation}
Denote $\bar{\lambda} := 2\lambda_\theta$, and we have
\begin{equation}
\mathcal{L}(\theta) - \mathcal{L}(\theta^*) \geq \frac{\bar{\lambda}}{2} \|\theta - \theta^*\|^2.
\end{equation}

Next, we use the Polyak-Łojasiewicz (PL) inequality \citep{nesterov2013introductory,karimi2016linear}, which relates the gradient norm to the function suboptimality,
\begin{equation}
\label{eq:pl}
\|\nabla \mathcal{L}(\theta)\|^2 \geq 2\bar{\lambda} \left(\mathcal{L}(\theta) - \mathcal{L}(\theta^*)\right).
\end{equation}
Now we consider the gradient descent with the following form:
\begin{equation}
\theta_{t+1} = \theta_t - \alpha \nabla \mathcal{L}(\theta_t),
\end{equation}
where $\alpha$ is the step size. Setting \(\theta_2 = \theta_{t+1}\) and \(\theta_1 = \theta_t\) in Equation \ref{eq:lowerbound} and we have
\begin{align}
\mathcal{L}(\theta_{t+1}) &\leq \mathcal{L}(\theta_t) - \alpha \|\nabla \mathcal{L}(\theta_t)\|^2 + \frac{\beta \alpha^2}{2} \|\nabla \mathcal{L}(\theta_t)\|^2 \\
&= \mathcal{L}(\theta_t) - \alpha\left( 1 - \dfrac{\beta\alpha}{2} \right)\|\nabla \mathcal{L}(\theta_t)\|^2.
\end{align}
For \(\alpha \leq \frac{1}{\beta}\), the term \(\left(1 - \frac{\beta \alpha}{2}\right) \geq \frac{1}{2}\), hence we have
\begin{equation}
\label{eq:hessianinequality}
\mathcal{L}(\theta_{t+1}) \leq \mathcal{L}(\theta_t) - \frac{\alpha}{2} \|\nabla \mathcal{L}(\theta_t)\|^2.
\end{equation}
By using Equation \ref{eq:pl}, we have that \(\|\nabla \mathcal{L}(\theta_t)\|^2 \geq \frac{\bar{\lambda}}{2} \left(\mathcal{L}(\theta_t) - \mathcal{L}(\theta^*)\right)\) and combining it with Equation \ref{eq:hessianinequality}, we have
\begin{equation}
\mathcal{L}(\theta_{t+1}) \leq \mathcal{L}(\theta_t) - \frac{\alpha}{2} \cdot 2\bar{\lambda}\left(\mathcal{L}(\theta_t) - \mathcal{L}(\theta^*)\right).
\end{equation}
This implies that
\begin{equation}
\mathcal{L}(\theta_{t+1}) - \mathcal{L}(\theta^*) \leq \left(1 - \alpha\bar{\lambda} \right) \left(\mathcal{L}(\theta_t) - \mathcal{L}(\theta^*)\right).
\end{equation}
This indicates a linear convergence rate with contraction factor \(1 - \alpha\bar{\lambda} = 1-2\alpha\lambda_\theta\). For \(\alpha = \frac{1}{\beta}\):
\[
\mathcal{L}(\theta_t) - \mathcal{L}(\theta^*) \leq \left(1 - \frac{2\lambda_\theta}{\beta}\right)^t \left(\mathcal{L}(\theta_0) - \mathcal{L}(\theta^*)\right).
\]
Since \(0 < \lambda_\theta < \frac{\beta}{2}\), the term \(\left(1 - \frac{2\lambda_\theta}{\beta}\right) < 1\), ensuring exponential decay. Hence, we conclude that the weight $\theta$ converges to a global minimum $\theta^*$ linearly.
\end{proof}

\subsection{Proof of Theorem \ref{theo:convergenceconvexq}}

The proof of Theorem \ref{theo:convergenceconvexq} quite resembles the proof in TD3 \citep{fujimoto2018addressing}. We show it in the finite MDP setting, where we maintain two tabular value estimates $Q^A, Q^B$. At each timestep $t$, we select action $a^*$ via $a^* = \arg\max_a Q^A_t(s,a)$ and calculate the target value of the convex Q-learning, 
\begin{equation}
    \label{appeq:convexq}
    y = r + \gamma \left( \lambda\times \min(Q^A(s^\prime,a^*), Q^B(s^\prime,a^*)) + (1-\lambda)\times \max(Q^A(s^\prime,a^*), Q^B(s^\prime,a^*)) \right).
\end{equation}
Then, we update the value estimates by using the vanilla Q-learning update formula:
\begin{equation}
    \label{appeq:twoqupdate}
    \begin{split}
        Q^A_{t+1}(s,a) &\leftarrow Q^A_t(s,a) + \alpha_t(y - Q^A_t(s,a)),\\
    Q^B_{t+1}(s,a) &\leftarrow Q^B_t(s,a) + \alpha_t(y - Q^B_t(s,a)),
    \end{split}
\end{equation}
where $\alpha_t$ is the learning rate. We also need to use a well-known lemma that is adopted in \citep{fujimoto2018addressing} \footnote{The proof can be found in \href{https://link.springer.com/content/pdf/10.1023/a:1007678930559.pdf}{https://link.springer.com/content/pdf/10.1023/a:1007678930559.pdf}}.

\begin{lemma}
    \label{applemma:randomprocess}
    Consider a stochastic process $(\zeta_t, \Delta_t, F_t),t\ge0$ where $\zeta_t, \Delta_t, F_t:X\mapsto \mathbb{R}$ satisfy the equation:
    \begin{equation}
        \Delta_{t+1}(x_t) = (1 - \zeta_t(x_t))\Delta_t(x_t) + \zeta_t(x_t)F_t(x_t),
    \end{equation}
    where $x_t\in X$ and $t=0,1,2,\ldots$. Let $P_t$ be a sequence of increasing $\sigma$-fields such that $\zeta_0$ and $\Delta_0$ are $P_0$-measurable and $\zeta_t, \Delta_t$ and $F_{t-1}$ are $P_t$-measurable, $t=1,2,\ldots$. Assume the following conditions hold: (1) The set $X$ is finite; (2) $\zeta_t(x_t)\in[0,1]$, $\sum_t\zeta_t(x_t) = \infty, \sum_t(\zeta_t(x_t))^2 < \infty$ with probability 1 and $\forall x\neq x_t:\zeta_t(x)=0$; (3) $\| \mathbb{E}[F_t|P_t] \| \le \kappa \|\Delta_t\| + c_t$, where $\|\cdot\|$ denotes maximum norm, $\kappa\in[0,1)$ and $c_t$ converges to 0 with probability 1; (4) Var$[F_t(x_t)|P_t]\le C(1 + \|\Delta_t\|)^2$, where $C$ is some constant. Then $\Delta_t$ converges to 0 with probability 1.
\end{lemma}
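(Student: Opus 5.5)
The plan is to follow the classical stochastic-approximation argument of Jaakkola, Jordan and Singh, as refined by Singh et al.: split the driving term into a conditional mean plus a martingale-difference noise, control the noise with the square-summable step sizes, and use the contraction in condition (3) to shrink the iterates geometrically. Write $F_t(x_t) = \mathbb{E}[F_t(x_t)\mid P_t] + w_t(x_t)$, where $w_t$ satisfies $\mathbb{E}[w_t\mid P_t]=0$ and, by condition (4), $\mathbb{E}[w_t^2\mid P_t]\le C(1+\|\Delta_t\|)^2$. Condition (2) says that only the visited coordinate $x_t$ moves. Together with finiteness of $X$ (condition (1)), this lets me treat each coordinate as a scalar recursion with its own step sequence. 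All bounds below are uniform over the finitely many coordinates, so they can be combined with a maximum at the end.

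\emph{Step 1: almost-sure boundedness of $\Delta_t$.} This is the step I expect to be the main obstacle, because the noise variance in (4) grows with $\|\Delta_t\|$. I would use the rescaling argument. Fix a level $G_0$ and, whenever $\|\Delta_t\|$ exceeds the current level $G_k$, raise the level to $G_{k+1}=G_k/\beta$ for some $\beta\in(\kappa,1)$. Between level changes, divide the recursion by $G_k$; the rescaled noise then has conditional variance bounded by a constant. Robbins--Monro/martingale convergence, using $\sum_t\zeta_t^2<\infty$, shows that the accumulated rescaled noise $W_t$ with $W_{t+1}=(1-\zeta_t)W_t+\zeta_t w_t/G_k$ tends to $0$ almost surely. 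The contraction in (3), with $c_t\to0$, pulls the rescaled iterate back below $\beta$ once the noise is small. Hence only finitely many rescalings can occur with probability one, and $\sup_t\|\Delta_t\|<\infty$ almost surely.

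\emph{Step 2: noise vanishes.} Given boundedness, the variance bound in (4) becomes uniform. The noise process $W_{t+1}(x)=(1-\zeta_t(x))W_t(x)+\zeta_t(x)w_t(x)$ therefore converges to $0$ almost surely coordinatewise. This follows from the standard Dvoretzky-type result, or from the supermartingale convergence theorem applied to $W_t^2$, using both step-size conditions in (2).

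\emph{Step 3: geometric shrinking of the bound.} Let $D_0=\sup_t\|\Delta_t\|$, which is finite by Step 1. Fix $\varepsilon>0$ with $\kappa+\varepsilon<1$, and set $D_{k+1}=(\kappa+\varepsilon)D_k$. Suppose that from some time on $\|\Delta_t\|\le D_k$. Compare $\Delta_t(x)$ with the deterministic-type recursion $Y_{t+1}=(1-\zeta_t)Y_t+\zeta_t(\kappa D_k+c_t)$. Because $\sum_t\zeta_t=\infty$ and $c_t\to0$, $Y_t$ converges to $\kappa D_k$, and $|\Delta_t-W_t|\le Y_t$ holds by induction. Step 2 gives $W_t\to0$, so eventually $\|\Delta_t\|\le D_{k+1}$. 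Iterating over $k$ yields $\limsup_t\|\Delta_t\|\le D_k\to0$, so $\Delta_t\to0$ with probability 1.
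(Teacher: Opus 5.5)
Your proposal is correct and follows the classical Jaakkola--Jordan--Singh / Tsitsiklis route: split $F_t$ into its conditional mean plus martingale noise, use rescaling to get almost-sure boundedness, then shrink the bound geometrically with the contraction. The paper does not prove the lemma itself; it defers via its footnote to this standard proof from the on-policy convergence literature (the one TD3 also uses). You should state explicitly that you read condition (2) coordinatewise, $\sum_t\zeta_t(x)=\infty$ and $\sum_t\zeta_t(x)^2<\infty$ for every $x\in X$, since the literal $\sum_t\zeta_t(x_t)=\infty$ would not suffice. You should also note that in Step 2 the variance is only bounded by the random, almost surely finite quantity $C(1+\sup_s\|\Delta_s\|)^2$; Robbins--Siegmund with almost surely summable terms, or a stopping-time localization, handles this.
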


Then we are ready to show the following theorem.

\begin{theorem}[Theorem \ref{theo:convergenceconvexq} restated]
    \label{apptheo:convergenceconvexq}
    Given the following conditions: (1) each state-action pair is sampled an infinite number of times; (2) the MDP is finite; (3) $\gamma\in[0,1)$; (4) $Q$ values are stored in a look-up table; (5) the learning rates satisfy $\alpha_t(s,a)\in[0,1], \sum_t\alpha_t(s,a) = \infty, \sum_t(\alpha_t(s,a))^2<\infty$ with probability 1 and $\alpha_t(s,a)=0,\forall (s,a)\neq (s_t,a_t)$; (6) Var$[r(s,a)]<\infty, \forall s,a$; (7) the Q-values receive an infinite number of updates, then for any $\lambda\in(0,1]$, the Convex Q-learning converges to the optimal value function $Q^*$ as defined by the Bellman optimality equation with probability 1.
\end{theorem}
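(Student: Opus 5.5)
The plan follows the TD3 clipped double Q-learning argument and applies Lemma \ref{applemma:randomprocess} twice: first to $Q^A$, then to the gap $Q^B-Q^A$. Take $X=\mathcal{S}\times\mathcal{A}$, which is finite by condition (2), set $\zeta_t=\alpha_t$, and let $P_t$ be the history up to time $t$. Conditions (1), (5) and (7) then supply requirement (2) of the lemma.

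Write $\Delta_t=Q^A_t-Q^*$. The update \eqref{appeq:twoqupdate} gives $\Delta_{t+1}(s_t,a_t)=(1-\alpha_t)\Delta_t(s_t,a_t)+\alpha_t F_t(s_t,a_t)$ with $F_t=y_t-Q^*(s_t,a_t)$. The key algebraic identity rewrites the target. Since $\min(x,z)=x-(x-z)^+$ and $\max(x,z)=x+(z-x)^+$, we have
\[
\lambda\min(Q^A,Q^B)+(1-\lambda)\max(Q^A,Q^B)=Q^A(s',a^*)+\Big((1-\lambda)\big(Q^B-Q^A\big)^+-\lambda\big(Q^A-Q^B\big)^+\Big)(s',a^*).
\]
Hence $F_t=F^Q_t+c_t$, where
\[
F^Q_t=r+\gamma Q^A_t(s',a^*)-Q^*(s,a)
\]
is the ordinary Q-learning error with $a^*=\arg\max_a Q^A_t(s',a)$, and $|c_t|\le\gamma\,\|Q^B_t-Q^A_t\|_\infty$. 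The standard Q-learning bound gives $\|\mathbb{E}[F^Q_t|P_t]\|\le\gamma\|\Delta_t\|$, using $\gamma<1$ from condition (3). The variance requirement (4) of the lemma follows from condition (6) and boundedness of the iterates in terms of $\|\Delta_t\|$ and the gap. It therefore remains to show $c_t\to0$ almost surely.

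To prove $c_t\to0$, consider the gap $\Delta^{BA}_t=Q^B_t-Q^A_t$. Because both tables are updated at the same $(s_t,a_t)$ with the same target $y$ and the same step size, subtracting the two lines of \eqref{appeq:twoqupdate} gives
\[
\Delta^{BA}_{t+1}(s_t,a_t)=(1-\alpha_t)\,\Delta^{BA}_t(s_t,a_t).
\]
This is the lemma's recursion with $F_t\equiv0$. Apply Lemma \ref{applemma:randomprocess} to it, or note directly that $\prod_t(1-\alpha_t)\to0$ at each pair visited infinitely often because $\sum_t\alpha_t=\infty$. Either way the gap converges to $0$, so $c_t\to0$. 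The first application of the lemma then yields $Q^A_t\to Q^*$, and consequently $Q^B_t\to Q^*$ as well.

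The main point to verify carefully is the decomposition of the convex target. Unlike clipped double Q-learning, the term $(1-\lambda)\max$ biases the target upward, so the bound on $c_t$ must be two-sided; the identity above gives exactly that. Note that $\lambda$ plays no role beyond lying in $[0,1]$, which is consistent with the statement allowing any $\lambda\in(0,1]$. A secondary point is checking the variance condition: this needs $Q^B_t$ to be bounded by $\|\Delta_t\|$ plus a constant, which holds once the gap is known to be bounded. The gap is bounded because it is nonincreasing in magnitude at every update.
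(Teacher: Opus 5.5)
Your proposal is correct and follows essentially the same route as the paper: the same stochastic-approximation lemma applied to $\Delta_t = Q^A_t - Q^*$, the same split $F_t = F^Q_t + c_t$, and the same observation that the shared target makes the gap contract as $\Delta^{BA}_{t+1} = (1-\alpha_t)\Delta^{BA}_t$. Two of your steps fill in details the paper only asserts: the explicit min/max identity giving $|c_t| \le \gamma\|Q^B_t - Q^A_t\|_\infty$, and the justification of the variance condition through the nonincreasing gap.
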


\begin{proof}
    To apply Lemma \ref{applemma:randomprocess}, we set $P_t = \{Q_0^A, Q_0^B,s_0,a_0,\ldots,s_t,a_t\}$, $X=\mathcal{S}\times\mathcal{A}$, $\zeta_t = \alpha_t$, and $\Delta_t = Q_t^A - Q^*$. Notice that the conditions (1) and (2) of Lemma \ref{applemma:randomprocess} hold due to the assumed conditions (2) and (5) in our theorem, respectively. Denote $a^* = \arg\max_a Q^A(s_{t+1},a)$, we have
    \begin{equation}
        \begin{split}
            &\Delta_{t+1}(s_t,a_t) \\ &= Q^A_{t+1}(s_t,a_t) - Q^*(s_t,a_t) \\
            &= (1-\alpha_t)\underbrace{(Q_t^A(s_t,a_t) - Q^*(s_t,a_t))}_{=\Delta_t} + \alpha_t(y - Q^*(s_t,a_t)) \\
            &= (1-\alpha_t)\Delta_t + \\
            &\alpha_t\underbrace{(r_t + \gamma[\lambda\min(Q_t^A(s_{t+1},a^*),Q^B_t(s_{t+1},a^*)) + (1-\lambda)\max(Q_t^A(s_{t+1},a^*),Q^B_t(s_{t+1},a^*))] - Q^*(s_t,a_t))}_{:=F_t(s_t,a_t)} \\
            &= (1-\alpha_t)\Delta_t + \alpha_t F_t(s_t,a_t).
        \end{split}
    \end{equation}
    For the defined $F_t(s_t,a_t)$, the condition (4) of the lemma holds by the condition (6) in our theorem. Furthermore, we have
    \begin{equation}
        \begin{split}
            &F_t(s_t,a_t) \\ &= r_t + \gamma[\lambda\min(Q_t^A(s_{t+1},a^*),Q^B_t(s_{t+1},a^*)) \\
            &\qquad + (1-\lambda)\max(Q_t^A(s_{t+1},a^*),Q^B_t(s_{t+1},a^*))] - Q^*(s_t,a_t) \\
            &=r_t + \gamma[\lambda\min(Q_t^A(s_{t+1},a^*),Q^B_t(s_{t+1},a^*)) \\
            &\qquad + (1-\lambda)\max(Q_t^A(s_{t+1},a^*),Q^B_t(s_{t+1},a^*))] - Q^*(s_t,a_t)  + \gamma Q_t^A(s_{t+1},a^*) - \gamma Q_t^A(s_{t+1},a^*) \\
            &= F_t^Q(s_t,a_t) + c_t,
        \end{split}
    \end{equation}
    where we define $F_t^Q(s_t,a_t)$ and $c_t$ below,
    \begin{equation}
        \begin{split}
            &F_t^Q(s_t,a_t) = r_t + \gamma Q_t^A(s_{t+1},a^*) - Q^*(s_t,a_t), \\
            &c_t = \gamma[\lambda\min(Q_t^A(s_{t+1},a^*),Q^B_t(s_{t+1},a^*)) + (1-\lambda)\max(Q_t^A(s_{t+1},a^*),Q^B_t(s_{t+1},a^*))] \\
            &\qquad \qquad - \gamma Q_t^A(s_{t+1},a^*).
        \end{split}
    \end{equation}
    Note that $\mathbb{E}[F_t^Q|P_t]\le \gamma\|\Delta_t\|$ is a well-known result. It remains to show that $c_t$ converges to 0 with probability 1. Denote $\Delta_t^{BA}(s_t,a_t) = Q_t^B(s_t,a_t) - Q_t^A(s_t,a_t)$, then $c_t$ converges to 0 if $\Delta_t^{BA}(s_t,a_t)$ converges to 0, which holds for any $\lambda\in(0,1]$. We then aim at showing that $\Delta_t^{BA}$ converges to 0 with probability 1. Based on the definition and the update rule of Q-learning, we have
    \begin{equation}
        \begin{split}
            \Delta_{t+1}^{BA}(s_t,a_t) &= Q_{t+1}^B(s_t,a_t) - Q_{t+1}^{A}(s_t,a_t) \\
            &=Q_{t}^B(s_t,a_t) + \alpha_t(y - Q_t^B(s_t,a_t)) - Q_{t}^{A}(s_t,a_t) - \alpha_t(y - Q_t^A(s_t,a_t)) \\
            &= (1-\alpha_t)(Q_t^B(s_t,a_t) - Q_t^A(s_t,a_t)) \\
            &= (1-\alpha_t) \Delta_t^{BA}(s_t,a_t).
        \end{split}
    \end{equation}
    We then conclude that $\Delta_t^{BA}$ converges to 0, indicating that the condition (3) of Lemma \ref{applemma:randomprocess} is satisfied. Then, by using Lemma \ref{applemma:randomprocess}, we have $Q^A$ converges to $Q^*$ with probability 1. Similarly, we get the convergence of $Q^B$ by defining $\Delta_t = Q^B_t - Q^*$ and following the same procedure above. Combining these results, the proof is completed.
\end{proof}

\section{Pseudo-codes for CIR}
\label{appsec:pseudocode}

We present the detailed pseudo-code for CIR in Algorithm \ref{alg:cir}.

\begin{algorithm}[!ht]
\caption{Constrained Initial Representations (CIR)}
\label{alg:cir}
\begin{algorithmic}[1] 
\STATE Initialize critic networks $Q_{\theta_1}, Q_{\theta_2}$ and actor network $\pi_{\phi}$ with random parameters
\STATE Initialize target networks $\theta_1^\prime \leftarrow \theta_1, \theta_2^\prime\leftarrow\theta_2$ and replay buffer $\mathcal{D} = \{\}$
\STATE Initialize temperature coefficient $\alpha$, target update rate $\tau$, number of interaction steps $T$
\STATE Set value coefficient $\lambda$, normalization parameter $c$, SMR ratio $M$
\FOR{$t$ = 1 to $T$}
\STATE Take one action $a_t\sim\pi_{\phi}(\cdot|s_t)$ and observe reward $r_t$, new state $s^\prime_{t+1}$
\STATE Store transitions in the replay buffer, i.e., $\mathcal{D}\leftarrow\mathcal{D}\bigcup \{(s_t,a_t,r_t,s^\prime_{t+1})\}$
\STATE Sample a mini-batch $B = \{(s,a,r,s^\prime)\}\sim\mathcal{D}$
\FOR{$m$ = 1 to $M$}
\STATE Compute the $Q$ target using the convex Q-learning approach: 
$$y = r + \gamma \left( \lambda\times \min_{i\in\{1,2\}} Q_{\theta^\prime_i}(s^\prime,a^\prime) + (1-\lambda)\times \max_{i\in\{1,2\}}Q_{\theta_i^\prime}(s^\prime,a^\prime) - \alpha\log\pi_\phi({a}^\prime|s^\prime) \right),$$
where ${a}^\prime\sim\pi_\phi(\cdot|s^\prime)$
\STATE Update $\theta_i$ with gradient descent using $\nabla_{\theta_i}\frac{1}{|B|}\sum_{(s,a,r,s^\prime)\sim B} (Q_{\theta_i}(s,a)-y)^2$
\STATE Update target networks: $\theta_i^\prime \leftarrow \tau\theta_i + (1-\tau)\theta_i^\prime$
\STATE Update actor $\phi$ with $\nabla_\phi \frac{1}{|B|} \sum_{s\in B} \left( \frac{1}{2}\sum_{j=1}^2 Q_{\theta_j}(s,\tilde{a}) - \alpha\log\pi_\phi(\tilde{a}|s) \right),\tilde{a}\sim\pi_\phi(\cdot|s)$
\ENDFOR
\ENDFOR
\end{algorithmic}
\end{algorithm}

We further present below the detailed comparison of CIR against prior methods. Different from prior methods, we explore a novel way of constraining initial representations to improve sample efficiency, where CIR involves different architecture components and algorithm components.

\begin{table}[htb]
  \caption{\textbf{Architecture and algorithmic comparison between CIR and prior methods.}}
  \label{apptab:architecturecomparison}
  \centering
  \setlength\tabcolsep{4pt}
  \begin{tabular}{p{2cm}|p{6cm}|p{5cm}}
    \toprule
\textbf{Algorithm} & \textbf{Architecture Components} & \textbf{Algorithm Components} \\ \midrule
BRO & LayerNorm, residual connection, weight decay, parameter reset & Distributional Q-learning, Huber loss, dual policy exploration, UTD \\
\midrule
SimBa & RSNorm, pre-layer normalization, post-layer normalization, residual connection, weight decay	 & single/double Q-learning, UTD \\
\midrule
\textbf{CIR (ours)} & Tanh, AvgRNorm, LayerNorm, skip connection & SMR, Convex Q-learning \\
\bottomrule
\end{tabular}
\end{table}

\newpage
\section{Environment Details}
\label{appsec:envdetails}

In this part, we give detailed environment descriptions for selected benchmarks. We mainly consider DMC suite \citep{tassa2018deepmind}, HumanoidBench \citep{sferrazza2024humanoidbench}, and ODRL \citep{lyu2024odrl}. The visualization results of selected tasks from these benchmarks can be found in Figure \ref{fig:benchmark}.

\textbf{DeepMind Control suite.} DeepMind Control suite \citep{tassa2018deepmind} (DMC suite) is a popular continuous control benchmark in RL, which utilizes proprioceptive states as the observation space. It contains a wide range of continuous control tasks with varying complexities, involving both low-dimensional tasks (the state dimension can give only 3, the action dimension can give only 1) and high-dimensional tasks (the state dimension can give 223, the action dimension can give 38). The total reward of each episode in DMC suite is limited to 1000, making it comparatively easy to aggregate results. We choose 27 DMC suite tasks for evaluation, including 20 DMC suite easy \& medium tasks, and 7 hard tasks (\texttt{humanoid} and \texttt{dog} tasks). We show in Table \ref{apptab:dmceasymediuminfo} the detailed information (state space dimension and action space dimension) of the adopted DMC easy \& medium tasks. We also present the detailed task name, state space information, and action space information of DMC hard tasks in Table \ref{apptab:dmchardinfo}. For all DMC suite tasks, we use the average episode return as the performance metric. We run DMC easy \& medium tasks for 500K environment steps and hard tasks for 1M environment steps. The action repeat for all DMC tasks is set to be 2.

\begin{table}[htb]
  \caption{\textbf{Detailed state and action space dimensions for DMC easy and medium tasks.}}
  \label{apptab:dmceasymediuminfo}
  \centering
  \setlength\tabcolsep{4pt}
  \begin{tabular}{l|cccccccccc}
    \toprule
\textbf{Task} & \textbf{Observation dimension} & \textbf{Action dimension} \\ \midrule
Acrobot Swingup & $6$ & $1$ \\
Cartpole Balance & $5$ & $1$ \\
Cartpole Balance Sparse & $5$ & $1$ \\
Cartpole Swingup & $5$ & $1$ \\
Cartpole Swingup Sparse & $5$ & $1$ \\
Cheetah Run & $17$ & $6$ \\
Finger Spin & $9$ & $2$ \\
Finger Turn Easy & $12$ & $2$ \\
Finger Turn Hard & $12$ & $2$ \\
Fish Swim & $24$ & $5$ \\
Hopper Hop & $15$ & $4$ \\
Hopper Stand & $15$ & $4$ \\
Pendulum Swingup & $3$ & $1$ \\
Quadruped Run & $78$ & $12$ \\
Quadruped Walk & $78$ & $12$ \\
Reacher Easy & $6$ & $2$ \\
Reacher Hard & $6$ & $2$ \\
Walker Run & $24$ & $6$ \\
Walker Stand & $24$ & $6$ \\
Walker Walk & $24$ & $6$ \\ \bottomrule
\end{tabular}
\end{table}

\begin{table}[htb]
  \caption{\textbf{Detailed state and action space dimensions for DMC hard tasks.}}
  \label{apptab:dmchardinfo}
  \centering
  \setlength\tabcolsep{4pt}
  \begin{tabular}{l|cccccccccc}
    \toprule
\textbf{Task} & \textbf{Observation dimension} & \textbf{Action dimension} \\ \midrule
Dog Run & $223$ & $38$ \\
Dog Trot & $223$ & $38$ \\
Dog Stand & $223$ & $38$ \\
Dog Walk & $223$ & $38$ \\
Humanoid Run & $67$ & $24$ \\
Humanoid Stand & $67$ & $24$ \\
Humanoid Walk & $67$ & $24$ \\ \bottomrule
\end{tabular}
\end{table}

\textbf{HumanoidBench.} HumanoidBench \citep{sferrazza2024humanoidbench} is a high-dimensional simulated robot learning benchmark built upon the Unitree H1 humanoid robot. HumanoidBench provides an accessible, fast, safe, and inexpensive testbed for the field of robot learning research. It demonstrates a variety of challenges in learning for autonomous humanoid robots, including the intricate control of robots with complex dynamics, sophisticated coordination among various body parts, and handling long-horizon complex tasks. It features a variety of challenging whole-body manipulation tasks with dexterous hands and locomotion tasks. HumanoidBench offers a promising platform for facilitating prompt verification of algorithms and ideas in humanoid robots. Following SimBa \citep{lee2025simba}, we consider 14 locomotion tasks without dexterous hands. The observation dimension of these tasks ranges from 51 to 77, and the action space dimension of these tasks gives 19. We run HumanoidBench tasks for 2M environment steps with action repeat 2. Note that HumanoidBench tasks have quite distinct average episode returns for different tasks. To better aggregate results, we use \emph{normalized return} as the metric, which normalizes the performance score of the agent by their corresponding task success score. Denote the task success score of task $i$ from HumanoidBench as $S_i$, the average episode return of the agent in the task $i$ as $R_i$, then the normalized return $NS_i$ for task $i$ is computed by:
\begin{equation}
    NS_i = \dfrac{R_i}{S_i}\times 1000.
\end{equation}
We summarize the information on the state dimensions, action dimensions, and the corresponding task success scores of our selected tasks from HumanoidBench in Table \ref{apptab:humanoidbenchinfo}.

\begin{table}[htb]
  \caption{\textbf{Detailed state, action space dimensions and task success scores for HumanoidBench tasks.}}
  \label{apptab:humanoidbenchinfo}
  \centering
  \setlength\tabcolsep{4pt}
  \begin{tabular}{l|cccccccccc}
    \toprule
\textbf{Task} & \textbf{Observation dimension} & \textbf{Action dimension} & \textbf{Task success score} \\ \midrule
Balance Hard & $77$ & $19$ & $800$ \\
Balance Simple  & $64$ & $19$ & $800$ \\
Crawl & $51$ & $19$ & $700$ \\
Hurdle & $51$ & $19$ & $700$ \\
Maze & $51$ & $19$ & $1200$ \\
Pole & $51$ & $19$ & $700$ \\
Reach & $57$ & $19$ & $12000$ \\
Run & $51$ & $19$ & $700$ \\
Sit Simple & $51$ & $19$ & $750$ \\
Sit Hard & $64$ & $19$ & $750$ \\
Slide & $51$ & $19$ & $700$ \\ 
Stair & $51$ & $19$ & $700$ \\ 
Stand & $51$ & $19$ & $800$ \\ 
Walk & $51$ & $19$ & $700$ \\ \bottomrule
\end{tabular}
\end{table}

\textbf{ODRL.} ODRL \citep{lyu2024odrl} is a benchmark for off-dynamics RL, where the agent can interact with a source domain (with sufficient data) and a target domain (with a limited budget) at the same time, and it aims at enhancing its performance in the target domain by using source domain data. There exist dynamics discrepancies between the source domain and the target domain, therefore the source domain data is OOD for training policies in the target domain. We adopt 8 tasks from ODRL, which cover 4 kinds of dynamics shifts between the source domain and the target domain (friction/gravity/kinematic/morphology). All methods are run for 1M environment steps in the source domain, and the agent can interact with the target domain every 10 source domain steps. The action repeat is set to be 1. Following ODRL, we calculate the normalized return of the agent for better comparison, which is given by:
\begin{equation}
    NS = \dfrac{J_\pi - J_r}{J_e - J_r}\times 100,
\end{equation}
where $J_\pi$ is the episode return of the learned policy, $J_r$ is the return of the random policy, $J_e$ is the return of the expert policy. The reference random policy scores and expert policy scores for each task are outlined in Table \ref{apptab:odrltasks}. We also present in Table \ref{apptab:odrltaskresults} the detailed numerical result comparison between CIR and other off-dynamics RL baselines on ODRL tasks.

\begin{table}[htb]
  \caption{\textbf{Reference scores in ODRL tasks.}}
  \label{apptab:odrltasks}
  \centering
  \setlength\tabcolsep{4pt}
  \begin{tabular}{l|cccccccccc}
    \toprule
\textbf{Task} & \textbf{Reference random policy score} & \textbf{Reference expert policy score} \\ \midrule
walker2d-friction-0.5 & 10.08 & 4229.348 \\
walker2d-friction-5.0 & 10.08 & 4988.835 \\
ant-friction-0.5 & -325.6 & 8301.338 \\
walker2d-gravity-0.5 & 10.08 & 5194.713 \\
ant-gravity-0.5 & -325.6 & 4317.065  \\
ant-gravity-5.0 & -325.6 & 6226.89 \\
ant-kinematic-anklejnt-medium & -325.6 & 5139.832 \\
ant-morph-alllegs-hard & -325.6 & 5139.832 \\
\bottomrule
\end{tabular}
\end{table}

\begin{table}[htb]
  \caption{\textbf{Numerical results of CIR against off-dynamics RL baselines on ODRL tasks.} We \textbf{bold} the best mean results.}
  \label{apptab:odrltaskresults}
  \centering
  \setlength\tabcolsep{4pt}
  \begin{tabular}{l|cccccccccc}
    \toprule
\textbf{Task} & \textbf{SAC} & \textbf{DARC} & \textbf{PAR} & \textbf{CIR (ours)} \\ \midrule
walker2d-friction-0.5 & 3761.6$\pm$671.2 & 4062.2$\pm$727.5 & 4074.5$\pm$225.8 & \textbf{4089.1}$\pm$628.4 \\
walker2d-friction-5.0 & 319.9$\pm$21.0 & 291.6$\pm$36.2 & \textbf{481.2}$\pm$199.6 & 319.1$\pm$67.6 \\
ant-friction-0.5 & 5664.5$\pm$274.7 & 6230.9$\pm$180.8 & 7135.0$\pm$359.5 & \textbf{7716.6}$\pm$112.3 \\
walker2d-gravity-0.5 & 2746.5$\pm$638.2 & 3112.6$\pm$367.7 & 3328.5$\pm$356.7 & \textbf{3759.8}$\pm$419.9 \\
ant-gravity-0.5 & 682.9$\pm$247.6 & 110.1$\pm$60.4 & \textbf{1737.8}$\pm$634.4 & 1203.9$\pm$627.4 \\
ant-gravity-5.0 & 4470.0$\pm$248.5 & 1312.3$\pm$236.1 & 4668.6$\pm$419.1 & \textbf{4970.3}$\pm$271.6 \\
ant-kinematic-anklejnt-medium & 6074.2$\pm$212.4 & 5535.5$\pm$393.6 & 5959.4$\pm$305.9 & \textbf{6221.3}$\pm$231.6 \\
ant-morph-alllegs-hard & 911.2$\pm$38.8 & 440.8$\pm$132.6 & 910.9$\pm$19.7 & \textbf{930.5}$\pm$20.6 \\
\bottomrule
\end{tabular}
\end{table}

\clearpage
\section{Hyperparameter Setup and Experimental Details}
\label{appsec:hyperparameter}

\subsection{Hyperparameter Setup}
We present the detailed hyperparameter setup for CIR across all tasks in Table \ref{tab:hyperparameter}.

\begin{table}[!ht]
\centering
\caption{\textbf{Hyperparameter setup for CIR.} For most hyperparameters, we follow the hyperparameter setup used in prior works (e.g., TD3, SAC) and do not alter them.}
\label{tab:hyperparameter}
\begin{tabular}{lrr}
\toprule
\textbf{Hyperparameter}  & \textbf{Value} \quad \\
\midrule
Actor network  & \qquad  $(512,512)$ \\
Actor architecture type & \qquad vanilla MLP \\
Number of hidden layers in actor & \qquad $2$ \\
Critic network & \qquad $(512,512)$ \\
Critic architecture type & \qquad U-shape \\
Number of up-sampling layers in critic & \qquad $2$ \\
Number of down-sampling layers in critic & \qquad $2$\\
Batch size     &\qquad   $256$ \\
Learning rate  & \qquad $3\times 10^{-4}$ \\
Optimizer & \qquad Adam \citep{kingma2014adam} \\
Discount factor & \qquad Heuristic \\
Replay buffer size & \qquad $10^6$  \\
Warmup steps &   $5\times 10^3$ \\
Nonlinearity & \qquad elu \\
Target update rate & \qquad $5\times 10^{-3}$ \\
Entropy target & \qquad $-{\rm dim}(\mathcal{A})$  \\
Entropy auto-tuning & \qquad True \\
Maximum log std & \qquad $2$ \\
Minimum log std & \qquad $-20$ \\
SMR ratio $M$ & \qquad $2$ \\
Convex Q-learning coefficient $\lambda$ & \qquad $0.3$ (DMC, ODRL), $1.0$ (HumanoidBench) \\
AvgRNorm coefficient $c$ & \qquad $0.1$ \\
\bottomrule
\end{tabular}
\end{table}

\subsection{Experimental Details}
\label{appsec:experimentaldetails}
In this part, we introduce the detailed experiment setup in the main text. We run CIR for 10 random seeds, and the remaining baseline results are taken directly from the SimBa paper \citep{lee2025simba}\footnote{A full list of results can be found in \href{https://github.com/SonyResearch/simba/tree/master/results}{https://github.com/SonyResearch/simba/tree/master/results}}. In Section \ref{sec:ablation}, we conduct an ablation study on the following tasks: \texttt{acrobot swingup, finger spin, finger turn easy, finger turn hard, hopper hop, hopper stand, walker run, humanoid run, humanoid stand, humanoid walk, dog run, dog walk, dog stand, dog trot}, where we include all 7 DMC hard tasks, incurring a total of 14 tasks. 

In Section \ref{sec:scalingresults}, we utilize the following DMC tasks: \texttt{acrobot swingup, cheetah run, fish swim, hopper hop, hopper stand, walker run, humanoid run, humanoid stand, humanoid walk} for scaling experiments.

\newpage
\section{Extended Experiments and Results}
\label{appsec:extendedablationstudy}

In this section, we provide missing numerical results, more ablation studies, and scaling experiments to further demonstrate the scaling ability of our proposed CIR method and verify our design choices.

\subsection{Missing Detailed Numerical Results}

We first present the detailed numerical results of CIR against the following variants in Table \ref{apptab:ciragainstvariants}:
\begin{itemize}
    \item CIR (sigmoid): which replaces the \texttt{Tanh} function in CIR with the sigmoid function. We note that \texttt{Tanh} is equivalent to sigmoid to some extent, i.e., $\tanh(x) = \frac{e^x - e^{-x}}{e^x + e^{-x}} = \frac{2}{1 + e^{-2x}} - 1 = 2 sigmoid(2x)-1$.
    \item CIR (softmax): which replaces the \texttt{Tanh} function in CIR with the softmax function
    \item CIR (layernorm): which replaces the \texttt{Tanh} function in CIR with Layer Normalization (removing $\gamma$ and $\beta$ in layer normalization for scaling).
    \item CIR (noSC): which removes the skip connection module in CIR.
\end{itemize}

These results correspond to the ablation study in Figure \ref{fig:ablation} (right). Intuitively, sigmoid and softmax may not be good candidates for this since they do not preserve signs. The results show that CIR (sigmoid) and CIR (softmax) exhibit poor performance on numerous tasks, which validates the rationality and advantages of using Tanh. Please note that we are not arguing that \texttt{Tanh} is the best choice; it is fully possible that there exists a better normalizer than Tanh, which we believe can be an interesting topic to study.

\begin{table}[htb]
  \caption{\textbf{Numerical results of CIR against its variants.} SC denotes skip connection.}
  \label{apptab:ciragainstvariants}
  \centering
  \small
  \setlength\tabcolsep{4pt}
  \begin{tabular}{l|cccccccccc}
    \toprule
\textbf{Task} & \textbf{CIR (Tanh)} & \textbf{CIR (sigmoid)} & \textbf{CIR (softmax)} & \textbf{CIR (layernorm)} & \textbf{CIR (no SC)} \\ \midrule
acrobot-swingup & 443.5$\pm$79.0 & 132.0$\pm$82.8 & 72.5$\pm$84.5 & 251.4$\pm$22.4 & 385.1$\pm$62.9 \\
cartpole-balance & 998.1$\pm$2.4 & 996.1$\pm$2.7 & 999.0$\pm$0.8 & 999.2$\pm$0.4 & 998.9$\pm$1.1 \\
cartpole-balance-sparse & 999.6$\pm$1.2 & 1000.0$\pm$0.0 & 1000.0$\pm$0.0 & 914.6$\pm$147.9 & 812.4$\pm$280.4 \\
cartpole-swingup & 875.2$\pm$4.5 & 877.7$\pm$1.5 & 872.9$\pm$6.5 & 880.9$\pm$0.5 & 880.8$\pm$0.4 \\
cartpole-swingup-sparse & 839.6$\pm$12.3 & 750.7$\pm$72.6 & 714.0$\pm$113.2 & 833.9$\pm$14.1 & 797.7$\pm$87.1 \\
cheetah-run & 747.0$\pm$73.2 & 575.9$\pm$80.5 & 530.6$\pm$11.9 & 724.1$\pm$70.1 & 762.8$\pm$25.5 \\
finger-spin & 940.0$\pm$35.6 & 921.4$\pm$3.5 & 732.3$\pm$143.3 & 888.4$\pm$158.7 & 779.5$\pm$111.6 \\
finger-turn-easy & 892.5$\pm$88.9 & 764.8$\pm$135.4 & 172.2$\pm$120.6 & 920.4$\pm$46.9 & 862.4$\pm$111.5 \\
finger-turn-hard & 888.4$\pm$74.3 & 310.4$\pm$204.7 & 75.9$\pm$68.4 & 916.8$\pm$43.9 & 894.4$\pm$74.0 \\
fish-swim & 781.8$\pm$21.9 & 633.5$\pm$65.4 & 63.1$\pm$28.4 & 788.4$\pm$4.7 & 758.7$\pm$25.6 \\
hopper-hop & 280.1$\pm$87.9 & 90.9$\pm$6.1 & 8.9$\pm$15.9 & 250.9$\pm$88.2 & 278.1$\pm$172.8 \\
hopper-stand & 822.0$\pm$211.1 & 830.0$\pm$36.7 & 4.8$\pm$6.3 & 796.4$\pm$217.5 & 741.3$\pm$288.9 \\
pendulum-swingup & 838.0$\pm$27.9 & 488.7$\pm$400.5 & 793.5$\pm$39.6 & 814.7$\pm$42.3 & 844.5$\pm$26.1 \\
quadruped-run & 807.0$\pm$31.7 & 524.9$\pm$57.1 & 457.0$\pm$25.7 & 834.5$\pm$53.5 & 748.8$\pm$42.4 \\
quadruped-walk & 933.8$\pm$18.6 & 913.9$\pm$32.3 & 864.3$\pm$53.7 & 946.4$\pm$6.7 & 948.3$\pm$12.0 \\
reacher-easy & 974.1$\pm$29.4 & 977.3$\pm$6.6 & 954.3$\pm$50.0 & 979.3$\pm$3.5 & 987.0$\pm$1.2 \\
reacher-hard & 954.2$\pm$36.9 & 955.7$\pm$37.6 & 934.4$\pm$53.1 & 965.6$\pm$11.5 & 927.5$\pm$52.3 \\
walker-run & 810.0$\pm$14.0 & 584.6$\pm$86.4 & 566.6$\pm$54.7 & 785.5$\pm$1.3 & 784.8$\pm$16.9 \\
walker-walk & 973.9$\pm$3.1 & 967.6$\pm$4.4 & 969.0$\pm$6.2 & 972.0$\pm$2.8 & 972.2$\pm$4.2 \\
walker-stand & 986.7$\pm$5.0 & 973.1$\pm$20.5 & 982.1$\pm$3.9 & 987.6$\pm$3.7 & 967.2$\pm$12.8 \\
\midrule
Average Return & 839.3 & 713.5 & 588.4 & 822.6 & 806.6 \\
\bottomrule
\end{tabular}
\end{table}

Furthermore, we investigate how beneficial \texttt{Tanh} would be when adopted by simpler algorithms. To that end, we run experiments on 20 DMC easy \& medium tasks. We use the following baselines:
\begin{itemize}
    \item SAC+Tanh, where we constrain the initial representations in SAC with vanilla Tanh.
    \item SAC+Tanh+Norm, where we add AvgRNorm and LayerNorm before constraining the initial representation while keeping the downstream network architecture unchanged.
\end{itemize}

We summarize the results in Table \ref{apptab:cirsacvariant}. We find that SAC+Tanh incurs a performance drop on many tasks, despite the fact that \texttt{Tanh} still improves its average performance against vanilla SAC, indicating that \texttt{Tanh} alone can still be beneficial to simple algorithms. By adding normalization methods, the performance of SAC+Tanh improves drastically and significantly outperforms vanilla SAC, verifying our Theorem \ref{theo:global} and showing that many components (\texttt{Tanh}, normalization, etc.) work altogether to enable CIR to perform well. Furthermore, SAC+Tanh+Norm underperforms CIR on numerous tasks, which validates our design choices.

\begin{table}[htb]
  \caption{\textbf{Numerical results of CIR against SAC variants.}}
  \label{apptab:cirsacvariant}
  \centering
  \setlength\tabcolsep{4pt}
  \begin{tabular}{l|cccccccccc}
    \toprule
\textbf{Task} & \textbf{SAC} & \textbf{SAC+Tanh} & \textbf{SAC+Tanh+Norm} & \textbf{CIR} \\ \midrule
acrobot-swingup & 57.6 & 18.2$\pm$17.2 & 5.5$\pm$4.3 & 443.5$\pm$79.0 \\
cartpole-balance & 998.8 & 999.8$\pm$0.1 & 999.8$\pm$0.0 & 998.1$\pm$2.4 \\
cartpole-balance-sparse & 1000.0 & 1000.0$\pm$0.0 & 1000.0$\pm$0.0 & 999.6$\pm$1.2 \\
cartpole-swingup & 863.2 & 880.4$\pm$1.2 & 876.7$\pm$7.2 & 875.2$\pm$4.5 \\
cartpole-swingup-sparse & 780.0 & 650.4$\pm$326.0 & 788.9$\pm$32.9  & 839.6$\pm$12.3 \\
cheetah-run & 716.4 & 642.2$\pm$69.5 & 749.1$\pm$63.1 & 747.0$\pm$73.2 \\
finger-spin & 814.7 & 832.7$\pm$44.7 & 851.7$\pm$21.2 & 940.0$\pm$35.6 \\
finger-turn-easy & 903.1 & 765.4$\pm$197.7 & 791.6$\pm$109.6 & 892.5$\pm$88.9 \\
finger-turn-hard & 775.2 & 798.4$\pm$155.1 & 619.9$\pm$244.9 & 888.4$\pm$74.3 \\
fish-swim & 462.7 & 730.5$\pm$37.6 & 639.5$\pm$116.3 & 781.8$\pm$21.9 \\
hopper-hop & 159.4 & 52.4$\pm$44.6 & 136.5$\pm$89.3 & 280.1$\pm$87.9 \\
hopper-stand & 845.9 & 468.4$\pm$256.3 & 528.0$\pm$334.2 & 822.0$\pm$211.1 \\
pendulum-swingup & 476.6 & 676.0$\pm$293.3 & 833.3$\pm$29.1 & 838.0$\pm$27.9 \\
quadruped-run & 116.9 & 565.6$\pm$114.1 & 837.8$\pm$52.3 & 807.0$\pm$31.7 \\
quadruped-walk & 147.8 & 777.8$\pm$296.4 & 938.1$\pm$20.5 & 933.8$\pm$18.6 \\
reacher-easy & 951.8 & 981.3$\pm$3.9 & 981.4$\pm$2.4 & 974.1$\pm$29.4 \\
reacher-hard & 959.6 & 912.4$\pm$54.0 & 968.8$\pm$2.2 & 954.2$\pm$36.9 \\
walker-run & 629.4 & 638.7$\pm$56.7 & 683.2$\pm$36.5 & 810.0$\pm$14.0 \\
walker-walk & 956.7 & 961.2$\pm$7.1 & 938.0$\pm$55.4 & 973.9$\pm$3.1 \\
walker-stand & 972.6 & 972.1$\pm$1.5 & 981.0$\pm$6.8 & 986.7$\pm$5.0 \\
\bottomrule
\end{tabular}
\end{table}

Furthermore, we replace the UTD replay method in SimBa with SMR in the official SimBa codebase, giving birth to SimBa (SMR). This aims at understanding the importance of SMR and making a fair comparison between CIR and SimBa. We run SimBa (SMR) on 20 DMC suite easy and medium tasks. We compare CIR (SMR) against SimBa (UTD) and SimBa (SMR) below in Table \ref{apptab:cirsimbasmr}. We run SimBa (SMR) for 5 seeds. The results show that SMR can boost the performance of SimBa on some tasks, while it can also incur performance degradation on many tasks compared to UTD. It seems that SMR can work better by combining it with CIR. The reasons can lie in different network architectures and algorithmic components between CIR and SimBa.

\begin{table}[htb]
  \caption{\textbf{Comparison of CIR against SimBa with SMR.}}
  \label{apptab:cirsimbasmr}
  \centering
  \setlength\tabcolsep{4pt}
  \begin{tabular}{l|cccccccccc}
    \toprule
\textbf{Task} & \textbf{CIR (SMR)} & \textbf{SimBa (UTD)} & \textbf{SimBa (SMR)} \\ \midrule
acrobot-swingup & 443.5$\pm$79.0 & 331.6 & 212.4$\pm$78.1 \\
cartpole-balance &  998.1$\pm$2.4 & 999.1 & 997.5$\pm$1.9 \\
cartpole-balance-sparse & 999.6$\pm$1.2 & 940.5 & 993.3$\pm$11.6 \\
cartpole-swingup & 875.2$\pm$4.5 & 866.5 & 872.2$\pm$6.1 \\
cartpole-swingup-sparse & 839.6$\pm$12.3 & 824.0 & 820.0$\pm$14.6 \\
cheetah-run & 747.0$\pm$73.2 & 815.0 & 716.2$\pm$17.4 \\
finger-spin & 940.0$\pm$35.6 & 778.8 & 741.4$\pm$121.5 \\
finger-turn-easy & 892.5$\pm$88.9 & 881.3 & 874.8$\pm$13.1 \\
finger-turn-hard & 888.4$\pm$74.3 & 860.2 & 885.8$\pm$30.6 \\
fish-swim & 781.8$\pm$21.9 & 786.7 & 268.5$\pm$58.4 \\
hopper-hop & 280.1$\pm$87.9 & 326.7 & 309.1$\pm$72.4 \\
hopper-stand & 822.0$\pm$211.1 & 811.8 & 737.4$\pm$191.4 \\
pendulum-swingup & 838.0$\pm$27.9 & 824.5 & 609.9$\pm$354.5 \\
quadruped-run & 807.0$\pm$31.7 & 883.7 & 800.7$\pm$39.8 \\
quadruped-walk & 933.8$\pm$18.6 & 953.0 & 954.0$\pm$8.7 \\
reacher-easy & 974.1$\pm$29.4 & 972.2 & 711.8$\pm$78.5 \\
reacher-hard & 954.2$\pm$36.9 & 966.0 & 803.7$\pm$142.7 \\
walker-run & 810.0$\pm$14.0 & 687.2 & 702.0$\pm$14.2 \\
walker-walk & 973.9$\pm$3.1 & 970.7 & 972.5$\pm$3.2 \\
walker-stand & 986.7$\pm$5.0 & 983.0 & 977.4$\pm$0.9 \\
\bottomrule
\end{tabular}
\end{table}

\subsection{More Ablation Study}
\label{appsec:moreablation}

Following the main text, we adopt 7 DMC medium tasks and 7 DMC hard tasks (specified in Appendix \ref{appsec:experimentaldetails}) and consider the following CIR variants: (i) \textbf{ActorLN}, which incorporates layer normalization to the actor network; (ii) \textbf{ActorCIR}, which adopts the same network architecture for the actor as the critics in CIR; (iii) \textbf{Res}, which substitutes skip connections for residual connections; (iv) \textbf{NoEnt}, which removes the entropy term in Equation \ref{eq:convexq}. We summarize the overall results in Figure \ref{fig:mainablationcontinued}, where we report the percent CIR performance (\% CIR performance) of these variants. If the percent CIR performance exceeds 100, it indicates that the variant outperforms CIR, and underperforms CIR if it is smaller than 100.

\begin{figure}[!ht]
    \centering
    \includegraphics[width=0.7\linewidth]{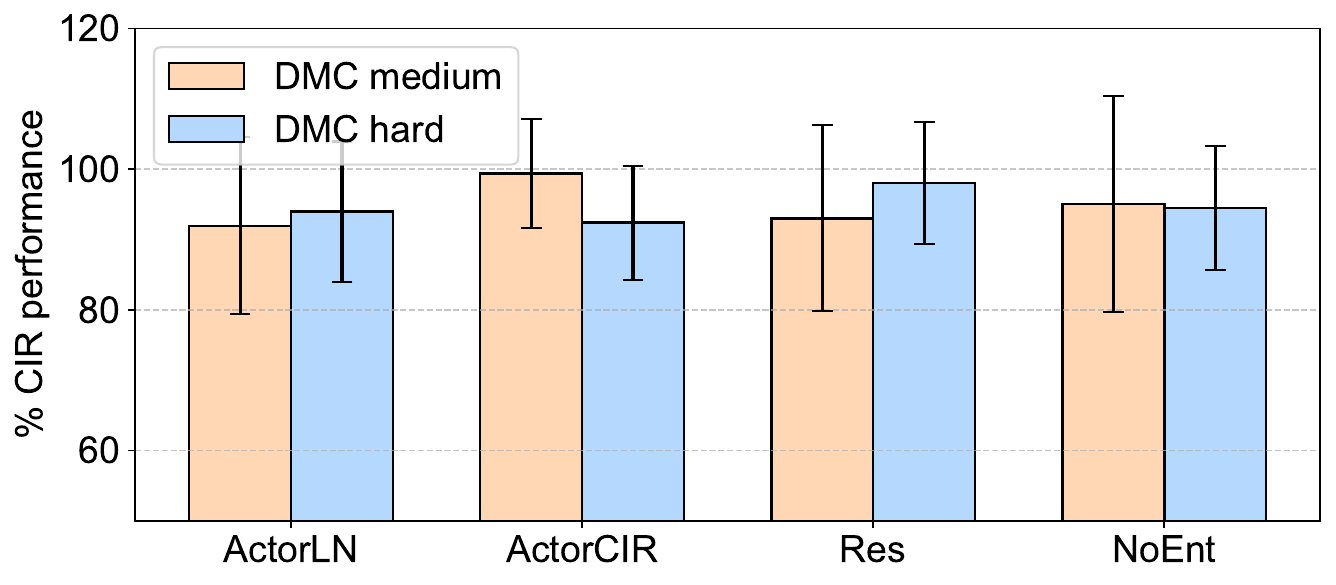}
    \caption{\textbf{Extended ablation study.} We report \% CIR performance of each variant and aggregate their performance across all tasks. The results are averaged across five seeds.}
    \label{fig:mainablationcontinued}
\end{figure}

\textbf{On the actor network architecture.} We find that both \textbf{ActorLN} and \textbf{ActorCIR} underperform CIR, indicating that either adding layer normalization to the actor network or replacing the plain MLP network with the critics' network architecture in CIR is not effective. A similar phenomenon is observed in BRO \citep{nauman2024bigger}. The results undoubtedly support our design choice in Section \ref{sec:skipconnection}.

\textbf{On the critic network architecture.} We find that \textbf{Res} achieves similar performance as CIR on DMC hard tasks but exhibits inferior performance on medium tasks. This verifies our design choice on the U-shape network.

\textbf{On algorithmic components.} The results show that excluding the entropy term in the target value has a minor influence on the performance of CIR. It is recommended to keep the entropy term in the target value.

We consider the following variants of CIR to further verify our design choices, (i) \textbf{NoInputLN}, where there is no layer normalization module at the initial layer, i.e., Equation \ref{eq:cir} becomes ${\bf{z}}_t = \tanh({\rm{AvgRNorm}}({\rm{Linear}}({\bf{o}}_t)))$; (ii) \textbf{NoAvgRNorm}, which removes the AvgRNorm module in CIR, i.e., Equation \ref{eq:cir} becomes ${\bf{z}}_t = \tanh({\rm{LayerNorm}}({\rm{Linear}}({\bf{o}}_t)))$; (iii) \textbf{NoInputNorm}, where we remove both AvgRNorm and layer normalization in the initial layer, i.e., Equation \ref{eq:cir} becomes ${\bf{z}}_t = \tanh({\rm{Linear}}({\bf{o}}_t))$; (iv) \textbf{MaxRNorm}, where we replace the AvgRNorm module in the initial layer with the maximum representation normalization, ${\rm{MaxRNorm}}({\bf{x}}) = \dfrac{\bf{x}}{\rm{Max}({\bf{\tilde{x}}})}, {\bf{\tilde{x}}} = |{\bf{x}}|$, where $\rm Max(\cdot)$ is the maximum operator; (v) \textbf{AllAvgRNorm}, which adds the AvgRNorm module after the layer normalization on all layers; (vi) \textbf{Orthoini}, which uses orthogonal initialization for the critic network; (vii) \textbf{WD}, which adopts weight decay for the optimizer of the critic network; (viii) \textbf{AvgQ}, where we use the average Q learning for calculating the target value ($\lambda=0.5$ in Equation \ref{eq:convexq}).

To comprehensively compare the performance of CIR against these variants, we run experiments on 9 DMC easy \& medium tasks, and 3 DMC hard tasks. This amounts to a total of 12 tasks. The full list of the selected tasks gives: \texttt{acrobot swingup, cheetah run, finger spin, finger turn easy, finger turn hard, fish swim, hopper hop, hopper stand, walker run, humanoid run, humanoid stand, humanoid walk}.

Following the main text, we also report \% CIR performance of these variants at the final environment step. The empirical results can be found in Figure \ref{fig:appendixablation}. 

It turns out that both layer normalization and the AvgRNorm modules are critical to CIR, especially on complex tasks like humanoid. Adopting MaxRNorm or orthogonal initialization leads to a similar performance as the vanilla CIR on DMC easy \& medium tasks, but slightly inferior performance on hard tasks. \textbf{WD} and \textbf{AvgQ}, however, can impede the performance improvement of the agent. The above evidence clearly verifies our design choices in the CIR framework.

\begin{figure}
    \centering
    \includegraphics[width=0.98\linewidth]{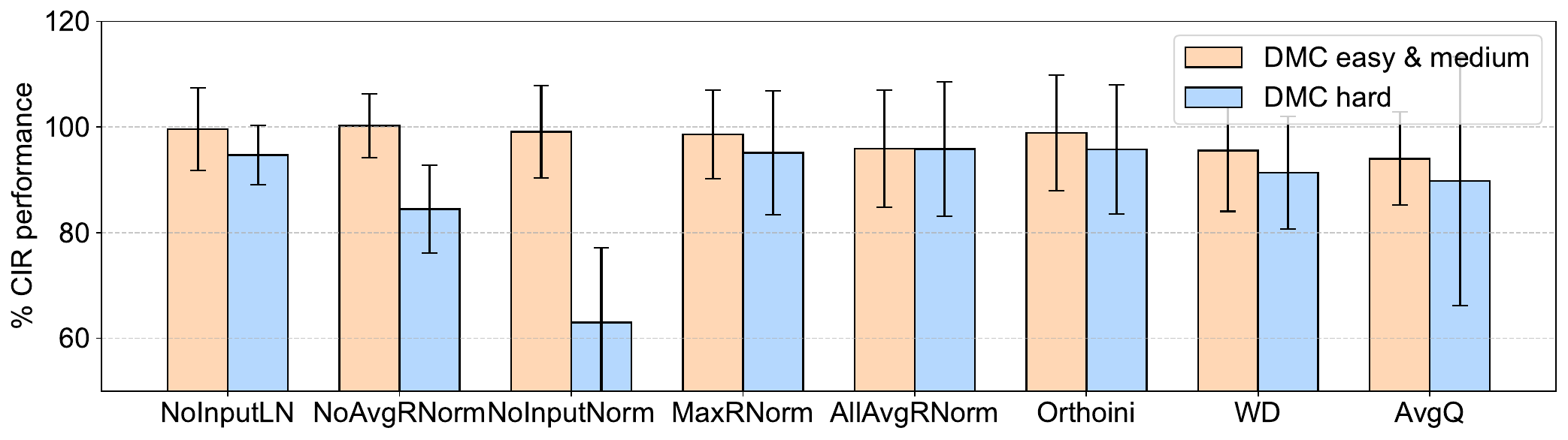}
    \caption{\textbf{Extended ablation study.} Following the main text, we report \% CIR performance of each variant and aggregate their performance across all tasks. We use five random seeds for each variant.}
    \label{fig:appendixablation}
\end{figure}

Furthermore, we are interested in investigating how sensitive CIR is to the choices of the AvgRNorm parameter $c$. In CIR, we set $c=0.1$ by default. Intuitively, using a larger $c$ is not preferred both theoretically and empirically, since our theoretical analysis often requires a small $c$, and using a large $c$ can make the resulting representation element lie in the gradient vanishing region of the \texttt{Tanh} function, and harm the performance of the agent eventually. We then conduct experiments on DMC tasks with varying $c$, $c\in\{0.01,0.1,0.2,0.5,1.0\}$.

To be specific, we run CIR with different choices of $c$ on the following tasks: \texttt{acrobot swingup, cheetah run, fish swim, hopper hop, hopper stand, walker run, dog run, dog stand, dog walk, humanoid run, humanoid stand, humanoid walk}. These tasks include 6 DMC easy \& medium tasks and 6 DMC hard tasks, which we believe can reveal how sensitive CIR is to the choices of $c$ under varying task complexities. We summarize the empirical results in Figure \ref{fig:appendixparameter} where we report the average episode return in conjunction with the aggregated standard deviations (averaged across all environments). The results on DMC easy \& medium tasks show that CIR is comparatively insensitive to $c$ on those tasks, while the results in DMC hard tasks show that using a large $c$ can incur a significant performance drop, verifying our claims above. We then use $c=0.1$ by default across all tasks.

\begin{figure}[!ht]
    \centering
    \includegraphics[width=0.98\linewidth]{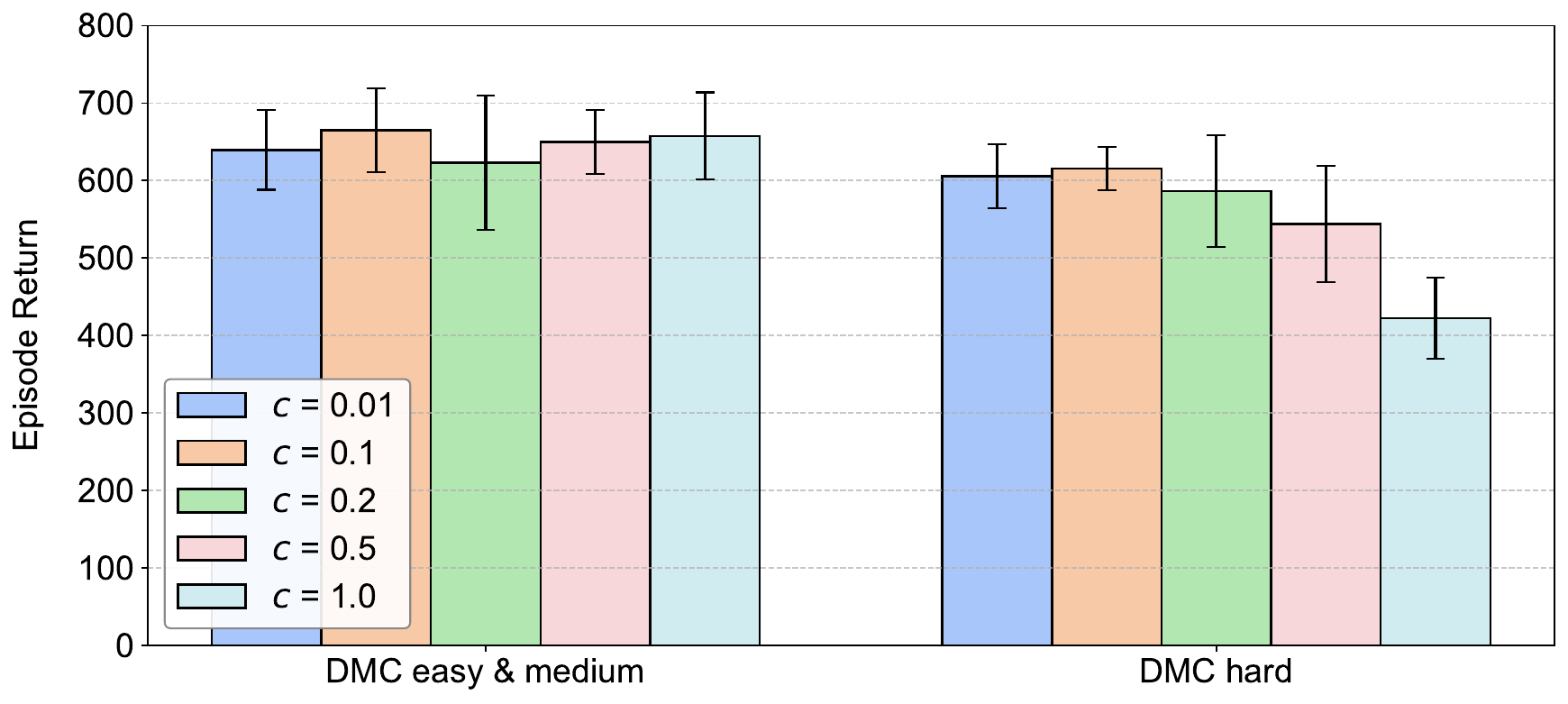}
    \caption{\textbf{Parameter study on AvgRNorm.} We conduct experiments on numerous DMC tasks to investigate the influence of the introduced hyperparameter $c$. We report the aggregated average performance along with the aggregated standard deviations. We use five random seeds.}
    \label{fig:appendixparameter}
\end{figure}

\subsection{More Scaling Results}

In this part, we first investigate the performance of CIR under distinct critic network depth by using fewer or more up-sampling layers and down-sampling layers in the U-shape network. Denote the number of up-sampling layers as $L$, we compare the performance of CIR under $L\in\{1,2,3\}$, where $L=1$ indicates that there is only 1 up-sampling layer and 1 down-sampling layer, while $L=3$ means that there are 3 up-sampling and down-sampling layers, respectively (CIR adopts $L=2$ by default). We use 6 DMC easy \& medium tasks, and 7 DMC hard tasks for experiments. The complete task list includes: \texttt{acrobot swingup, cheetah run, fish swim, hopper hop, hopper stand, walker run, dog run, dog trot, dog stand, dog walk, humanoid run, humanoid stand, humanoid walk}.

We report the aggregated average performance along with the aggregated standard deviations of CIR under different equivalent environment steps in Figure \ref{fig:appendixscalingresults} (left). The results show that using fewer up-sampling layers can incur a performance drop while increasing the number of up-sampling layers from 2 to 3 almost brings no performance improvement. We hence use $L=2$ by default.

\begin{figure}
    \centering
    \includegraphics[width=0.49\linewidth]{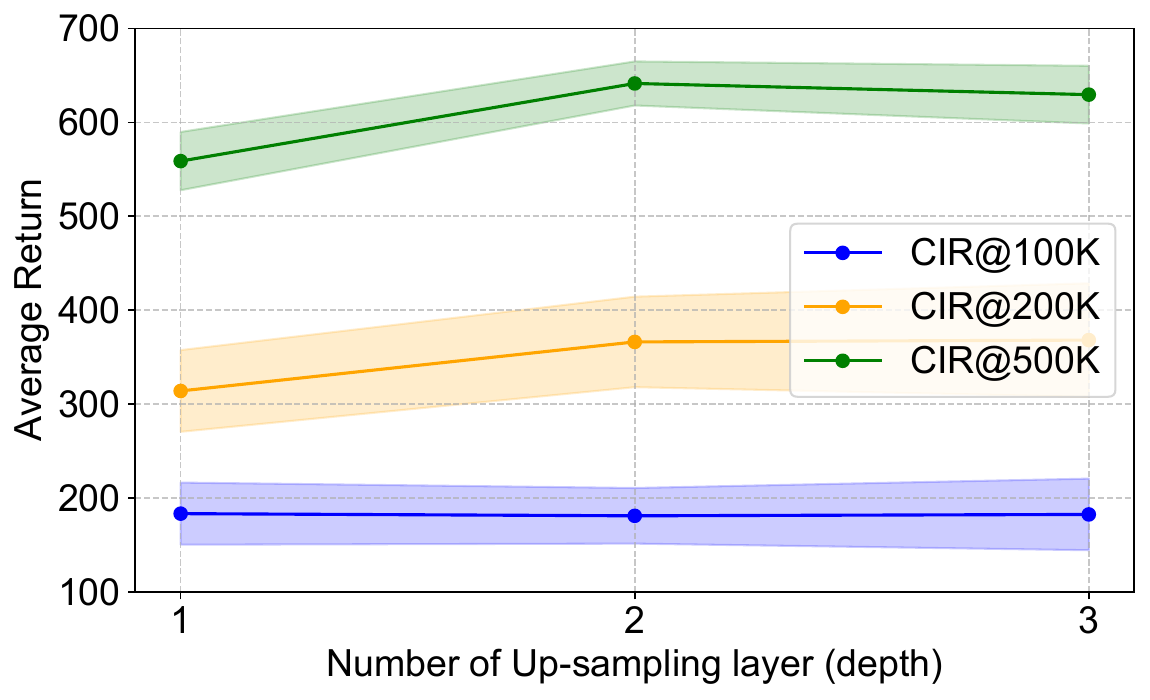}
    \includegraphics[width=0.49\linewidth]{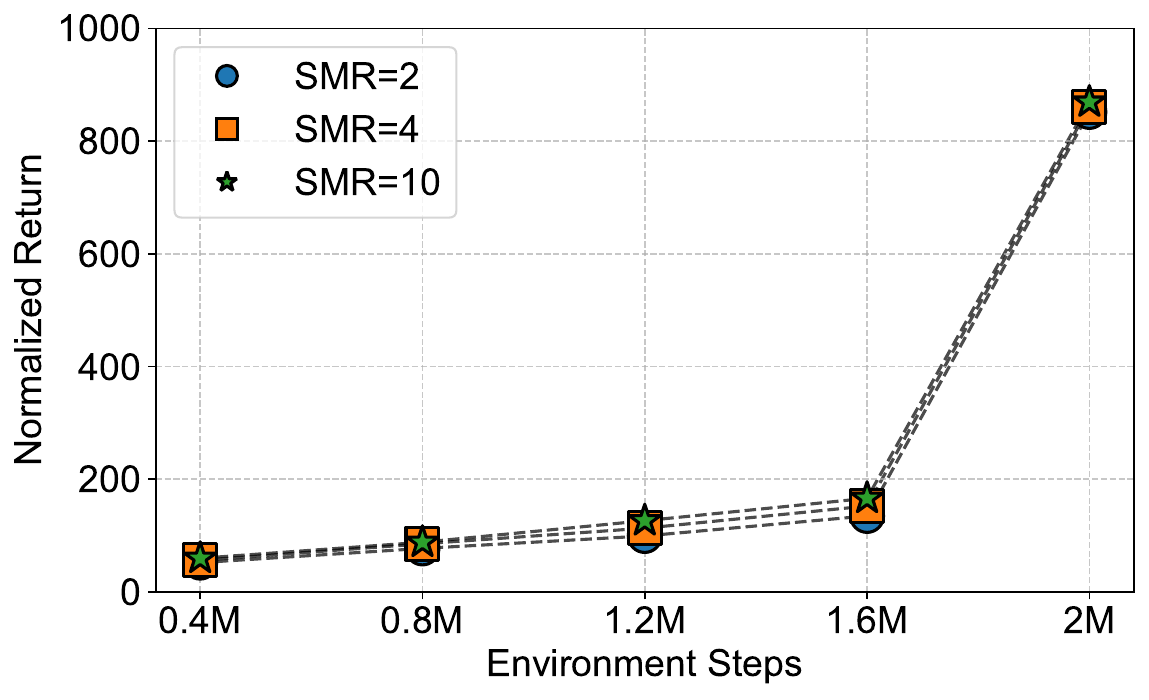}
    \caption{\textbf{Extended scaling results of CIR.} We include the scaling results on network depth (\textbf{left}) and the SMR ratio (\textbf{right}). We report the aggregated average return results for DMC tasks and aggregated normalized return results for HumanoidBench tasks. All results are obtained by averaging across five different seeds and the shaded region captures the standard deviation.}
    \label{fig:appendixscalingresults}
\end{figure}

We also include SMR ratio scaling experiments on some HumanoidBench tasks, \texttt{h1-reach-v0, h1-run-v0, h1-walk-v0, h1-slide-v0}. We run CIR on these tasks with the SMR ratio $M\in\{2,4,10\}$ for 2M environment steps. The aggregated normalized return results in Figure \ref{fig:appendixscalingresults} (right) show that CIR is somewhat insensitive to $M$ on the selected HumanoidBench tasks, despite the fact that using a larger SMR ratio $M$ can still bring performance improvements. Notably, CIR scales the replay ratio without any need for parameter resetting tricks.

\subsection{Numerical Results for Ablation Studies}

In this part, we list all missing detailed numerical results in our ablation study for a better understanding of the learning dynamics of all CIR variants. We present in Table \ref{apptab:numericalablationmain} the detailed results of CIR variants on 7 DMC medium tasks and 7 DMC hard tasks, which correspond to the results presented in Figure \ref{fig:ablation} (left). In Table \ref{apptab:numericalablationmainappendix}, we show the numerical results of the ablation study in Figure \ref{fig:mainablationcontinued}. In Table \ref{apptab:numericalablationappendix}, we include the numerical results that correspond to Figure \ref{fig:appendixablation} in Appendix \ref{appsec:moreablation}.

\begin{table}[htb]
  \caption{\textbf{Numerical results of CIR and its variants on 7 DMC medium tasks and 7 DMC hard tasks in Figure \ref{fig:ablation}.} We report the average return and the standard deviations.}
  \label{apptab:numericalablationmain}
  \centering
  \setlength\tabcolsep{4pt}
  \begin{tabular}{l|cccccccccc}
    \toprule
\textbf{Task} & \textbf{NoTanh} & \textbf{NoLN} & \textbf{CDQ} & \textbf{UTD} & \textbf{CIR} \\ 
\midrule
acrobot-swingup & 461.7$\pm$49.8 & 458.1$\pm$81.9 &  9.1$\pm$7.3 & 494.4$\pm$22.1 & 443.5$\pm$79.0 \\
finger-spin &   879.4$\pm$119.4 & 828.4$\pm$201.6 &  924.0$\pm$36.9 & 967.9$\pm$30.4 & 940.0$\pm$35.6  \\
finger-turn-easy & 910.0$\pm$60.3 & 970.1$\pm$6.2 &  897.7$\pm$80.0 & 880.9$\pm$90.3 & 892.5$\pm$88.9  \\
finger-turn-hard & 914.6$\pm$54.3 & 943.6$\pm$42.8 &  809.5$\pm$141.5 & 895.7$\pm$70.2 & 888.4$\pm$74.3  \\
hopper-hop & 171.0$\pm$95.6 & 140.2$\pm$111.3 & 35.1$\pm$30.8 & 250.1$\pm$54.7 &  280.1$\pm$87.9 \\
hopper-stand & 749.6$\pm$208.1 & 709.2$\pm$257.0 &  448.4$\pm$329.7 & 760.4$\pm$158.8 & 822.0$\pm$211.1  \\
walker-run & 821.0$\pm$7.4 & 798.8$\pm$16.6 &  792.8$\pm$19.9 & 795.2$\pm$5.5 & 810.0$\pm$14.0 \\
dog-run & 259.0$\pm$26.3 & 18.1$\pm$17.9 &  386.6$\pm$48.6 &   299.4$\pm$33.3 & 326.8$\pm$30.6 \\
dog-stand & 796.0$\pm$33.7 & 956.0$\pm$4.8 &  919.4$\pm$44.1 & 855.4$\pm$40.3 & 937.3$\pm$23.8  \\
dog-trot & 392.6$\pm$68.9 & 219.9$\pm$44.2 &  687.5$\pm$124.2 & 419.4$\pm$133.2 & 685.1$\pm$100.0  \\
dog-walk & 645.5$\pm$52.8 & 618.5$\pm$97.5 &  909.7$\pm$22.3 &  763.0$\pm$188.9 & 880.5$\pm$38.7 \\
humanoid-run & 175.7$\pm$19.9 & 129.7$\pm$14.8 &  1.4$\pm$0.6 & 162.7$\pm$31.5 &  174.2$\pm$22.4 \\
humanoid-stand & 838.4$\pm$37.9 & 761.8$\pm$103.0 &  583.7$\pm$337.1 & 785.9$\pm$84.8 &  846.3$\pm$35.4 \\
humanoid-walk & 618.0$\pm$64.4 & 536.7$\pm$22.8 &  550.2$\pm$53.0 & 520.3$\pm$39.3 &  528.5$\pm$18.2  \\
\bottomrule
\end{tabular}
\end{table}

\begin{table}[htb]
  \caption{\textbf{Numerical results of CIR and its variants on 7 DMC medium tasks and 7 DMC hard tasks in Figure \ref{fig:mainablationcontinued}.} We report the average return and the standard deviations.}
  \label{apptab:numericalablationmainappendix}
  \centering
  \setlength\tabcolsep{4pt}
  \begin{tabular}{l|cccccccccc}
    \toprule
\textbf{Task} & \textbf{ActorLN} & \textbf{ActorCIR} & \textbf{Res} & \textbf{NoEnt} & \textbf{CIR} \\ 
\midrule
acrobot-swingup & 313.4$\pm$115.6 & 459.7$\pm$20.4 & 423.8$\pm$110.3 & 512.2$\pm$117.4 & 443.5$\pm$79.0 \\
finger-spin &  948.7$\pm$22.3 & 984.6$\pm$4.5 &  914.1$\pm$122.5 & 925.7$\pm$97.2 & 940.0$\pm$35.6  \\
finger-turn-easy & 902.8$\pm$82.7 & 899.4$\pm$100.5 & 929.6$\pm$48.7 & 816.1$\pm$84.4 & 892.5$\pm$88.9  \\
finger-turn-hard & 870.0$\pm$55.2 & 793.3$\pm$137.9  & 851.1$\pm$79.1 & 885.8$\pm$63.9 & 888.4$\pm$74.3  \\
hopper-hop & 279.9$\pm$113.8 & 189.8$\pm$105.0 &  62.2$\pm$62.5 & 258.2$\pm$131.7 &  280.1$\pm$87.9 \\
hopper-stand & 538.8$\pm$241.9 & 910.3$\pm$17.5 & 765.5$\pm$232.1 & 612.2$\pm$278.4 & 822.0$\pm$211.1  \\
walker-run & 813.9$\pm$7.4 & 808.1$\pm$10.0 & 774.8$\pm$16.3 & 813.0$\pm$6.3 & 810.0$\pm$14.0 \\
dog-run & 346.1$\pm$39.9 & 260.3$\pm$39.3 & 325.9$\pm$74.7 & 305.9$\pm$31.2 & 326.8$\pm$30.6 \\
dog-stand & 946.6$\pm$37.8 & 921.5$\pm$54.3 & 933.8$\pm$48.5 & 961.6$\pm$10.7 & 937.3$\pm$23.8  \\
dog-trot & 603.4$\pm$128.6 & 486.9$\pm$83.4 & 666.7$\pm$56.7 & 561.2$\pm$191.4 & 685.1$\pm$100.0  \\
dog-walk & 884.4$\pm$22.5 & 909.9$\pm$23.6 & 894.7$\pm$28.5 & 882.5$\pm$37.7 & 880.5$\pm$38.7 \\
humanoid-run & 148.2$\pm$11.3 & 177.7$\pm$27.9 &  162.5$\pm$18.1 & 135.8$\pm$7.5 &  174.2$\pm$22.4 \\
humanoid-stand & 592.1$\pm$87.9 & 811.6$\pm$37.7 & 770.4$\pm$103.9 & 799.7$\pm$61.5 &  846.3$\pm$35.4 \\
humanoid-walk & 593.1$\pm$108.5 & 477.4$\pm$88.2 & 538.3$\pm$47.9 & 490.4$\pm$46.5 &  528.5$\pm$18.2  \\
\bottomrule
\end{tabular}
\end{table}

\begin{table}[htb]
  \caption{\textbf{Numerical results of CIR and its variants on 9 DMC medium tasks and 3 DMC hard tasks in Figure \ref{fig:appendixablation}.} We report the average return and the standard deviations.}
  \label{apptab:numericalablationappendix}
  \centering
  \scriptsize
  \setlength\tabcolsep{4pt}
  \begin{tabular}{l|cccccccccc}
    \toprule
\textbf{Task} & \textbf{NoInputLN} & \textbf{NoAvgRNorm} & \textbf{NoInputNorm} & \textbf{MaxRNorm} & \textbf{AllAvgRNorm} & \textbf{Orthoini} & \textbf{WD} & \textbf{AvgQ} & \textbf{CIR} \\ 
\midrule
acrobot-swingup & 528.8$\pm$44.6 & 475.2$\pm$47.4 & 475.3$\pm$68.0 & 473.0$\pm$70.4 & 473.7$\pm$94.9 & 437.4$\pm$55.8 & 410.4$\pm$74.6 & 475.9$\pm$20.6 & 443.5$\pm$79.0  \\
cheetah-run & 704.0$\pm$23.6 & 722.5$\pm$40.5 & 713.7$\pm$70.4 & 652.4$\pm$117.8 & 717.4$\pm$19.7 &  683.1$\pm$135.1 & 714.9$\pm$37.2 & 753.6$\pm$68.8 & 747.0$\pm$73.2 \\
finger-spin & 867.8$\pm$125.8 & 970.3$\pm$15.3 & 954.4$\pm$16.1 & 975.6$\pm$14.8 & 962.3$\pm$23.6 & 967.9$\pm$23.2 & 974.7$\pm$16.2 & 969.0$\pm$27.9 & 940.0$\pm$35.6 \\
finger-turn-easy & 898.0$\pm$44.4 & 944.9$\pm$36.8 & 916.4$\pm$42.6 & 890.5$\pm$78.1 & 895.8$\pm$45.8 & 873.9$\pm$98.6 & 896.1$\pm$82.3 & 892.7$\pm$37.7 & 892.5$\pm$88.9 \\
finger-turn-hard & 870.1$\pm$71.0 & 916.6$\pm$53.1 & 930.2$\pm$54.2 &  872.4$\pm$72.6 & 838.2$\pm$137.0 & 827.4$\pm$144.7 & 874.6$\pm$73.0 & 769.2$\pm$146.2 & 888.4$\pm$74.3 \\
fish-swim & 778.0$\pm$23.3 & 742.4$\pm$25.7 & 784.5$\pm$19.3 & 758.0$\pm$20.6 & 733.3$\pm$43.0 & 768.5$\pm$29.2 & 739.6$\pm$44.4 &  768.5$\pm$29.2 & 781.8$\pm$21.9 \\
hopper-hop & 249.9$\pm$125.6 & 224.8$\pm$76.3 & 258.5$\pm$70.0 & 202.4$\pm$86.7 & 183.7$\pm$35.3 & 274.2$\pm$152.1 & 259.9$\pm$93.3 &  109.8$\pm$62.7 & 280.1$\pm$87.9 \\
hopper-stand & 892.0$\pm$40.4 & 869.4$\pm$58.0 & 775.0$\pm$193.9 & 883.9$\pm$77.5 & 730.8$\pm$325.3 & 893.1$\pm$52.8 & 638.2$\pm$326.8 & 653.1$\pm$175.2 & 822.0$\pm$211.1  \\
walker-run & 789.8$\pm$14.6 & 755.7$\pm$44.9 & 739.0$\pm$44.1 & 805.6$\pm$17.9 & 801.0$\pm$7.3 & 805.7$\pm$28.2 & 803.8$\pm$13.9 & 817.8$\pm$13.4 &  810.0$\pm$14.0 \\
humanoid-run & 159.8$\pm$12.1 & 172.3$\pm$5.5 & 103.9$\pm$10.1 & 182.5$\pm$34.1 & 139.0$\pm$19.2 & 170.6$\pm$25.4 & 129.5$\pm$62.6 & 167.1$\pm$47.4 & 174.2$\pm$22.4 \\
humanoid-stand & 815.3$\pm$41.0 & 684.0$\pm$94.0 & 521.2$\pm$167.2 & 767.5$\pm$124.0 & 819.0$\pm$34.4 & 770.7$\pm$105.8 & 758.8$\pm$79.2 & 805.8$\pm$66.0 & 846.3$\pm$35.4  \\
humanoid-walk & 491.5$\pm$34.7 & 451.5$\pm$28.7 & 350.4$\pm$41.5 & 523.3$\pm$23.5 & 526.4$\pm$142.7 & 541.7$\pm$58.3 & 526.4$\pm$23.0 & 417.7$\pm$251.0 & 528.5$\pm$18.2  \\
\bottomrule
\end{tabular}
\end{table}

\newpage
\section{Learning Curves}

The full learning curves of CIR on DMC easy \& medium tasks can be found in Figure \ref{fig:easymediumcurve}, the learning curves on DMC hard tasks are shown in Figure \ref{fig:hardcurve}, the learning curves on HumanoidBench tasks are available in Figure \ref{fig:humanoidcurve}, and the learning curves on ODRL tasks are depicted in Figure \ref{fig:odrlcurve}. We also include learning curves of other baseline methods on all tasks.

\begin{figure}[!ht]
    \centering
    \includegraphics[width=0.98\linewidth]{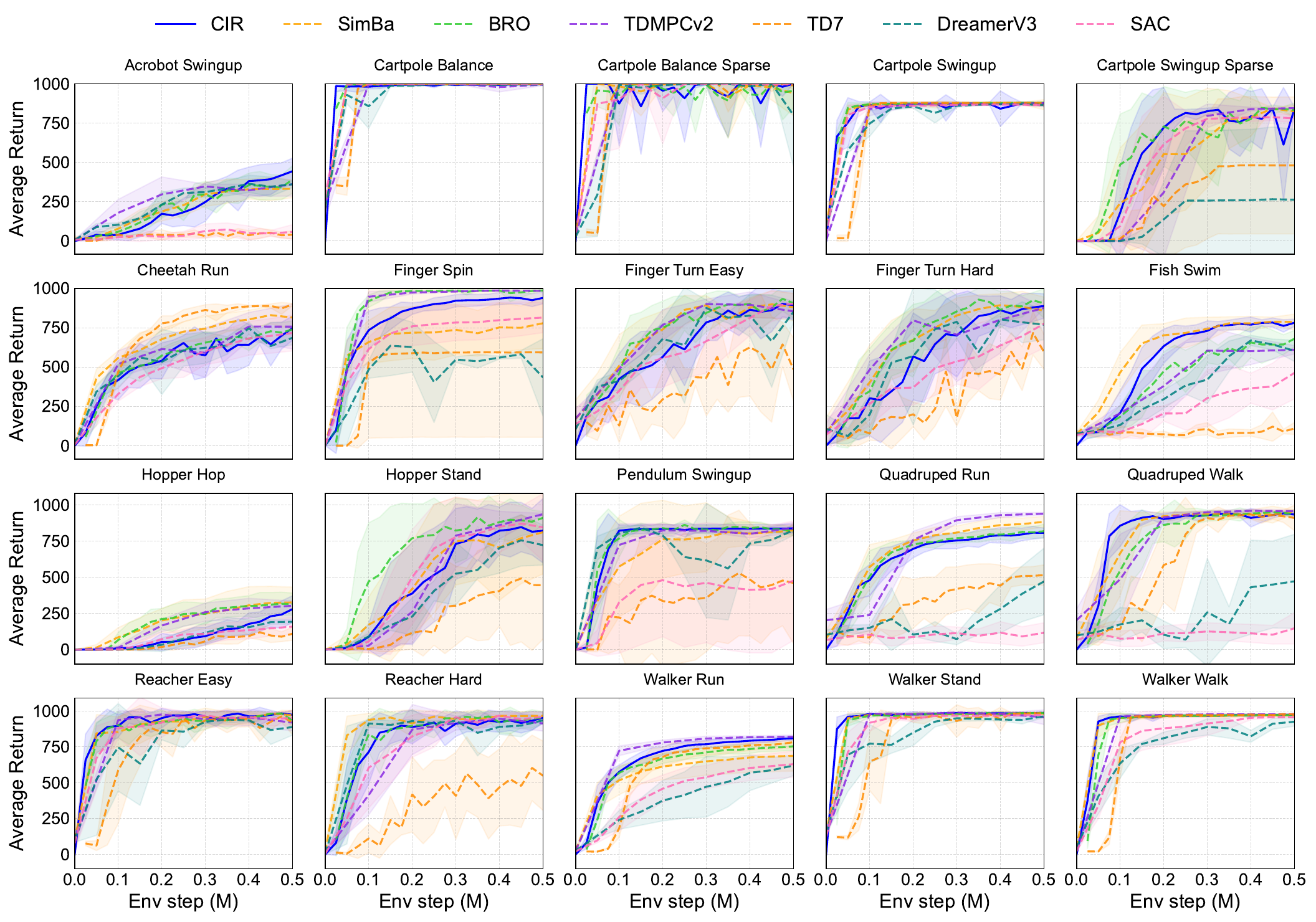}
    \caption{\textbf{Learning curves of CIR against baseline methods on DMC easy \& medium tasks.} The shaded region captures the standard deviation. We run CIR for 10 random seeds.}
    \label{fig:easymediumcurve}
\end{figure}

\begin{figure}[!ht]
    \centering
    \includegraphics[width=0.98\linewidth]{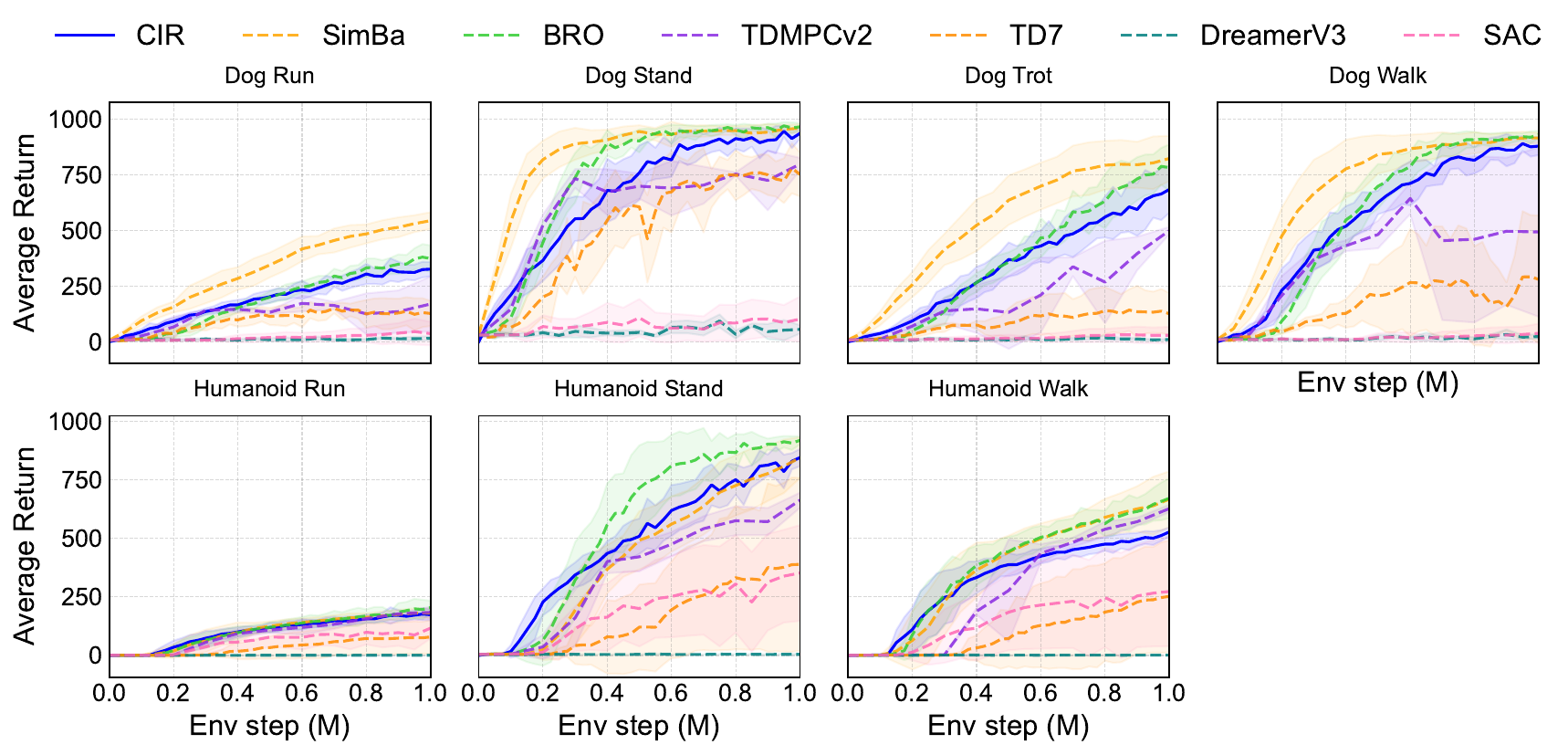}
    \caption{\textbf{Learning curves of CIR against baseline methods on DMC hard tasks.} The shaded region denotes the standard deviation. The results of CIR are averaged across 10 different random seeds.}
    \label{fig:hardcurve}
\end{figure}

\begin{figure}
    \centering
    \includegraphics[width=0.98\linewidth]{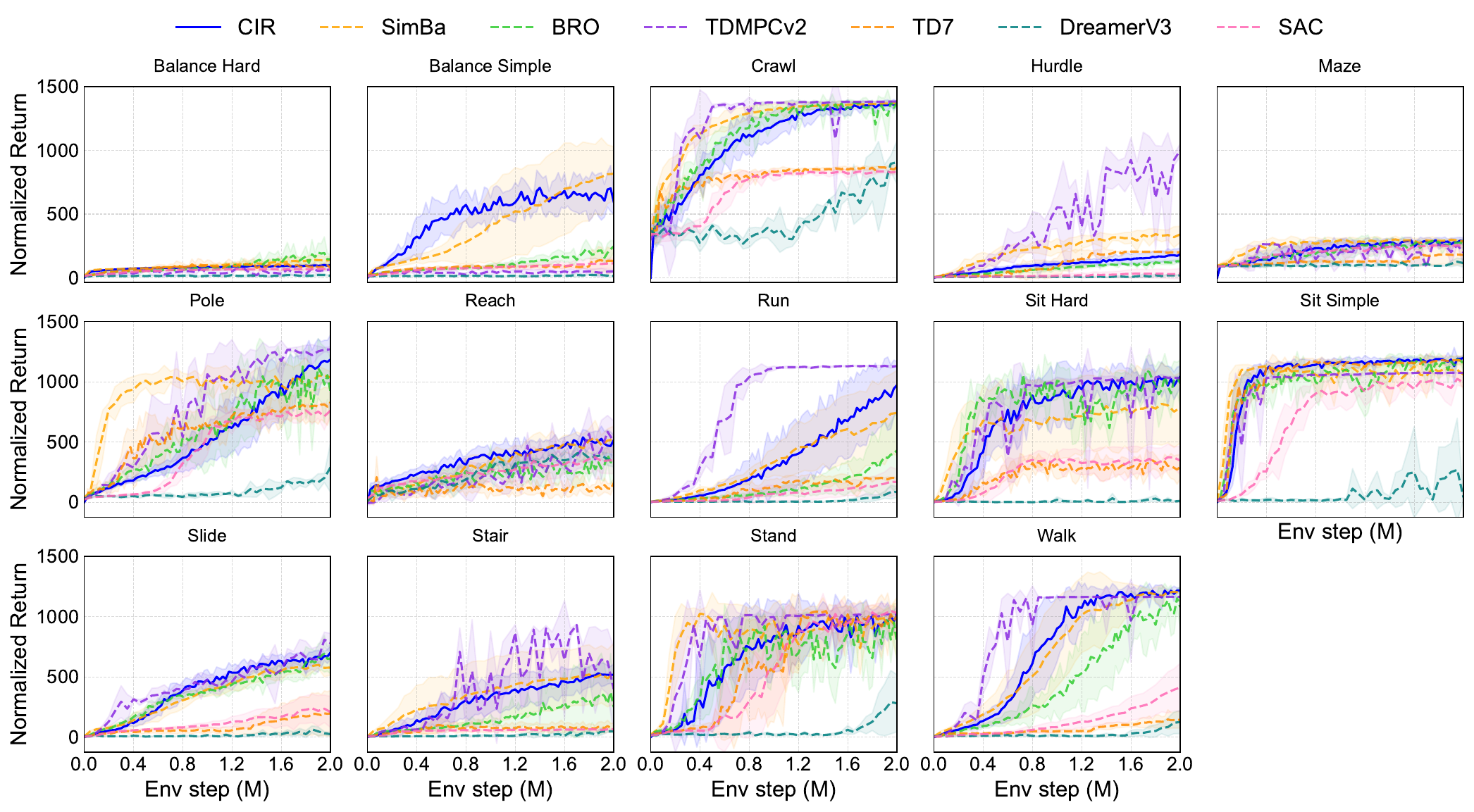}
    \caption{\textbf{Learning curves of CIR against other baseline methods on HumanoidBench tasks.} The shaded region represents the standard deviations. The results of CIR are acquired by averaging across 10 random seeds.}
    \label{fig:humanoidcurve}
\end{figure}

\begin{figure}
    \centering
    \includegraphics[width=0.98\linewidth]{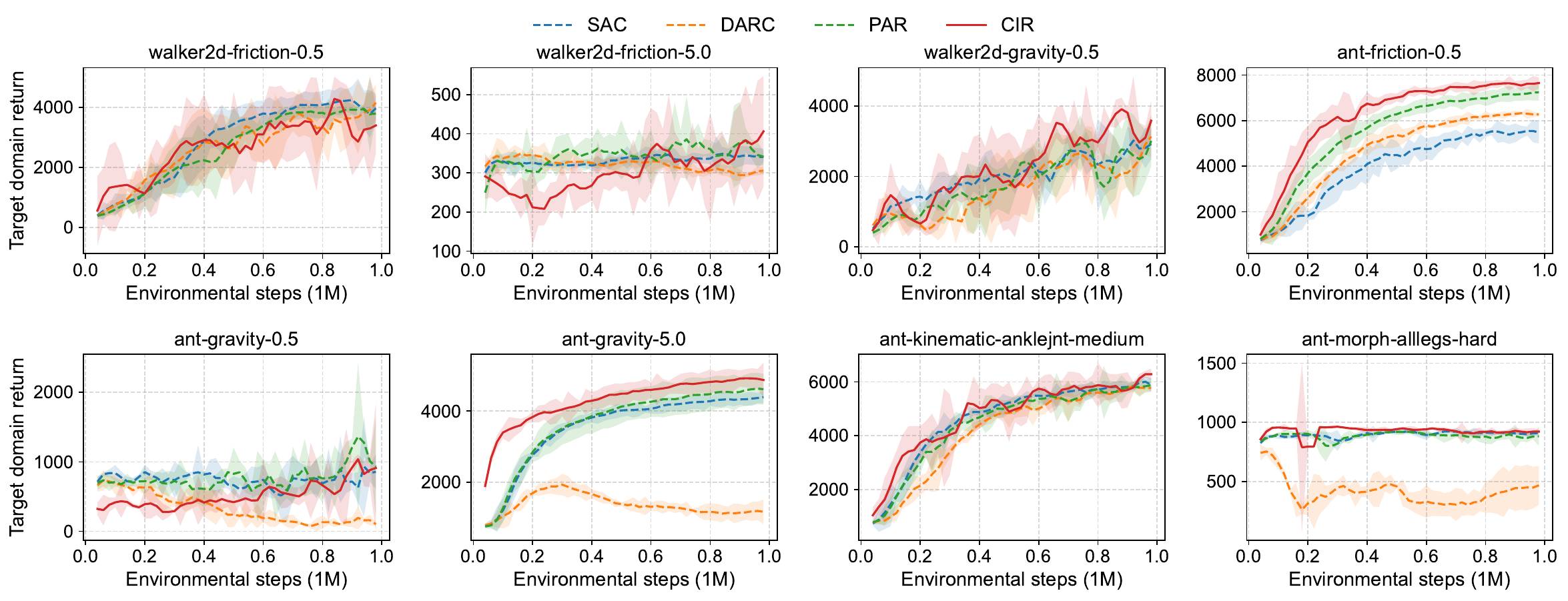}
    \caption{\textbf{Learning curves of CIR against other baseline methods on ODRL tasks.} The shaded region represents the standard deviations.}
    \label{fig:odrlcurve}
\end{figure}

\section{Full Results}
\label{appsec:fullresults}
In this part, we include the detailed numerical results of CIR against baseline methods on each DMC task and HumanoidBench task. We report the average episode return on DMC easy \& medium/hard tasks in Table \ref{tab:dmceasymediumdetailedresults} and Table \ref{tab:dmcharddetailedresults}, respectively, and normalized return results on HumanoidBench tasks in Table \ref{tab:humanoidbenchdetailedresults}. Apart from the average return across all evaluated tasks, we also report statistical metrics like IQM (interquartile mean), median, and OG (optimality gap) \citep{agarwal2021deep}. The results show that CIR can achieve competitive results compared to strong baselines like SimBa, TD-MPC2, and can even outperform them on some tasks.

\begin{table}[htb]
  \caption{\textbf{Detailed results for DeepMind Control Suite Easy \& Medium Tasks}. Results for CIR are averaged over 10 seeds and other results are taken directly from the SimBa paper \citep{lee2025simba}.}
  \label{tab:dmceasymediumdetailedresults}
  \centering
  \setlength\tabcolsep{4pt}
  \begin{tabular}{l|cccccccccc}
    \toprule
 Task & CIR & SimBa & BRO & TD7 & SAC & TD-MPC2 & DreamerV3 \\
\midrule
Acrobot Swingup & 443.53 & 331.57 & 390.78 & 39.83 & 57.56 & 361.07 & 360.46 \\
Cartpole Balance & 998.05 & 999.05 & 998.66 & 998.79 & 998.79 & 993.93 & 994.95 \\
Cartpole Balance Sparse & 999.60 & 940.53 & 954.21 & 988.58 & 1000.00 & 1000.00 & 800.25 \\
Cartpole Swingup & 875.22 & 866.53 & 878.69 & 878.09 & 863.24 & 876.07 & 863.90 \\
Cartpole Swingup Sparse & 839.56 & 823.97 & 833.18 & 480.74 & 779.99 & 844.77 & 262.69 \\
Cheetah Run & 747.00 & 814.97 & 739.67 & 897.76 & 716.43 & 757.60 & 686.25 \\
Finger Spin & 940.05 & 778.83 & 987.45 & 592.74 & 814.69 & 984.63 & 434.06 \\
Finger Turn Easy & 892.50 & 881.33 & 905.85 & 485.66 & 903.07 & 854.67 & 851.11 \\
Finger Turn Hard & 888.40 & 860.22 & 905.30 & 596.32 & 775.21 & 876.27 & 769.86 \\
Fish Swim & 781.79 & 786.73 & 680.34 & 108.84 & 462.67 & 610.23 & 603.34 \\
Hopper Hop & 280.10 & 326.69 & 315.04 & 110.27 & 159.43 & 303.27 & 192.70 \\
Hopper Stand & 821.98 & 811.75 & 910.88 & 445.50 & 845.89 & 936.47 & 722.42 \\
Pendulum Swingup & 837.95 & 824.53 & 816.20 & 461.40 & 476.58 & 841.70 & 825.17 \\
Quadruped Run & 807.00 & 883.68 & 818.62 & 515.13 & 116.91 & 939.63 & 471.25 \\
Quadruped Walk & 933.80 & 952.96 & 936.06 & 910.55 & 147.83 & 957.17 & 472.31 \\
Reacher Easy & 974.06 & 972.23 & 933.77 & 920.38 & 951.80 & 919.43 & 888.36 \\
Reacher Hard & 954.21 & 965.96 & 956.52 & 549.50 & 959.59 & 913.73 & 935.25 \\
Walker Run & 810.04 & 687.16 & 754.43 & 782.32 & 629.44 & 820.40 & 620.13 \\
Walker Stand & 986.66 & 983.02 & 986.58 & 984.63 & 972.59 & 957.17 & 963.28 \\
Walker Walk & 973.86 & 970.73 & 973.41 & 976.58 & 956.67 & 978.70 & 925.46 \\ \midrule
IQM & 887.86 & 885.70 & 888.02 & 740.19 & 799.75 & 895.47 & 748.58 \\
Median & 840.04 & 816.78 & 836.62 & 623.18 & 676.59 & 836.93 & 684.18 \\
Mean & 839.27 & 823.12 & 833.78 & 636.18 & 679.42 & 836.34 & 682.16 \\
OG & 0.1607 & 0.1769 & 0.1662 & 0.3638 & 0.3206 & 0.1637 & 0.3178 \\
\bottomrule
\end{tabular}
\end{table}

\begin{table}[htb]
  \caption{\textbf{Detailed results for DeepMind Control Suite Hard Tasks}. Results for CIR are averaged over 10 seeds and other results are taken directly from the SimBa paper \citep{lee2025simba}.}
  \label{tab:dmcharddetailedresults}
  \centering
  \setlength\tabcolsep{4pt}
  \begin{tabular}{l|cccccccccc}
    \toprule
 Task & CIR & SimBa & BRO & TD7 & SAC & TD-MPC2 & DreamerV3 \\
\midrule
Dog Run & 326.80 & 544.86 & 374.63 & 127.48 & 36.86 & 169.87 & 15.72 \\
Dog Stand & 937.34 & 960.38 & 966.97 & 753.23 & 102.04 & 798.93 & 55.87 \\
Dog Trot & 685.11 & 824.69 & 783.12 & 126.00 & 29.36 & 500.03 & 10.19 \\
Dog Walk & 880.48 & 916.80 & 931.46 & 280.87 & 38.14 & 493.93 & 23.36 \\
Humanoid Run & 174.25 & 181.57 & 204.96 & 79.32 & 116.97 & 184.57 & 0.91 \\
Humanoid Stand & 846.25 & 846.11 & 920.11 & 389.80 & 352.72 & 663.73 & 5.12 \\
Humanoid Walk & 528.45 & 668.48 & 672.55 & 252.72 & 273.67 & 628.23 & 1.33 \\
\midrule
IQM & 669.59 & 773.28 & 771.50 & 216.04 & 69.03 & 527.11 & 9.63 \\
Median & 625.85 & 706.39 & 694.20 & 272.62 & 159.36 & 528.26 & 17.13 \\
Mean & 625.52 & 706.13 & 693.40 & 287.06 & 135.68 & 491.33 & 16.07 \\
OG & 0.3745 & 0.2939 & 0.3066 & 0.7129 & 0.8643 & 0.5087 & 0.9839 \\
\bottomrule
\end{tabular}
\end{table}

\begin{table}[htb]
  \caption{\textbf{Detailed results for HumanoidBench}. Results for CIR are averaged over 10 seeds and other results are taken directly from the SimBa paper \citep{lee2025simba}.}
  \label{tab:humanoidbenchdetailedresults}
  \centering
  \setlength\tabcolsep{4pt}
  \begin{tabular}{l|cccccccccc}
    \toprule
    Task & CIR & SimBa & BRO & TD7 & SAC & TD-MPC2 & DreamerV3 \\
    \midrule
    Balance Hard & 98.51 & 137.20 & 145.95 & 79.90 & 69.02 & 64.56 & 16.07 \\
    Balance Simple & 597.01  & 816.38 & 246.57 & 132.82 & 113.38 & 50.69 & 14.09 \\
    Crawl & 1364.81 & 1370.51 & 1373.83 & 868.63 & 830.56 & 1384.40 & 906.66 \\
    Hurdle & 183.83 & 340.60 & 128.60 & 200.28 & 31.89 & 1000.12 & 18.78 \\
    Maze & 286.64 & 283.58 & 259.03 & 179.23 & 254.38 & 198.65 & 114.90 \\
    Pole & 1184.66 & 1036.70 & 915.89 & 830.72 & 760.78 & 1269.57 & 289.18 \\
    Reach & 515.11 & 523.10 & 317.99 & 159.37 & 347.92 & 505.61 & 341.62 \\
    Run & 971.70 & 741.16 & 429.74 & 196.85 & 168.25 & 1130.94 & 85.93 \\
    Sit Hard & 1042.88 & 783.95 & 989.07 & 293.96 & 345.65 & 1027.47 & 9.95 \\
    Sit Simple & 1197.34 & 1059.73 & 1151.84 & 1183.26 & 994.58 & 1074.96 & 40.53 \\
    Slide & 701.69 & 577.87 & 653.17 & 197.07 & 208.46 & 780.82 & 24.43 \\
    Stair & 517.94 & 527.49 & 249.86 & 77.19 & 65.53 & 398.21 & 49.04 \\
    Stand & 1013.30 & 906.94 & 780.52 & 1005.54 & 1029.78 & 1020.59 & 280.99 \\
    Walk & 1218.76 & 1202.91 & 1080.63 & 143.86 & 412.21 & 1165.42 & 125.59 \\ \midrule
    IQM & 809.58 & 747.43 & 558.03 & 256.77 & 311.46 & 885.67 & 72.30 \\
    Median & 779.27 & 733.09 & 632.93 & 385.41 & 399.93 & 796.18 & 160.49 \\
    Mean & 778.15 & 736.30 & 623.05 & 396.33 & 402.31 & 787.64 & 165.55 \\
    OG & 0.3047 & 0.3197 & 0.4345 & 0.6196 & 0.6016 & 0.2904 & 0.8352 \\
    \bottomrule
  \end{tabular}
\end{table}

\section{Compute Infrastructure}
\label{appsec:compute}
In Table \ref{tab:computing}, we list the compute infrastructure that we use to run all of the algorithms.

\begin{table}[htb]
\caption{\textbf{Compute infrastructure.}}
\label{tab:computing}
\vspace{2mm}
\centering
\begin{tabular}{c|c|c}
\toprule
\textbf{CPU}  & \textbf{GPU} & \textbf{Memory} \\
\midrule
AMD EPYC 7452  & RTX3090$\times$8 & 192GB \\
\bottomrule
\end{tabular}
\end{table}

\section{Limitations and Broader Impacts}
\label{appsec:limitationbroaderimpacts}

\textbf{Limitations.} The limitations of CIR lie in three aspects: (i) CIR relies on U-shape critic networks, making it less convenient to modify the network depth compared to residual blocks; (ii) the performance of CIR is inferior to baselines on DMC hard tasks. However, CIR provides a distinct way compared to prior methods to achieve strong sample efficiency on numerous tasks with fixed algorithmic configuration and mostly fixed hyperparameters; (iii) we only evaluate CIR on several state-based online continuous control RL tasks, while the applicability of CIR on offline RL tasks or pixel-based RL tasks remains to be explored.

\textbf{Broader Impacts.} In this paper, we introduce a novel method called CIR to enhance the sample efficiency of off-policy RL agents. CIR successfully improves the performance of the agent by constraining its initial representations. CIR can bring new insights to the community on developing stronger and more advanced off-policy RL algorithms, and demonstrates practical potential for accelerating progress in embodied AI systems through more sample-efficient training paradigms. This work adheres to ethical AI development standards, and we have not identified significant negative societal impacts requiring special emphasis in this context.

\end{document}